\definecolor{mydarkblue}{rgb}{0,0.08,0.45}
\crefname{nlem}{Lemma}{Lemmas}
\crefname{nprop}{Proposition}{Propositions}
\crefname{ncor}{Corollary}{Corollaries}
\crefname{exa}{Example}{Examples}
\crefname{theorem}{Thm.}{Thms.}
\crefname{proposition}{Prop.}{Props.}
\crefname{assumption}{Assump.}{Assumps.}
\crefname{equation}{}{}
\crefname{section}{Sec.}{Secs.}
\crefname{appendix}{App.}{Apps.}
\crefname{name}{}{} %
\newcommand{\vanilla}{\cref{vanilla}\xspace}
\newcommand{\enhanced}{\cref{enhanced}\xspace}
\newcommand{\SEL}{\cref{sel}\xspace}
\newcommand{\ACE}{\cref{ace}\xspace}
\titlespacing{\subsection}{0pt}{*1}{*0}
\newcommand{\eps}{\epsilon}
\newcommand{\sig}{\sigma}
\newcommand{\lam}{\lambda}
\def\balign#1\ealign{\begin{align}#1\end{align}}
\def\baligns#1\ealigns{\begin{align*}#1\end{align*}}
\def\balignat#1\ealign{\begin{alignat}#1\end{alignat}}
\def\balignats#1\ealigns{\begin{alignat*}#1\end{alignat*}}
\def\bitemize#1\eitemize{\begin{itemize}#1\end{itemize}}
\def\benumerate#1\eenumerate{\begin{enumerate}#1\end{enumerate}}
\newenvironment{talign*}
 {\let\displaystyle\textstyle\csname align*\endcsname}
 {\endalign}
\newenvironment{talign}
 {\let\displaystyle\textstyle\csname align\endcsname}
 {\endalign}
\def\balignst#1\ealignst{\begin{talign*}#1\end{talign*}}
\def\balignt#1\ealignt{\begin{talign}#1\end{talign}}
\newcommand{\notate}[1]{\textcolor{blue}{\textbf{[#1]}}}
\newcommand{\qtext}[1]{\quad\text{#1}\quad}
\let\originalleft\left
\let\originalright\right
\renewcommand{\left}{\mathopen{}\mathclose\bgroup\originalleft}
\renewcommand{\right}{\aftergroup\egroup\originalright}
\def\tinycitep*#1{{\tiny\citep*{#1}}}
\def\tinycitealt*#1{{\tiny\citealt*{#1}}}
\def\tinycite*#1{{\tiny\cite*{#1}}}
\def\smallcitep*#1{{\scriptsize\citep*{#1}}}
\def\smallcitealt*#1{{\scriptsize\citealt*{#1}}}
\def\smallcite*#1{{\scriptsize\cite*{#1}}}
\def\mbb#1{\mathbb{#1}}
\def\mc#1{\mathcal{#1}}
\def\mrm#1{\mathrm{#1}}
\def\reals{\mathbb{R}} %
\def\R{\mathbb{R}}
\def\<{\left\langle} %
\def\>{\right\rangle}
\def\defeq{\triangleq} %
\def\half{\frac{1}{2}}
\def\norm#1{\left\|{#1}\right\|} %
\newcommand{\twonorm}[1]{\norm{#1}_2} %
\newcommand{\opnorm}[1]{\norm{#1}_{\mathrm{op}}} %
\def\staticnorm#1{\|{#1}\|} %
\newcommand{\inner}[2]{\langle{#1},{#2}\rangle} %
\def\indic#1{\mbb{I}\left[{#1}\right]} %
\def\E{\mbb{E}} %
\def\P{\mbb{P}} %
\def\Var{\mrm{Var}} %
\def\Cov{\mrm{Cov}} %
\newcommand{\Unif}{\textnormal{Unif}}
\newcommand{\grad}{\nabla} %
\newcommand{\dist}{\sim}
\newcommand{\distind}{\overset{\textrm{\tiny\textrm{indep}}}{\dist}}
\providecommand{\argmin}{\mathop\mathrm{arg min}}
\providecommand{\diag}{\mathop\mathrm{diag}}
\newtheorem{theorem}{Theorem}
\newtheorem{lemma}[theorem]{Lemma}
\newtheorem{definition}{Definition}
\renewenvironment{proof}{\noindent\textbf{Proof}\hspace*{1em}}{\qed\\}
\newenvironment{proof-sketch}{\noindent\textbf{Proof Sketch}
  \hspace*{1em}}{\qed\bigskip\\}
\newenvironment{proof-idea}{\noindent\textbf{Proof Idea}
  \hspace*{1em}}{\qed\bigskip\\}
\newenvironment{proof-of-lemma}[1][{}]{\noindent\textbf{Proof of Lemma {#1}}
  \hspace*{1em}}{\qed\\}
\newenvironment{proof-of-theorem}[1][{}]{\noindent\textbf{Proof of Theorem {#1}}
  \hspace*{1em}}{\qed\\}
\newenvironment{proof-attempt}{\noindent\textbf{Proof Attempt}
  \hspace*{1em}}{\qed\bigskip\\}
\newtheorem{proposition}[theorem]{Proposition}
\newcommand{\xset}{\mathcal{X}}
\newcommand{\pset}{\mathcal{P}}
\newcommand{\fset}{\mathcal{F}}
\newcommand{\gset}{\mathcal{G}}
\newcommand{\loss}{L}
\newcommand{\seloss}{\ell_{\text{se}}} %
\newcommand{\celoss}{\ell_{\beta}} %
\newcommand{\hatp}{\hat{p}}
\newcommand{\hatpt}{\hat{p}^{(t)}}
\newcommand{\hatf}{\hat{f}}
\newcommand{\hatgamma}{\hat{\gamma}}
\newcommand{\hatgammat}{\hat{\gamma}^{(t)}}
\newcommand{\tildeq}{\tilde{q}}
\newcommand{\tildegamma}{\tilde{\gamma}}
\newcommand{\empnorm}[1]{\staticnorm{#1}_{n}}
\newcommand{\twotwonorm}[1]{\staticnorm{#1}_{2,2}}
  \newcommand\TD[1]{\textcolor{cyan}{[TD: #1]}}
  \renewcommand{\notate}[1]{\textcolor{blue}{\textbf{[#1]}}}
  \newcommand\TD[1]{}
  \renewcommand{\notate}[1]{}
\title{Knowledge Distillation as Semiparametric Inference}
\author{
Tri Dao$^1$, Govinda M. Kamath$^2$, Vasilis Syrgkanis$^2$, 
    Lester Mackey$^2$\\
  $^1$ Department of Computer Science, Stanford University \\
  $^2$ Microsoft Research, New England \\
  \texttt{trid@stanford.edu}, \texttt{\{govinda.kamath,vasy,lmackey\}@microsoft.com} \\
}
\begin{document}

\maketitle
\begin{abstract}
A popular approach to model compression is to train an inexpensive student model to mimic the class probabilities of a highly accurate but cumbersome teacher model.
Surprisingly, this two-step knowledge distillation process often leads to higher accuracy than training the student directly on labeled data.
To explain and enhance this phenomenon, 
we cast knowledge distillation as a semiparametric inference problem 
with the optimal student model as the target, the unknown Bayes class probabilities as nuisance, and the teacher probabilities as a plug-in nuisance estimate.
By adapting modern semiparametric tools, we derive new guarantees for the prediction error of standard distillation and develop two enhancements---cross-fitting and loss correction---to mitigate the impact of teacher overfitting and underfitting on student performance. 
We validate our findings empirically on both tabular and image data and observe consistent improvements from our knowledge distillation enhancements.
\end{abstract}

\section{Introduction}
\label{sec:intro}
Knowledge distillation (KD) \citep{craven1996extracting, breiman1996born, bucila2006model, li2014learning, ba2014deep,  hinton2015distilling} is a
widely used model compression technique that 
enables the deployment of highly accurate predictive models on
devices such as phones, watches, and virtual assistants \citep{stock2020and}.
KD operates by training a compressed student model to mimic the predicted class probabilities of an expensive, high-quality teacher model.
Remarkably and across a wide variety of domains \citep{hinton2015distilling,sanh2019distilbert,jiao2019tinybert,liu2018teacher,tan2018learning, fakoor2020fast}, this two-step process often leads to higher accuracy than training the student directly on the raw labeled dataset.

While the practice of KD is now well developed,
a general theoretical understanding of its successes and failures is still lacking.
As we detail below, a number of authors have argued that the success of KD lies in the more precise ``soft labels'' provided by the teacher's predicted class probabilities.
Recently, 
\citet{menon2020distillation} observed that these teacher probabilities can serve as a
proxy for the \emph{Bayes probabilities} (i.e., the true class probabilities) and that
the closer the teacher and Bayes probabilities, the better the
student's performance should be.

Building on this observation, we cast KD as a plug-in approach to \emph{semiparametric inference} \citep{kosorok2007introduction}:
that is, we view KD as fitting a student model $\hatf$ in the presence of nuisance (the Bayes probabilities $p_0$) with the teacher's probabilities $\hat{p}$ as a plug-in
estimate of $p_0$.
This insight allows us to adapt modern tools from semiparametric inference to
analyze the error of a distilled student in  \cref{sec:semi}.
Our analysis also reveals two distinct failure modes of KD: one due to teacher overfitting and data reuse and the other due to teacher underfitting from model misspecification or insufficient training.
In \cref{sec:theory}, we introduce and analyze two complementary KD enhancements that correct for these failures:
\emph{cross-fitting}---a popular technique from semiparametric inference \citep[see, e.g.,][]{chernozhukov}---mitigates teacher overfitting through data partitioning
while 
\emph{loss correction} mitigates teacher underfitting by reducing the bias of the plug-in estimate $\hatp$.
The latter enhancement was inspired by
the \emph{orthogonal machine learning} \citep{chernozhukov,foster2019orthogonal} approach to semiparametric inference which suggests a particular adjustment for the teacher's log probabilities.
We argue in \cref{sec:theory} that this orthogonal correction minimizes the teacher bias but often at the cost of unacceptably large variance.
Our proposed correction avoids this variance explosion by balancing the bias and variance terms in our generalization bounds.

In \cref{sec:experiments}, 
we complement our theoretical analysis with a pair of experiments demonstrating the value of our enhancements on six real classification problems.
On five real tabular datasets, cross-fitting and loss correction improve
student performance by up to 4\% AUC over vanilla KD.
Furthermore, on CIFAR-10 \citep{krizhevsky2009learning}, a benchmark image classification dataset, our enhancements improve vanilla KD accuracy by up to 1.5\% when the
teacher model overfits.
\textbf{Related work.}
Since we cannot review the vast literature on KD in its entirety, 
we point the interested reader to \citet{gou2020knowledge}
for a recent overview of the field. We devote this section to reviewing theoretical advances in the understanding of KD and summarize complementary empirical studies and applications of
in the extended literature review in
 \cref{sec:lit-review-notes}.

A number of papers have argued that the availability of soft
class probabilities from the teacher rather than hard labels 
enables us to improve training of the student model. 
This was hypothesized in \citet{hinton2015distilling} with 
empirical justification. \citet{phuong2019towards} 
consider the case in which the teacher is a fixed linear classifier
and the student is either a linear model or a deep linear network. They
show that the student can learn the teacher perfectly if the number of  training examples exceeds the ambient dimension.
\citet{vapnik2015learning} discuss the setting of learning with
privileged information where one has additional information 
at training time which is not available at test time. 
\citet{lopez2015unifying} draw a connection between this and
KD, arguing that KD is effective because 
the teacher learns a better representation allowing the student 
to learn at a faster rate. They 
hypothesize that a teacher's class probabilities enable student improvement by indicating how difficult each point is to classify.
\citet{tang2020understanding} argue using empirical evidence 
that label smoothing and reweighting of training examples
using the teacher's predictions are key to the success of KD. \citet{mobahi2020self}
analyzed the case of self-distillation in which
the student and teacher function classes are  identical.
Focusing on kernel ridge regression models, they proved that self-distillation can act as increased regularization strength.
\citet{bu2020information} considers more generic model compression
in a rate-distortion framework, where the rate is the size 
of the student model and distortion is the difference in
excess risk between the teacher and the student.
\citet{menon2020distillation} consider the case of losses such that
the population risk is linear in the Bayes class probabilities. 
They consider \emph{distilled empirical risk}
and \emph{Bayes distilled empirical risk} which are 
the risk computed using the 
teacher class probabilities and Bayes class probabilities 
respectively rather than 
the observed label. They show that the variance of the Bayes distilled 
empirical risk is lower than the empirical risk. Then using analysis from
\citet{maurer2009empirical, bennett1962probability}, they derive the 
excess risk of the distilled empirical risk as a function of the $\ell_2$
distance between the teacher's class probabilities and the 
Bayes class probabilities. We significantly depart from \cite{menon2020distillation} in multiple ways: i) our \Cref{thm:no_data_split_ERM} allows for the common practice of data re-use, ii) our results cover the standard KD losses \SEL and \ACE which are non-linear in $p_0$, iii) we use localized Rademacher analysis to achieve tight fast rates for standard KD losses, and iv) we use techniques from semiparametric inference to improve upon vanilla KD.

\section{Knowledge Distillation Background} 
We consider a multiclass classification problem with $k$ classes
and $n$
training datapoints $z_i = (x_i, y_i)$ %
sampled independently from some distribution $\mathbb{P}$.
Each feature vector $x$ belongs to a set $\xset$, each label vector $y \in \{e_1,\dots, e_k \}\subset \{0,1\}^k$ is a one-hot encoding of the class label, and the conditional probability of observing each label is the \emph{Bayes class probability} function $p_0(x) = \E[Y\mid X=x]$.
Our aim is to identify a scoring rule $f \colon \xset \to \reals^k$ that minimizes a prediction loss on average under the distribution $\P$.

\textbf{Knowledge distillation.}
Knowledge distillation (KD) is a two-step training process where one first uses a labeled dataset to train a teacher model and
then trains a student model to predict the teacher's
predicted class probabilities.
Typically the teacher model is larger and more cumbersome, while the student is
smaller and more efficient.
Knowledge distillation was first motivated by model
compression~\citep{bucila2006model}, to find compact yet
high-performing models to be deployed (such as on mobile devices).

In training the student to match the teacher's prediction probability, there are
several types of loss functions that are commonly used.
Let $\hat{p}(x) \in \reals^k$ be the teacher's vector of predicted class probabilities, $f(x) \in \reals^k$ be the
student model's output, and $[k] \defeq \{1,2,\dots, k\}$.
The most popular distillation loss functions\footnote{These loss functions do not depend on the ground-truth label $y$, but we use the augmented notation $\ell(z; f(x), \hat{p}(x))$ to accommodate the enhanced distillation losses presented in \cref{sec:theory}.} $\ell(z; f(x), \hat{p}(x))$ include the squared error
logit (SEL) loss~\citep{ba2014deep}
\begin{talign}        \label[name]{sel}\tag{SEL}
  \seloss(z; f(x), \hat{p}(x))
  \defeq \sum_{j \in [k]} \half \left( f_j(x) - \log(\hat{p}_j(x)) \right)^2
\end{talign}
and the annealed cross-entropy (ACE) loss~\citep{hinton2015distilling}
\begin{talign}        
\label[name]{ace}\tag{ACE}
  \celoss(z; f(x), \hat{p}(x))
  &= -\sum_{j \in [k]} \frac{\hat{p}_j(x)^{\beta}}{\sum_{l\in [k]} \hat{p}_l(x)^\beta}
    \log\left(\frac{\exp(\beta f_j(x))}{\sum_{l \in [k]} \exp(\beta f_l(x))}\right)
\end{talign}
for an inverse temperature $\beta > 0$.
These loss functions measure the divergence between the
probabilities predicted by the teacher and the student.

A student model trained with knowledge distillation often performs better than
the same model trained from scratch~\citep{bucila2006model,hinton2015distilling}.
In \cref{sec:semi,sec:theory}, we will adapt modern tools from semiparametric inference to understand and enhance this phenomenon.

\section{Distillation as Semiparametric Inference}\label{sec:generalization}
\label{sec:semi}

In semiparametric inference \citep{kosorok2007introduction}, one aims to estimate a target parameter or function $f_0$, but that estimation depends on an auxiliary \emph{nuisance function} $p_0$ that is unknown and not of primary interest. 
We cast the knowledge distillation process as a semiparametric inference
problem, by treating the unknown Bayes class probabilities $p_0$ as nuisance and the teacher's predicted probabilities as a plug-in estimate of that nuisance.
This perspective allows us bound the generalization of the student in terms of
the mean squared error (MSE) between the teacher and the Bayes probabilities.
In the next section (\cref{sec:theory}) we use techniques from semiparametric inference to enhance the performance of the student.
The interested reader could consult \citet{tsiatis2007semiparametric} for more details on semiparametric inference.

Our analysis starts from taking the following perspective on distillation. 
For a given pointwise loss function $\ell(z; f(x), p_0(x))$, we view the goal of the student as minimizing an oracle population loss over a function class ${\cal F}$,
\begin{talign}
    \loss_D(f, p_0) = \E\left[\ell(Z; f(X), p_0(X))\right] \qtext{with} f_0 \defeq \argmin_{f\in {\cal F}} L_D(f, p_0).
\end{talign}
The main hurdle is that this is objective depends on the unknown Bayes probabilities $p_0$. We view the teacher's model $\hat{p}$ as an approximate version of $p_0$ and bound the distillation error of the student as a function of the teacher's estimation error.

Typical semiparametric inference considers cases where $f_0$ is a finite dimensional parameter; however recent work of \citet{foster2019orthogonal} extends this framework to infinite dimensional models $f_0$ and to develop statistical learning theory with a nuisance component framework. The distillation problem fits exactly into this setup.

\paragraph{Bounds on vanilla KD} As a first step we derive a vanilla bound on the error of the distilled student model without any further modifications of the distillation process, i.e., we assume that the student is trained on the same data as the teacher and is trained by running empirical risk minimization (ERM) on the plug-in loss, plugging in the teacher's model instead of $p_0$, i.e.,
\begin{talign}\label[name]{vanilla}\tag{Vanilla KD}
    \hat{f} = \argmin_{f \in {\cal F}} L_n(f, \hat{p}) \qtext{for}  L_n(f, \hat{p}) \defeq \E_n[\ell(Z; f(X), \hat{p}(X))]
\end{talign}
where $\E_n[X]=\frac{1}{n}\sum_{i=1}^n X_i$ denotes the empirical expectation of a random variable.

\paragraph{Technical definitions} Before presenting our main theorem we introduce some technical notation. For a vector valued function $f$ that takes as input a random variable $X$, we use the shorthand notation
$
    \|f\|_{p, q} \defeq \| \|f(X)\|_p \|_{L^q} = \E\left[\|f(X)\|_p^q\right]^{1/q}.
$
Let $\grad_\phi$ and $\grad_{\pi}$ denote the partial derivatives of $\ell(z; \phi, \pi)$, with respect to its second and third input correspondingly and $\grad_{\phi\pi}$ the Jacobian of cross partial derivatives, i.e., $[\grad_{\phi\pi} \ell(z; \phi, \pi)]_{i,j} = \frac{\partial^2}{\partial \phi_j \partial \pi_i}\ell(z; \phi, \pi)$.
Finally, let 
\begin{talign}\label{eq:q-defn}
q_{f,p}(x) = \E\left[\grad_{\phi\pi} \ell(Z; f(X), p(X)) \mid X=x\right] \qtext{and} 
\gamma_{f,p}(x) = \E_{U\sim\Unif([0,1])}[q_{f,U p + (1-U) p_0}(x)].
\end{talign}

\paragraph{Critical radius} Finally, we need to define the notion of the critical radius (see, e.g., \citet[14.1.1]{wainwright_2019}) of a function class, which typically provides tight learning rates for statistical learning theory tasks. For any function class ${\cal F}$ we define the localized Rademacher complexity as:
\begin{talign}
    {\cal R}(\delta; {\cal F}) = \E_{X_{1:n}, \epsilon_{1:n}}[\sup_{f\in {\cal F}: \|f\|_2\leq \delta}\frac{1}{n}\sum_{i=1}^n \epsilon_i f(X_i)]
\end{talign}
where $\epsilon_i$ are i.i.d.\ random variables taking values equiprobably in $\{-1, 1\}$. The \emph{critical radius} of a class ${\cal F}$, taking values in $[-H, H]$, is the smallest positive solution $\delta_n$ to the inequality
$
    {\cal R}(\delta; {\cal F})\leq \frac{\delta^2}{H}.
$

\begin{theorem}[Vanilla KD analysis]\label{thm:vanilla}\label{thm:no_data_split_ERM}
Suppose $f_0$ belongs to a convex set $\fset$ satisfying the $\ell_2/\ell_4$ ratio condition
$
\sup_{f\in {\cal F}} {\|f-f_0\|_{2,4}}{/\|f-f_0\|_{2,2}} \leq C
$ 
and that the teacher estimates $\hat{p} \in {\cal P}$ from the same dataset used to train the student. Let $\delta_{n, \zeta}=\delta_n + c_0 \sqrt{\frac{\log(c_1/\zeta)}{n}}$ for universal constants $c_0, c_1$ and $\delta_n$ an upper bound on the critical radius of the function class
\begin{talign}
    {\cal G} \defeq \{z \to r\, \left(\ell(z; f(x), p(x)) - \ell(z; f_0(x), p(x))\right): f\in {\cal F},\ p\in {\cal P},\ r\in [0, 1]\}.
\end{talign}
Let $\mu(z) = \sup_{\phi} \left\|\nabla_{\phi} \ell(z; \phi, \hat{p}(x))\right\|_2$, and assume that the loss $\ell(z; \phi, \pi)$ is $\sigma$-strongly convex in $\phi$ for each $z$ and that each $g\in\gset$ is uniformly bounded in $[-H, H]$. 
Then the \vanilla $\hat{f}$  satisfies 
\begin{talign}
     \|\hat{f}-f_0\|_{2,2}^2 
    =& \frac{1}{\sigma^2} O(\delta_{n,\zeta}^2\, C^2\, H^2\, \|\mu\|_{4}^2 +
        \|\gamma_{f_0, \hat{p}}^\top (\hat{p} - p_0)\|_{2,2}^2)
        \qtext{with probability at least}
        1-\zeta.
\end{talign}
\end{theorem}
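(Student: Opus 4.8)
The plan is to reduce the parameter error $\twotwonorm{\hatf-f_0}^2$ to an excess-risk bound via strong convexity, and then to split that excess risk into an \emph{estimation} term governed by the critical radius of $\gset$ and a \emph{nuisance-bias} term governed by an exact expansion in the nuisance direction. For the first step I would use that $f_0$ minimizes $\loss_D(\cdot,p_0)$ over the convex set $\fset$ and that pointwise $\sigma$-strong convexity of $\ell$ in $\phi$ lifts to $\sigma$-strong convexity of $\loss_D(\cdot,p_0)$ in the $\twotwonorm{\cdot}$ geometry (the $L^2$ average of the pointwise lower bounds reproduces exactly this norm). First-order optimality of $f_0$ makes the linear term nonnegative, so
\[
\tfrac{\sigma}{2}\twotwonorm{\hatf-f_0}^2 \le \loss_D(\hatf,p_0)-\loss_D(f_0,p_0),
\]
and it suffices to bound the right-hand side. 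I would then telescope $\loss_D(\hatf,p_0)-\loss_D(f_0,p_0)=A+B$, where $A\defeq \loss_D(\hatf,\hatp)-\loss_D(f_0,\hatp)$ is the excess risk evaluated at the teacher nuisance and $B\defeq \psi(f_0)-\psi(\hatf)$ with $\psi(f)\defeq \loss_D(f,\hatp)-\loss_D(f,p_0)$ captures how the nuisance-induced bias varies with $f$.

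To control $A$, I would invoke ERM optimality $\E_n[\ell(\cdot;\hatf,\hatp)]\le \E_n[\ell(\cdot;f_0,\hatp)]$, which makes the empirical excess nonpositive, so $A$ is at most the centered empirical process $(\E-\E_n)$ applied to the increment $z\mapsto \ell(z;\hatf(x),\hatp(x))-\ell(z;f_0(x),\hatp(x))$. Because the teacher $\hatp$ is fit on the \emph{same} data, this increment is data-dependent through $\hatp$, so the concentration must hold uniformly over $p\in\pset$; this is precisely why the localization class $\gset$ ranges over all $p\in\pset$ and is rendered star-shaped by the scalar $r\in[0,1]$. The localized Rademacher/critical-radius lemma (with $\gset$ bounded in $[-H,H]$) then controls this process at scale $\delta_{n,\zeta}$. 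To convert the loss-space $L^2$ size of the increment into the parameter norm, I would apply the mean value inequality with the gradient envelope $\mu$ and then \Holder together with the $\ell_2/\ell_4$ ratio condition, obtaining $\|\ell(\cdot;\hatf,\hatp)-\ell(\cdot;f_0,\hatp)\|_{L^2}\le C\,\|\mu\|_4\,\twotwonorm{\hatf-f_0}$; this is what surfaces the factors $C,\|\mu\|_4,H$ and yields $A\lesssim \delta_{n,\zeta}\,C\,\|\mu\|_4\,H\,\twotwonorm{\hatf-f_0}$.

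To control $B$, I would exploit that the definitions of $q_{f,p}$ and $\gamma_{f,p}$ in \cref{eq:q-defn} are tailored so that the gradient of $\psi$ at $f_0$ equals the bias integrand exactly. Writing $\grad_\phi\ell(z;f_0(x),\hatp(x))-\grad_\phi\ell(z;f_0(x),p_0(x))$ as the integral of $\grad_{\phi\pi}\ell$ along the segment $u\mapsto u\hatp(x)+(1-u)p_0(x)$ and taking the conditional expectation given $X=x$ gives $\grad_f\psi(f_0)(x)=\gamma_{f_0,\hatp}(x)^\top(\hatp(x)-p_0(x))$ with no remainder in the nuisance direction. A Taylor expansion of $\psi$ about $f_0$ and Cauchy--Schwarz then yield $\abs{B}\le \twotwonorm{\gamma_{f_0,\hatp}^\top(\hatp-p_0)}\,\twotwonorm{\hatf-f_0}+(\text{remainder})$, where the remainder is second order in $\hatf-f_0$; it vanishes for \SEL (whose gradient is affine in $f$, so the nuisance-differenced Hessian is zero) and is dominated via smoothness of $\ell$ in $\pi$ for \ACE.

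Finally I would combine: writing $r\defeq\twotwonorm{\hatf-f_0}$, the bounds on $A$ and $B$ give $\tfrac{\sigma}{2}r^2\le\big(C\,\|\mu\|_4\,H\,\delta_{n,\zeta}+\twotwonorm{\gamma_{f_0,\hatp}^\top(\hatp-p_0)}\big)\,r+(\text{higher order})$, and applying Young's inequality $ab\le\tfrac{\sigma}{4}a^2+\tfrac{1}{\sigma}b^2$ to each term linear in $r$, absorbing $\tfrac{\sigma}{4}r^2$ into the left side, and rearranging produces the claimed $r^2=\sigma^{-2}O(\delta_{n,\zeta}^2 C^2 H^2\|\mu\|_4^2+\twotwonorm{\gamma_{f_0,\hatp}^\top(\hatp-p_0)}^2)$. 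The hardest part is the estimation step for $A$: one must establish a localized concentration bound that is simultaneously uniform over the nuisance class $\pset$ (forced by data reuse, the key departure from cross-fitted analyses) and that correctly transfers the critical radius of the loss class $\gset$ to the parameter norm $\twotwonorm{\cdot}$ through $\mu$, $C$, and $H$. The bias step is comparatively routine, being essentially an exact identity once $\gamma$ is defined as in \cref{eq:q-defn}.
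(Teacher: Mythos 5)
Your proposal follows the paper's proof in every essential respect: the reduction of $\twotwonorm{\hatf-f_0}^2$ to excess risk via $\sigma$-strong convexity and first-order optimality of $f_0$ over the convex $\fset$; the localized-Rademacher control of the centered empirical process uniformly over $p\in\pset$ (the paper likewise invokes \citet[Lem.~11]{foster2019orthogonal} with $g=\ell_{\hatf,\hatp}-\ell_{f_0,\hatp}\in\gset$, the class ranging over $\pset$ precisely because of data reuse); the transfer $\|\ell_{\hatf,\hatp}-\ell_{f_0,\hatp}\|_{2,2}\le\|\mu\|_4\,C\,\twotwonorm{\hatf-f_0}$ via Cauchy--Schwarz and the $\ell_2/\ell_4$ ratio; the exact integral-form Taylor identity giving the bias term $\gamma_{f_0,\hatp}^\top(\hatp-p_0)$ with no remainder in the nuisance direction; and the final AM--GM absorption. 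The paper packages the convexity-plus-bias portion as its \cref{lem:general} (instantiated with $\gamma=0$), but the content is the same.

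The one genuine deviation is where you anchor the strong-convexity inequality, and it opens a gap at the theorem's stated level of generality. You expand at $p_0$ and then Taylor-expand $\psi(f)=\loss_D(f,\hatp)-\loss_D(f,p_0)$ between the \emph{two} points $f_0$ and $\hatf$, which produces the second-order remainder $\tfrac{1}{2}D_{ff}\psi(\bar f)[\hatf-f_0,\hatf-f_0]$ involving the differenced Hessian $\grad_{\phi\phi}\ell(z;\phi,\hatp(x))-\grad_{\phi\phi}\ell(z;\phi,p_0(x))$. You flag this and dispose of it for \SEL and \ACE --- correctly, and in fact for both losses $\grad_{\phi\phi}\ell$ is independent of $\pi$ (for \ACE the Hessian in $f$ is the softmax Hessian, free of $\hatp$), so the remainder vanishes exactly rather than merely being dominated. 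But the theorem is stated for arbitrary $\sigma$-strongly convex losses, and none of its hypotheses ($\sigma$-strong convexity, the envelope $\mu$, boundedness of $\gset$) controls this differenced Hessian; absorbing the remainder into $\tfrac{\sigma}{2}\twotwonorm{\hatf-f_0}^2$ would require an additional smoothness assumption, e.g., a bound on $\grad_{\phi\phi\pi}\ell$ combined with smallness of $\hatp-p_0$. The paper avoids the remainder altogether by writing strong convexity for $\loss_D(\cdot,\hatp)$ --- so all $f$-curvature is absorbed into the excess risk evaluated at $\hatp$ --- and comparing nuisances only through the single Fr\'echet derivative $D_f\loss_D(f_0,\cdot)[\hatf-f_0]$ at the one point $f_0$, where your exact identity applies cleanly. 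Swapping your expansion to this arrangement closes the gap and recovers the paper's proof essentially verbatim.
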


\Cref{thm:no_data_split_ERM}, proved in \Cref{sec:erm-reuse-proof}, shows that vanilla distillation yields an accurate student whenever the teacher generalizes well (i.e., $\|\hat{p}-p_0\|_{2,2}$ is small) 
and the student and teacher model classes $\mc{F}$ and $\mc{P}$ are not too complex. 
The $\ell_2/\ell_4$ ratio requirement can be removed at the expense of replacing $\|\mu\|_{4}$ by $\|\mu\|_{\infty}=\sup_{z} |\mu(z)|$ in the final bound. Moreover, we highlight that the strong convexity requirement for $\ell$ is satisfied by all standard distillation objectives including \SEL and \ACE, as it is strong convexity with respect to the output of $f$ and not the parameters of $f$. Even this requirement could be removed, but this would yield slow rate bounds of the form: $\|\hat{f}-f_0\|_{2,2}^2=O(\delta_{n, \zeta} + \|\gamma_{f_0, \hat{p}}^\top (\hat{p} - p_0)\|_{2,2}^2)$.

\paragraph{Failure modes of vanilla KD} 
\Cref{thm:no_data_split_ERM} also hints at two distinct ways in which vanilla distillation could fail.
First, since the student only learns from the teacher and does not have access to the original labels, we would expect the student to be erroneous when the teacher probabilities are inaccurate due to model misspecification, an overly restrictive teacher function class, or insufficient training.
\cref{underfitting}, proved in \cref{sec:underfitting-proof},
confirms that, in the worst case, student error suffers from inaccuracy due to this \emph{teacher underfitting} even when both the student and teacher belong to low complexity model classes.
\begin{proposition}[Impact of teacher underfitting on vanilla KD]\label{underfitting}
There exists a classification problem in which the following properties all hold simultaneously with high probability for $f_0 = \log(p_0)$:
\begin{itemize}[nosep, left=1em]
\item The teacher learns $\hatp(x) = {\frac{1}{n(1+\lam)}\sum_{i=1}^ny_i}{}$ for all $x\in\xset$ via ridge regression with $\lam = \Theta({1}{/n^{1/4}})$. %
\item \vanilla with \SEL loss and constant $\hatf$ satisfies
$\twotwonorm{\hatf- f_0}^2 \geq \twotwonorm{\gamma_{f_0, p_0}^\top(\hatp- p_{0})}^2 = \Omega(\frac{1}{\sqrt{n}})$, matching the  dependence of the \Cref{thm:no_data_split_ERM} upper bound up to a constant factor.
\item \enhanced %
with \SEL loss, $\hatgamma^{(t)} = \diag(\frac{1}{\hatp^{(t)}})$, and constant $\hatf$ satisfies
$\twotwonorm{\hatf- f_0}^2 
= O(\frac{1}{n})$. %
\end{itemize}
\end{proposition}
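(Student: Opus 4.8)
The plan is to exhibit a single explicit instance and verify all three bullets by direct computation, exploiting the fact that every object in sight is a constant vector in $\reals^k$. I would take the Bayes probabilities to be a fixed interior point $p_0$ of the probability simplex that does \emph{not} depend on $x$ (so $\xset$ and $\P_X$ play no role) with $k$ classes and one-hot labels $y_i$. Then $f_0 = \argmin_f \E[\seloss(Z; f, p_0)]$ over constant $f$ is exactly $\log p_0$, matching the stated target. For the first bullet I would check that the ridge objective $\E_n\twonorm{c-y}^2 + \lam\twonorm{c}^2$ over a constant $c$ has stationary point $(1+\lam)c = \E_n[y]$, i.e. $\hatp = \frac{1}{1+\lam}\bar y$ with $\bar y = \frac1n\sum_i y_i$, which is the claimed teacher.

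The engine of the whole statement is the bias/variance split $\hatp - p_0 = \frac{1}{1+\lam}(\bar y - p_0) - \frac{\lam}{1+\lam}p_0$. The first term is a mean-zero fluctuation of size $\Theta_P(1/\sqrt n)$ (each coordinate averages i.i.d.\ bounded variables, so Hoeffding/Bernstein gives coordinatewise control), while the second is a deterministic bias of size $\Theta(\lam)=\Theta(n^{-1/4})$. With $\lam = \Theta(n^{-1/4})$ the bias dominates, and on a high-probability event every coordinate of $\hatp - p_0$ is strictly negative; I will use this sign fact in the vanilla lower bound.

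For the vanilla bullet, since teacher and student are both constant the ERM objective collapses to $\sum_j \half(f_j - \log\hatp_j)^2$, minimized by $\hatf = \log\hatp$, so $\hatf - f_0 = \log(\hatp/p_0)$ coordinatewise. I would compute the SEL cross-partial $\grad_{\phi\pi}\ell = -\diag(1/\pi)$, which is independent of $\phi$, giving $\gamma_{f_0,p_0} = -\diag(1/p_0)$ and hence coordinates $-(\hatp_j-p_{0,j})/p_{0,j}$ for $\gamma_{f_0,p_0}^\top(\hatp-p_0)$. The bound $\twotwonorm{\hatf-f_0}^2 \ge \twotwonorm{\gamma_{f_0,p_0}^\top(\hatp-p_0)}^2$ then follows coordinatewise from the elementary inequality $\abs{\log(1+x)}\ge\abs{x}$ for $x\le 0$, applied with $x_j = (\hatp_j-p_{0,j})/p_{0,j} < 0$ (this is why the negative-bias event is needed). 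Evaluating the right side, the dominant bias makes each coordinate $\Theta(\lam)$, so the squared norm is $\Theta(k\lam^2)=\Theta(n^{-1/2})$, the advertised $\Omega(1/\sqrt n)$; comparability with the Theorem's $\gamma_{f_0,\hatp}$ follows because $\hatp = p_0(1+O(\lam))$ forces $\gamma_{f_0,\hatp} = \gamma_{f_0,p_0}(1+O(\lam))$.

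The enhanced bullet is where the orthogonal correction pays off. With $\hatgammat = \diag(1/\hatpt)$, the corrected SEL target is the first-order debiasing $\log\hatp_j + \frac{1}{\hatp_j}(y_j - \hatp_j)$ of $\log p_{0,j}$, so the constant student is $\hatf_j = \log\hatp_j + \frac{1}{\hatp_j}(\bar y_j - \hatp_j)$. Substituting $\bar y = (1+\lam)\hatp$ collapses the correction to exactly $\lam$, yielding $\hatf_j - f_{0,j} = \log(1+\eta_j/p_{0,j}) + (\lam - \log(1+\lam))$ with $\eta_j = \bar y_j - p_{0,j} = \Theta_P(1/\sqrt n)$. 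The key point is that the first-order $\Theta(\lam)$ bias has been cancelled and replaced by $\lam - \log(1+\lam) = \Theta(\lam^2) = \Theta(n^{-1/2})$; squaring and summing the two $O(1/\sqrt n)$ contributions gives $\twotwonorm{\hatf - f_0}^2 = O(1/n)$. I expect the main obstacle to be making these $\Theta_P$ statements rigorous and simultaneous with high probability—chiefly the coordinatewise concentration of $\bar y$ that both fixes the sign of the bias for the vanilla lower bound and controls $\eta_j$ in the enhanced upper bound—together with confirming the exact form of the cross-fit corrected loss so that the clean cancellation $\bar y = (1+\lam)\hatp$ carries over (up to $O(1/\sqrt n)$ fold-to-fold fluctuations that are harmless at the $O(1/n)$ scale) to the per-fold estimates $\hatpt$.
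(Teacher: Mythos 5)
Your proposal is correct. For the first two bullets it is essentially the paper's own argument: a constant $p_0$ on the interior of the simplex, the constant ridge teacher $\hatp = \bar{y}/(1+\lam)$, the pointwise concavity bound (your $\abs{\log(1+x)}\geq\abs{x}$ for $x\leq 0$ is exactly the paper's step $\log p_0 - \log\hatp \geq \diag(1/p_0)(p_0-\hatp)$), and coordinatewise Bernstein concentration to make the $\Theta(\lam)$ ridge bias dominate the $\Theta(n^{-1/2})$ fluctuation so every coordinate of $\hatp - p_0$ has the favorable sign. The paper additionally invokes the law of the iterated logarithm for the $\Omega(\lam^2)$ lower bound, but your concentration-only version is equally valid for a with-high-probability statement. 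Two hygiene details you should import from the paper: it clips the teacher, $\hatp = \frac{\bar{y}}{1+\lam}\vee\eps$, and confines the student to the box $\fset = \{f: f_j \in [\log\eps, 0]\}$, which keeps $\log\hatp$ well defined on the rare bad event and the losses bounded.

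For the third bullet you take a genuinely different, more elementary route. The paper simply invokes \cref{thm:erm} with $B=O(1)$, bounds the critical radius of the constant class by $O(\sqrt{Bk/n})$ via a Rademacher computation, and asserts $\hatgammat = \gamma_{f_0,\hatp}$ so the bias term drops (strictly, $\diag(1/\hatpt)$ matches the $U$-averaged $\gamma_{f_0,\hatpt}$ only to first order, leaving a residual of order $\staticnorm{\hatpt - p_0}^4 = O(1/n)$, which the paper glosses). You instead solve the corrected ERM in closed form, use $\bar{y} = (1+\lam)\hatp$ to collapse the correction to exactly $\lam$, and exhibit the bias reduction from $\Theta(\lam)$ to $\lam - \log(1+\lam) = \Theta(\lam^2) = \Theta(n^{-1/2})$, giving $O(1/n)$ after squaring. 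This makes the tuning $\lam^4 \asymp 1/n$ transparent, sidesteps the critical-radius machinery, and avoids the paper's loose identification of $\hatgammat$ with $\gamma_{f_0,\hatp}$—at the cost of being specific to this example, whereas the theorem-based route would extend to richer student classes. One point of genuine care: with the paper's literal conventions ($\ell_\gamma = \ell + (y-p)^\top\gamma f$ and $q_{f,p} = \grad_{\phi\pi}\seloss = -\diag(1/p)$), the debiasing orientation you computed corresponds to $\gamma = -\diag(1/\hatp)$; the paper is internally inconsistent on this sign (its Bregman example states $+\diag(1/p)$), and you adopted the orientation under which the correction cancels rather than doubles the bias—the opposite sign gives bias $\lam + \log(1+\lam) = \Theta(\lam)$ and the bullet would fail, so a sentence fixing the convention is warranted. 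Your closing remark that cross-fitting perturbs the identity $\bar{y}^{(t)} = (1+\lam)\hatpt$ only by $O(n^{-1/2})$ in-fold/out-of-fold fluctuations, harmless at scale $O(1/n)$ for fixed $B$, is right and matches the paper's $B = O(1)$ instantiation.
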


Second, the critical radius in \Cref{thm:no_data_split_ERM} depends on the complexity of the teacher model class ${\cal P}$.
If ${\cal P}$ has a large critical radius, then the student error bound suffers due to potential teacher overfitting \emph{even if the teacher generalizes well}.
\cref{overfitting}, proved in \cref{sec:overfitting-proof},
shows that, in the worst case, this \emph{teacher overfitting} penalty is unavoidable and does in fact lead to increased student error.
This occurs as the student only has access to the teacher's training set probabilities which, due to overfitting, need not reflect its test set probabilities.
\begin{proposition}[Impact of teacher overfitting on vanilla KD]\label{overfitting}
There exists a classification problem in which the following properties all hold simultaneously with high probability for $f_0 = \E[\log(p_0(X))]$:
\begin{itemize}[nosep, left=1em]
    \item The critical radius $\delta_n$ of the teacher-student function class $\mathcal{G}$ in \Cref{thm:no_data_split_ERM} is a non-vanishing constant, due to the complexity of the teacher’s function class.
    \item The \vanilla error $\twotwonorm{\hatf - f_0}^2$ for constant $\hatf$ with \SEL loss is lower bounded by a non-vanishing constant, matching the $\delta_n$ dependence of the \Cref{thm:no_data_split_ERM} upper bound up to a constant factor.
    \item \enhanced with \SEL loss,
    $\hatgamma^{(t)} = 0$, and constant $\hatf$ satisfies $\twotwonorm{\hat{f} - f_0}^2 = O(n^{-4/(4+d)})$.
\end{itemize}
\end{proposition}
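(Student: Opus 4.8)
The plan is to exhibit a single instance in which the teacher is close to $p_0$ in population yet is \emph{memorized at the sample points}, so that vanilla KD is derailed by the complexity (critical-radius) term of \Cref{thm:no_data_split_ERM} while cross-fitting, which only ever queries the teacher off-sample, is not. Concretely, I would take $\xset=[0,1]^d$ with $X\sim\Unif(\xset)$, binary labels ($k=2$), and $p_0=(\mu,1-\mu)$ for a fixed nonconstant $\mu\colon\xset\to[c,1-c]$ with Lipschitz gradient (i.e.\ $\beta=2$ smoothness). Let the student class $\fset$ be the convex set of constant functions, for which the \SEL oracle is exactly $f_0=\E[\log p_0(X)]$ as in the statement, and let the teacher class $\pset$ be all measurable maps into the clipped simplex $\{\pi\in\simplex:\pi_j\in[p_{\min},1-p_{\min}]\}$. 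The crucial device is a teacher that interpolates in-sample but generalizes off-sample: set $\hatp(x_i)=(1-p_{\min})y_i+p_{\min}(\boldone-y_i)$ (the clipped one-hot label) at each training point and $\hatp(x)=\tilde p(x)$ elsewhere, where $\tilde p$ is a clipped local-averaging (kernel/local-polynomial) estimator run at the optimal bandwidth. Since the $x_i$ have measure zero, $\twotwonorm{\hatp-p_0}=\twotwonorm{\tilde p-p_0}$ is small, yet the in-sample values are the memorized labels. A small constant $p_{\min}$ will serve all three bullets at once.

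For the first bullet I would lower bound the critical radius of $\gset$ using the richness of $\pset$. Restricting to $f=f_0+t(1,-1)$ and $r=1$, the \SEL difference in $\gset$ equals $t\,[(f_{0,1}-f_{0,2})+\log(p_2(x)/p_1(x))]+t^2$, and as $p_1(\cdot)$ ranges freely over $[p_{\min},1-p_{\min}]$ this realizes a function of $x$ whose value can be set essentially independently at each point within an interval straddling $0$ (choosing $p_{\min}$ small makes the half-width $\log\frac{1-p_{\min}}{p_{\min}}$ exceed $|f_{0,1}-f_{0,2}|$). I would then build a spiky witness: pick $p_1$ so that $g$ equals its zero background value off tiny balls around the $X_i$ and takes a fixed-magnitude value with sign $\epsilon_i$ inside them. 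Shrinking the balls drives $\twotwonorm{g}\le\delta$ for any $\delta>0$ while keeping $\frac1n\sum_i\epsilon_i g(X_i)\ge s$ for a constant $s$, so ${\cal R}(\delta;\gset)\ge s$ for all $\delta$ and the critical radius is bounded below by $\sqrt{sH}=\Omega(1)$.

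For the second bullet, the constant-student \SEL minimizer is the coordinatewise empirical mean $\hatf_j=\frac1n\sum_i\log\hatp_j(x_i)$, which here reads off only the in-sample memorized values; its first coordinate concentrates around $\bar\mu\log(1-p_{\min})+(1-\bar\mu)\log p_{\min}$ with $\bar\mu=\E[\mu(X)]$, a constant separated from $f_{0,1}=\E[\log\mu(X)]$ once $p_{\min}$ is small. Hence $\twotwonorm{\hatf-f_0}^2=\Omega(1)$ with high probability. The point I would stress is that this failure is the complexity term, not the bias term: since $\twotwonorm{\hatp-p_0}$ is small, the bias $\twotwonorm{\gamma_{f_0,\hatp}^\top(\hatp-p_0)}$ (computed from $\grad_{\phi\pi}\seloss=-\diag(1/\pi)$, so $\gamma_{f_0,\hatp}=-\E_{U}[\diag(1/(U\hatp+(1-U)p_0))]$, bounded by $1/p_{\min}$) is also small, and the $\Omega(1)$ student error is therefore matched only by the non-vanishing $\delta_n^2$ term of \Cref{thm:no_data_split_ERM}.

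For the third bullet I would invoke the \enhanced bound with $\hatgammat=0$: under cross-fitting each fold's teacher is trained off the evaluation fold, so the memorization spikes are never queried and the student instead averages $\tilde p(x_i)\approx p_0(x_i)$. With $\hatp$ held fixed per fold, the relevant critical radius collapses to that of the constant student class, which is $\bigO{1/\sqrt{n}}$, while the bias obeys $\twotwonorm{\gamma_{f_0,\hatp}^\top(\hatp-p_0)}^2\lesssim\twotwonorm{\hatp-p_0}^2/p_{\min}^2$. Standard local-averaging theory gives $\twotwonorm{\tilde p-p_0}^2=\bigO{n^{-4/(4+d)}}$ for twice-smooth $\mu$, which dominates $1/n$, yielding $\twotwonorm{\hatf-f_0}^2=\bigO{n^{-4/(4+d)}}$. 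I expect the main obstacle to be the critical-radius lower bound of the first bullet: one must simultaneously ensure the spiky witnesses are realized by a legal clipped $p\in\pset$ and remain bounded in $[-H,H]$, satisfy the population constraint $\twotwonorm{g}\le\delta$ for arbitrarily small $\delta$, and still correlate with the Rademacher signs---and then check that the same single instance and constants make all three bullets hold at once.
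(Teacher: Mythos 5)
Your proposal is correct and takes essentially the same route as the paper: an interpolating-yet-population-accurate teacher (the paper instantiates this with the singular-kernel Nadaraya--Watson estimator of \citet{belkin2019does}, whose Theorem~1 directly supplies the $O(n^{-4/(4+d)})$ teacher MSE, whereas you overwrite a generic local-averaging estimator at the measure-zero sample points), a constant student class with $f_0 = \E[\log(p_0(X))]$, inconsistency of the vanilla student because its empirical mean reads off only the memorized in-sample log-labels, and \cref{thm:erm} with $\hatgammat = 0$ and the $O(\sqrt{Bk/n})$ critical radius of the constant class for the cross-fitted rate. If anything, your spiky-witness construction lower bounding the localized Rademacher complexity is more explicit than the paper, which simply asserts $\delta_n = \Omega(1)$ from the interpolation property, while the paper is slightly more careful on two details you elide: it proves the vanilla bias bound via Taylor's theorem with Lagrange remainder (valid under an explicit condition on $\eps$ rather than only for $p_{\min}$ small), and it converts the in-expectation teacher MSE into the advertised high-probability statement via Chebyshev's and Jensen's inequalities.
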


These examples serve to lower bound student performance in the worst case by the teacher's critical radius and class probability MSE, matching the upper bounds given in \Cref{thm:no_data_split_ERM}.
However, we note that in other better-case scenarios vanilla distillation can perform better than the upper-bounding \Cref{thm:no_data_split_ERM} would imply.
In the next section, we adapt and generalize techniques from semiparametric inference to mitigate the effects of teacher overfitting and underfitting in all cases.

\section{Enhancing Knowledge Distillation}%
\label{sec:theory}

To address the two distinct inefficiencies of vanilla distillation revealed in \cref{sec:semi}, we will adapt and generalize two distinct techniques from semiparametric inference: orthogonal correction and cross-fitting.

\subsection{Combating teacher underfitting with loss correction}
\label{sec:agnostic}

We can view the plug-in distillation loss $\ell(z; f(x), \hat{p}(x))$ as a zeroth order Taylor approximation to the ideal loss $\ell(z; f(x), p_0(x))$ around $\hat{p}$.
An ideal first-order approximation would take the form
\begin{talign}\label{eq:ideal-first-order-loss}
    \ell(z; f(x), \hat{p}(x)) + \inner{p_0(x) - \hat{p}(x)}{\grad_{\pi} \ell(z; f(x), \hat{p}(x))}.
\end{talign}
However, its computation also requires knowledge of $p_0$. 
Nevertheless, since $p_0(x) = \E[Y\mid X=x]$, we can always construct an unbiased estimate of the ideal first order term by replacing $p_0(x)$ with $y$:
\begin{talign}\label{eq:first-order-loss}
\ell_{ortho}(z; f(x), \hat{p}(x))
    &= \ell(z; f(x), \hat{p}(x)) + \inner{y - \hat{p}(x)}{\E[\grad_{\pi} \ell(z; f(x), \hat{p}(x))\mid x]}.
\end{talign}
For standard distillation base losses like \SEL and \ACE, the \emph{orthogonal loss} \cref{eq:first-order-loss} has an especially simple form, as $\grad_{\pi} \ell(z; f(x), \hat{p}(x))$  is linear in $f$.
Indeed, this is true more generally for the following class of  Bregman divergence losses.
\begin{definition}[Bregman divergence losses]\label{ex:bregman} Any \emph{Bregman divergence} loss function of the form
\begin{talign}
\ell(z; f(x), p(x))
    &\defeq \Psi(f(x)) - \Psi(g(p(x))) - \inner{\grad_g \Psi(g(p(x)))}{f(x) - g(p(x))} \qtext{has} \\
  \ell_{ortho}(z; f(x), p(x))
  &= \ell(z; f(x), p(x)) + (y - p(x))^\top \nabla_p g(p(x))^\top \nabla_{gg}^2 \Psi(g(p(x))) f(x) + \mathrm{const}\label{eqn:bregman}
\end{talign}
with the second term bilinear in $f(x)$ and $y - p(x)$.
For the \SEL loss, $\Psi(s) = \frac{1}{2} \|s\|_2^2$, $g(p) = \log(p)$, 
and the correction matrix $\nabla_p g(p(x))^\top \nabla_{gg}^2 \Psi(g(p(x))) = \diag(\frac{1}{p(x)})$.
Similarly, the \ACE loss falls into the class of Bregman divergence losses. 
\end{definition}
We will show that orthogonal correction \cref{eq:first-order-loss} can
significantly improve student bias due to teacher underfitting; however, for our
standard distillation losses
(\SEL and \ACE), the same orthogonal
correction term often introduces unreasonably large variance due to division by small probabilities appearing in the correction matrix (see \cref{ex:bregman}).
To grant ourselves more flexibility in balancing bias and variance, we propose and analyze a family of \emph{$\gamma$-corrected losses}, parameterized by a matrix valued function $\gamma: \xset \to \R^k \times \R^k$:
\begin{talign}
  \label{eq:gamma_loss}
    \ell_{\gamma}(z; f(x), p(x)) \defeq \ell(z; f(x), p(x)) +
    (y - p(x))^\top\gamma(x)f(x)
\end{talign}
to mimic the bilinear structure of Bregman orthogonal losses \cref{eqn:bregman}.
Note that we can always recover the vanilla distillation loss by taking $\gamma \equiv 0$.
We denote the associated population and empirical risks by
\begin{talign}
    \loss_D(f, p, \gamma) \defeq \E[\ell_{\gamma}(Z; f(X), p(X))]\qtext{and}
    \loss_n(f, p, \gamma) \defeq \E_n[\ell_{\gamma}(Z; f(X), p(X))].
\end{talign}
Observe that at $p_0$ the correction term is mean-zero and hence $\loss_D(f, p_0, \gamma)$ is independent of $\gamma$
\begin{talign}
    L_D(f, p_0) \defeq \E[\ell(Z; f(X), p_0(X))] = \loss_D(f, p_0, \gamma) \qtext{for all} \gamma.
\end{talign}

The $\gamma$-corrected loss has strong connections to the literature on Neyman orthogonality \citep{chernozhukov,Chernozhukov2016locally,Nekipelov2018,chernozhukov2018resiz,foster2019orthogonal}. In particular, if the function $\gamma$ is set appropriately, then one can show that the $\gamma$-corrected loss function satisfies the condition of a Neyman orthogonal loss defined by \citet{foster2019orthogonal}. We begin our analysis by showing a general lemma for any estimator $\hat{f}$, which adapts the main theorem of \citet{foster2019orthogonal} to account for approximate orthogonality; the proof can be found in \cref{sec:agnostic-proof}.
\begin{lemma}[Algorithm-agnostic analysis]\label{lem:general}
Consider any estimation algorithm that produces an estimate $\hat{f}$ with small plug-in excess risk, i.e.,
\begin{talign}
\loss_D(\hat{f}, \hat{p}, \gamma) - \loss_D(f_0, \hat{p}, \gamma) \leq \epsilon(\hat{f}, \hat{p}, \gamma).
\end{talign}
If the loss $\loss_D$ is $\sigma$-strongly convex with respect to $f$ and ${\cal F}$ is a convex set, %
then
\begin{talign}
    \frac{\sigma}{4} \|\hat{f}-f_0\|_{2,2}^2\leq~& \epsilon(\hat{f}, \hat{p}, \gamma) + \frac{1}{\sigma} \|(\gamma_{f_0, \hat{p}} - \gamma)^\top (\hat{p} - p_0)\|_{2,2}^2.
\end{talign}
If, in addition, 
$\sup_{z, \phi, \pi, i\in [d]} \opnorm{\nabla_{\phi_i\pi\pi} \ell(z;\phi, \pi)}\leq M$, then
\begin{talign}
    \|(\gamma_{f_0, \hat{p}} - \gamma)^\top (\hat{p} - p_0)\|_{2,2}^2\leq  2 \left(\|(q_{f_0, \hat{p}} - \gamma)^\top (\hat{p} - p_0)\|_{2,2}^2 + M^2\, k\, \|\hat{p} - p_0\|_{2, 4}^4\right).
\end{talign}
\end{lemma}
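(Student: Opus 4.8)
The plan is to prove the two inequalities separately, since they are logically distinct: the first is a strong-convexity argument combined with the defining property of $\gamma$-corrected losses, and the second is a second-order Taylor remainder estimate relating $\gamma_{f_0,\hat p}$ to $q_{f_0,\hat p}$.

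For the first inequality, I would start from $\sigma$-strong convexity of $\loss_D(\cdot,\hat p,\gamma)$ in $f$ over the convex set $\fset$. Since $f_0$ minimizes $\loss_D(\cdot,p_0)=\loss_D(\cdot,p_0,\gamma)$ (which is $\gamma$-independent by the observation just above the lemma), the first-order optimality condition gives $\inner{\grad_f \loss_D(f_0,p_0,\gamma)}{\hat f - f_0}\geq 0$. Strong convexity of the plug-in risk then yields
\begin{talign}
\frac{\sigma}{2}\|\hat f - f_0\|_{2,2}^2 \leq \loss_D(\hat f, \hat p, \gamma) - \loss_D(f_0,\hat p,\gamma) - \inner{\grad_f \loss_D(f_0,\hat p,\gamma)}{\hat f - f_0}.
\end{talign}
The excess-risk hypothesis bounds the first difference by $\epsilon(\hat f,\hat p,\gamma)$, so the crux is to control the gradient cross term $\inner{\grad_f \loss_D(f_0,\hat p,\gamma) - \grad_f \loss_D(f_0,p_0,\gamma)}{\hat f - f_0}$. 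I expect that a first-order Taylor expansion of $\grad_f \loss_D(f_0,\cdot,\gamma)$ in the nuisance argument, integrated over the segment from $p_0$ to $\hat p$, produces exactly the term $(\gamma_{f_0,\hat p}-\gamma)^\top(\hat p - p_0)$: the averaged cross-partial $\gamma_{f_0,\hat p}$ arises from the $U$-integral in \cref{eq:q-defn}, and the correction $\gamma$ enters because the added bilinear term $(y-p)^\top\gamma(x)f(x)$ contributes $-\gamma$ to the nuisance derivative of the gradient. Applying Cauchy–Schwarz in $L^2$ followed by the AM–GM split $ab\leq \frac{\sigma}{4}a^2 + \frac{1}{\sigma}b^2$ then absorbs a $\frac{\sigma}{4}\|\hat f-f_0\|_{2,2}^2$ term onto the left-hand side and leaves $\frac{1}{\sigma}\|(\gamma_{f_0,\hat p}-\gamma)^\top(\hat p-p_0)\|_{2,2}^2$, giving the claimed bound.

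For the second inequality, the goal is to replace the averaged Jacobian $\gamma_{f_0,\hat p}$ by the endpoint Jacobian $q_{f_0,\hat p}$, paying a second-order remainder. The key observation is that $\gamma_{f_0,\hat p}(x) = \E_{U}[q_{f_0,Up + (1-U)p_0}(x)]$, so $\gamma_{f_0,\hat p} - q_{f_0,\hat p}$ is an average of differences $q_{f_0,Up+(1-U)p_0} - q_{f_0,\hat p}$, each of which is itself a nuisance-difference of the cross-partial $q$. The hypothesis $\sup \opnorm{\grad_{\phi_i\pi\pi}\ell}\leq M$ bounds the third derivative of $\ell$ in the nuisance direction, so a mean-value bound gives $\|(\gamma_{f_0,\hat p}-q_{f_0,\hat p})(x)\|_{\mathrm{op}} \lesssim M\,\|\hat p(x)-p_0(x)\|_2$ pointwise. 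I would then split via the elementary inequality $\|a+b\|^2\leq 2\|a\|^2+2\|b\|^2$:
\begin{talign}
\|(\gamma_{f_0,\hat p}-\gamma)^\top(\hat p - p_0)\|_{2,2}^2 \leq 2\|(q_{f_0,\hat p}-\gamma)^\top(\hat p-p_0)\|_{2,2}^2 + 2\|(\gamma_{f_0,\hat p}-q_{f_0,\hat p})^\top(\hat p-p_0)\|_{2,2}^2,
\end{talign}
and bound the second summand pointwise by $M^2\|\hat p(x)-p_0(x)\|_2^2\cdot\|\hat p(x)-p_0(x)\|_2^2 = M^2\|\hat p(x)-p_0(x)\|_2^4$, whose $L^2$-over-$x$ expectation is exactly $M^2\|\hat p-p_0\|_{2,4}^4$. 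Tracking the factor of $k$ requires care: it should enter from converting an operator-norm bound on a $k\times k$ matrix acting on a $k$-vector into the vector norm, or from summing coordinatewise third-derivative bounds across the $k$ output components.

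**The main obstacle** I anticipate is the bookkeeping in the first inequality's Taylor expansion — specifically, verifying that the nuisance-derivative of $\grad_f \loss_D(f_0,\cdot,\gamma)$ integrates precisely to the averaged Jacobian $\gamma_{f_0,\hat p}$ minus the correction $\gamma$, matching the $U\sim\Unif([0,1])$ definition in \cref{eq:q-defn}. The interchange of the $X$-conditional expectation defining $q$ with the segment integral, and confirming that the bilinear correction term's gradient contributes exactly $-\gamma^\top(\hat p - p_0)$ with the right transpose and sign, is where an error would most easily creep in. The third-derivative remainder bound in the second inequality is more routine, being a direct Taylor-with-integral-remainder estimate under the stated uniform bound $M$.
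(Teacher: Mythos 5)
Your proposal follows essentially the same route as the paper's proof: strong convexity of $\loss_D(\cdot,\hat p,\gamma)$ plus first-order optimality of $f_0$ at $p_0$, a Taylor expansion with integral remainder in the nuisance argument producing exactly $(p_0-\hat p)^\top(\gamma_{f_0,\hat p}-\gamma)(\hat f - f_0)$ (with the $-\gamma$ coming from the mean-zero bilinear correction, as you anticipated), then Cauchy--Schwarz and the AM--GM split $ab\leq \frac{1}{\sigma}a^2+\frac{\sigma}{4}b^2$; the second claim is likewise proved by the same $2\|a\|^2+2\|b\|^2$ decomposition and the third-derivative mean-value bound $\|\gamma_{f_0,\hat p}-q_{f_0,\hat p}\|_2^2\leq M^2 k\,\|\hat p-p_0\|_2^2$, with the factor $k$ arising from summing the per-coordinate bounds $\opnorm{\nabla_{\phi_i\pi\pi}\ell}\leq M$ over the output components, exactly as you surmised. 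The steps you flagged as delicate are the ones the paper executes in its displays \cref{eq:gamma0-expansion} and the final chain of inequalities, and they go through as you planned.
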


\paragraph{Connection to Neyman orthogonality} Remarkably, if we set $\gamma = q_{f_0,\hat{p}}$, then the $\gamma$-corrected loss is Neyman orthogonal \citep{foster2019orthogonal}, and the student MSE bound depends only on the \emph{squared} MSE of the teacher.
Moreover, $q_{f_0,\hat{p}}$ is an observable quantity for any Bregman divergence loss (\cref{ex:bregman}) as $q_{f_0,\hat{p}}$ is independent of $f_0$.
However, we note that this setting of the $\gamma$ can lead to larger variance, i.e., the achievable excess risk can be much larger than the excess risk without the correction. For instance, in the case of the \SEL loss $q_{f_0, \hat{p}}(x) = \frac{1}{\hat{p}(x)}$, which can be excessively large when $\hat{p}$ is close to $0$, leading to a large increase in the variance of our loss. Thus, in a departure from the standard approach in semiparametric inference, we will be choosing $\gamma$ in practice to balance bias and variance.

\paragraph{Example instantiation of student's estimation algorithm} If we use \emph{plug-in empirical risk minimization}, i.e.,
$
    \hat{f} = \textstyle{\argmin_{f\in {\cal F}} \loss_n(f, \hat{p}, \gamma)}
$, to estimate $f_0$ with $\hat{p}$  estimated on an independent sample, then the results of \citet{maurer2009empirical} directly imply that as long as the loss function $\ell(z; \phi, \pi)$ is uniformly bounded in $[-H, H]$, then, with probability at least $1-\delta$,
\begin{talign}
    \epsilon(\hat{f}, \hat{p}, \gamma) = O\big( \sqrt{\frac{\sup_{f\in {\cal F}} \Var\left(\ell_{\gamma}(Z; f(X), \hat{p}(X))\right) \log(\tau(n)/\delta)}{n}} + \frac{H\log(\tau(n)/\delta)}{n}\big)
\end{talign}
where $\tau(n)={\cal N}_{\infty}(1/n, {\cal F}, 2n)$ and ${\cal N}_{\infty}(\epsilon, {\cal F}, m)$ is the $\ell_{\infty}$ empirical covering number of function class ${\cal F}$ in the worst-case over all realizations of $m$ data points and at approximation level $\epsilon$. 
This result has two drawbacks: it is a slow rate result that scales as $1/\sqrt{n}$ for parametric or bounded Vapnik–Chervonenkis (VC)-dimension classes, and it requires the student to be fit on a completely separate dataset from the teacher's. In the next theorem, we address both of these drawbacks: i) we invoke localized Rademacher complexity analysis to provide a fast rate result which would be of the order of $1/n$ for VC or parametric function classes, and ii) we use a more sophisticated data-partitioning technique called cross-fitting, which allows the student to be trained using all of the available teacher data.

\subsection{Combating teacher overfitting with cross-fitting} \label{sec:erm}
We now describe a more sophisticated version of data partitioning to make use of all data points in our student estimation, while at the same time not suffering from the sample complexity of the teacher's function space. This approach is referred to as \emph{cross-fitting} (CF) in the semiparametric inference literature (see, e.g., \citet{chernozhukov}):
\begin{enumerate}[nosep, left=1em]
    \item Partition the dataset into $B$ equally sized folds $P_1,\ldots, P_B$.
    \item For each fold $t\in [B]$ estimate $\hat{p}^{(t)}$ and $\hat{\gamma}^{(t)}$ using all the out-of-fold data points.
    \item Estimate $\hat{f}$ by minimizing the empirical loss:
    \begin{talign}
        \hat{f} = \argmin_{f\in {\cal F}} \frac{1}{n}\sum_{t=1}^{B} \sum_{i\in P_t} \ell_{\hat{\gamma}^{(t)}}(Z_i; f(X_i), \hat{p}^{(t)}(X_i)).
        \label[name]{enhanced}\tag{Enhanced KD}
    \end{talign}
\end{enumerate}
In other words, the nuisance estimates $(\hat{\gamma}^{(t)}, \hat{p}^{(t)})$ that are evaluated on the data points in fold $t$ when fitting the student in step 3, are estimated only using data points outside of $P_t$.

\begin{theorem}[Enhanced KD analysis]\label{thm:erm}
Suppose $f_0$ belongs to a convex set $\fset$. Let $\delta_{n/B, \zeta/B}=\delta_{n/B} + c_0 \sqrt{\frac{B\log(c_1B/\zeta)}{n}}$ for universal constants $c_0, c_1$ and $\delta_{n/B}$ an upper bound on the critical radius of the class
\begin{talign}
    {\cal G}(\hat{p}^{(t)}, \hat{\gamma}^{(t)}) = \{z \to r\, \left(\ell_{\hat{\gamma}^{(t)}}(z; f(x), \hat{p}^{(t)}(x)) - \ell_{\hat{\gamma}^{(t)}}(z; f_0(x), \hat{p}^{(t)}(x))\right): f\in {\cal F}, r\in [0, 1]\}
\end{talign}
for each $t\in [B]$. 
Let $\mu(z) = \sup_{f\in {\cal F}, t\in [B]} \left\|\nabla_{\phi} \ell_{\hatgamma^{(t)}}(z; f(X), \hat{p}^{(t)}(x))\right\|_2$, and assume that, with probability $1$ for each $t\in[B]$, the loss $\ell_{\hatgamma^{(t)}}(z; \phi, \hat{p}^{(t)}(x))$ is $\sigma$-strongly convex in $\phi$ for each $z$ and each $g\in\gset(\hat{p}^{(t)}, \hat{\gamma}^{(t)})$ is uniformly bounded in $[-H, H]$. 
Moreover, suppose that the function class $\fset$ satisfies the $\ell_2/\ell_4$ ratio condition:
$
\sup_{f\in {\cal F}} \frac{\|f-f_0\|_{2,4}}{\|f-f_0\|_{2,2}} \leq C.
$
If $\hat{f}$ is the output of  \enhanced, then, with probability at least \ $1-\zeta$,
\begin{talign}
    \frac{\sigma}{8} \|\hat{f}-f_0\|_{2,2}^2 
    =& \frac{1}{\sigma} O\left(\delta_{n/B,\zeta/B}^2\, C^2\, H^2\, \left(\|\mu\|_{4}^2 +  \frac{1}{B}\sum_{t=1}^B  \sqrt{\E\left[\|(Y-\hat{p}^{(t)}(X))^\top \hat{\gamma}^{(t)}(X)\|_2^4\right]} \right)\right) \\
    & +  
        \frac{1}{\sigma}O(\frac{1}{B}\sum_{t=1}^B\|(\gamma_{f_0,\hat{p}^{(t)}} - \hat{\gamma}^{(t)})^\top (\hat{p}^{(t)} - p_0)\|_{2,2}^2).
\end{talign}
\end{theorem}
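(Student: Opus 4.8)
The plan is to mirror the proof of \Cref{thm:no_data_split_ERM} while exploiting the independence that cross-fitting provides. First I would use $\sigma$-strong convexity of $\loss_D(\cdot, p_0)$ together with the first-order optimality of $f_0 = \argmin_{f\in\fset}\loss_D(f,p_0)$ over the convex set $\fset$ to obtain the self-bounding inequality $\frac{\sigma}{2}\twotwonorm{\hatf-f_0}^2 \le \loss_D(\hatf, p_0) - \loss_D(f_0, p_0)$, which reduces the theorem to controlling the excess \emph{oracle} population risk at $p_0$. Since the $\gamma$-correction is mean-zero at $p_0$, I can write $\loss_D(f,p_0) = \frac{1}{B}\sum_{t=1}^B \loss_D(f, p_0, \hatgammat)$ for every $f$, and this lets me decompose the excess risk fold-by-fold and recover the $\frac{1}{B}\sum_t$ averaging of the final bound.

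Next I would split each fold's contribution by adding and subtracting the conditional population risk $\loss_D(f, \hatpt, \hatgammat) = \E[\ell_{\hatgammat}(Z; f(X), \hatpt(X))\mid \text{out-of-fold data}]$, which is well defined precisely because cross-fitting makes $(\hatpt, \hatgammat)$ independent of the in-fold points $\{Z_i\}_{i\in P_t}$. This yields a \emph{nuisance-bias} term $\loss_D(\hatf, p_0, \hatgammat) - \loss_D(\hatf, \hatpt, \hatgammat) - (\loss_D(f_0, p_0, \hatgammat) - \loss_D(f_0, \hatpt, \hatgammat))$ and a \emph{conditional excess-risk} term $\loss_D(\hatf, \hatpt, \hatgammat) - \loss_D(f_0, \hatpt, \hatgammat)$. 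For the bias term I would Taylor-expand in the nuisance direction $\hatpt - p_0$ exactly as in the proof of \cref{lem:general}: the first-order part is annihilated up to the orthogonality defect between $\hatgammat$ and $\gamma_{f_0,\hatpt}$, and the integral remainder (the $U\sim\Unif([0,1])$ averaging baked into the definition of $\gamma_{f,p}$) leaves a quadratic term controlled by $\frac{1}{B}\sum_t \twotwonorm{(\gamma_{f_0,\hatpt} - \hatgammat)^\top(\hatpt - p_0)}^2$, which is precisely the second line of the stated bound.

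For the conditional excess-risk term I would pass to the in-fold empirical risk. Because $\hatf$ minimizes the total \enhanced objective, the summed empirical excess loss $\frac{1}{n}\sum_t\sum_{i\in P_t}\big(\ell_{\hatgammat}(Z_i;\hatf,\hatpt) - \ell_{\hatgammat}(Z_i;f_0,\hatpt)\big)$ is nonpositive, so it suffices to bound, uniformly over $f\in\fset$, the one-sided deviation between the in-fold empirical and conditional-population excess risks for each fold. Conditioning on the out-of-fold data, the class $\mathcal G(\hatpt,\hatgammat)$ (made star-shaped via the scalar $r\in[0,1]$) is deterministic and the in-fold sample consists of $n/B$ fresh i.i.d.\ points, so I would apply the localized Rademacher complexity / critical-radius machinery (e.g.\ \citet[Thm.~14.20]{wainwright_2019}) to get a fast-rate bound governed by $\delta_{n/B}$, then union-bound over the $B$ folds to produce the $\delta_{n/B,\zeta/B}$ factor and the $\log(c_1 B/\zeta)$ term. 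The $\ell_2/\ell_4$ ratio condition converts the multiplier-type fourth-moment control coming out of the localized analysis into the $\twotwonorm{\hatf - f_0}$ on the left, and combining this with $\sigma$-strong convexity in the self-bounding step promotes the critical radius to the squared rate $\delta_{n/B,\zeta/B}^2$.

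The multiplier factor requires bounding the gradient of the corrected loss, $\nabla_\phi \ell_{\hatgammat} = \nabla_\phi\ell + \hatgammat(x)^\top(Y - \hatpt(x))$: the deterministic base part contributes $\|\mu\|_4^2$, while the random correction multiplier contributes its fourth moment $\frac{1}{B}\sum_t \sqrt{\E[\|(Y-\hatpt(X))^\top \hatgammat(X)\|_2^4]}$, which together give the two summands inside the first $O(\cdot)$. I expect the main obstacle to be this empirical-process step: one must obtain a genuinely fast (order-$1/n$) localized bound fold-by-fold even though the minimizer $\hatf$ couples all folds through the shared objective, and one must verify that the data-dependent correction multiplier does not spoil the self-bounding argument. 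Handling this cleanly is exactly what forces the conditional-on-out-of-fold localized analysis followed by aggregation via a union bound, and it is where the $B$-dependence of $\delta_{n/B,\zeta/B}$ and the $\frac{1}{B}\sum_t$ averaging ultimately originate.
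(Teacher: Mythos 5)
Your proposal is correct and follows essentially the same route as the paper: a per-fold localized Rademacher bound for $\gset(\hatpt, \hatgammat)$ conditional on the out-of-fold nuisances (via \citet[Lem.~11]{foster2019orthogonal}), a union bound over the $B$ folds yielding $\delta_{n/B,\zeta/B}$, the averaged ERM optimality of $\hat{f}$ over the cross-fitted objective, the Cauchy--Schwarz split of the corrected-loss multiplier into $\|\mu\|_{4}$ plus the fourth moment of $(Y-\hatpt(X))^\top \hatgammat(X)$, the $\ell_2/\ell_4$ ratio condition, and a per-fold application of the strong-convexity/orthogonality argument of \cref{lem:general} followed by AM--GM. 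The one immaterial bookkeeping difference: your difference-of-differences bias term would, after Taylor expansion, produce cross-derivatives $\gamma_{f,\hatpt}$ at intermediate $f$ rather than at $f_0$, whereas the paper invokes \cref{lem:general} directly---using first-order optimality of $f_0$ so the cross term is evaluated exactly at $f_0$, and applying strong convexity to the corrected risk at $\hatpt$ (the stated hypothesis) rather than at $p_0$---which delivers the stated $(\gamma_{f_0,\hatpt}-\hatgammat)^\top(\hatpt - p_0)$ term without extra remainder.
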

The proof is found in \cref{sec:erm-proof}. 
Observe that, unlike \Cref{thm:no_data_split_ERM}, the function classes ${\cal G}(\hat{p}^{(t)}, \hat{\gamma}^{(t)})$ in the \cref{thm:erm} do not vary the teacher's model over ${\cal P}$ but rather evaluate $p$ at the specific out-of-fold estimates $\hatp^{(t)}$ and only vary $f\in {\cal F}$. Since in practice the teacher's model can be quite complex, removing this dependence on the sample complexity of the teacher's function space can bring immense improvement with the critical radius of ${\cal G}(\hat{p}^{(t)}, \hat{\gamma}^{(t)})$ significantly smaller than that of ${\cal G}$ from \Cref{thm:no_data_split_ERM}.

For instance, suppose that the loss function $\ell_{\hat{\gamma}^{(t)}}(z; f, \hat{p}^{(t)})$ is $L$-Lipschitz with respect to $f$ and that $\fset$ is a VC-subgraph class with VC dimension $d_{\cal F}$. Then the critical radius of the function class ${\cal G}(\hat{p}^{(t)}, \hat{\gamma}^{(t)})$ is of order $\sqrt{d_{\cal F}\log(n)/n}$ for any choice of $(\hat{p}^{(t)}, \hat{\gamma}^{(t)})$ \citep[see, e.g.,][Sec.~4.2]{foster2019orthogonal}.\footnote{In fact, under the Lipschitz condition alone and using contraction lemma arguments as in  \citet[Lem.~11]{foster2019orthogonal}, one can derive a version of \Cref{thm:erm} in which the upper bound depends only on the critical radius of the function class $\{r(f-f_0):f\in {\cal F}, r\in [0,1]\}$, which solely depends on the function space of the student.} However, under the same conditions, the critical radius of the teacher-student function class $\gset$ in \Cref{thm:no_data_split_ERM}  will still depend on the teacher's function space.
If ${\cal P}$ is also a VC-subgraph class with VC dimension $d_{\cal P} \gg d_{\cal F}$, then the critical radius of  ${\cal G}$ will be of the much larger order $\sqrt{d_{\cal P}\log(n)/n}$.

We can also see in the bound of \Cref{thm:erm} the interplay between bias and variance introduced by $\gamma$. In particular, the part of the bound that depends on $\hat{\gamma}^{(t)}$ can be further simplified as
\begin{talign}\label{eqn:tradeoff-erm}
    \sqrt{\E[\delta_{n,\zeta}^4\, C^4\left\|(Y-\hat{p}(X))^\top \hat{\gamma}^{(t)}(X)\right\|_2^4 +  \|(\gamma_{\hat{p},0}(X) - \hat{\gamma}^{(t)}(X))^\top (\hat{p}(X) - p_0(X))\|_{2}^4]},
\end{talign}
where the terms respectively encode the increase in variance and decrease in bias from employing loss correction.
Notably, \cref{thm:erm} implies that CF without $\gamma$-correction (i.e., $\hat{\gamma}^{(t)}(x) = 0$) is sufficient to reduce student error due to teacher overfitting but may still be susceptible to excessive student error due to teacher underfitting.
These qualitative predictions accord with our experimental observations in \cref{sec:experiments} and \cref{fig:tabular_vary_teacher_app}.

\subsection{Biased stochastic gradient descent analysis}
\label{sec:sgd}
When the set of candidate prediction rules $f_\theta$ is parameterized by a vector $\theta\in\reals^d$, we may alternatively fit $\theta$ via stochastic gradient descent (SGD) \citep{robbins1951,bottou2008tradeoffs} on the $\gamma$-corrected objective $\loss_D(f_{\theta}, \hat{p}, \hat{\gamma})$.
With a minibatch size of $1$ and a starting point $\theta_0$, the parameter updates take the form
\begin{talign}\label{eq:sgd-updates}
    \theta_{t+1} = \theta_{t} - \eta_t \grad_{\theta}f_{\theta}(X_t)^\top\grad_{\phi} \ell_{\gamma}(W_t; f_{\theta}(X_t), p(X_t)) \qtext{for} t+1 \in [n].
\end{talign}
Ideally, these updates would converge to a minimizer of the ideal risk ${\cal L}(\theta; p_0) = \loss_D(f_{\theta}, p_0)$.
Our next result shows that, if the teacher $\hatp$ is independent of $(W_t)_{t\in [n]}$, then the SGD updates \cref{eq:sgd-updates} have excess ideal risk governed by a bias term $\zeta(\hat{\gamma})$ and a variance term $\sigma(\hat{\gamma})^2/n$.
Here, $\sigma_0^2(\theta)$ represents the baseline stochastic gradient variance that would be incurred if SGD were run directly on the ideal risk ${\cal L}(\theta; p_0)$ rather than our surrogate risk.
Our proof in \cref{sec:sgd-proof} builds upon the biased SGD bounds of \citet{ajalloeian2020analysis}.

\begin{theorem}[Biased SGD analysis]\label{thm:biasedsgd}
Suppose that the loss ${\cal L}(\theta;p_0)$ is $\lambda$-strongly smooth in $\theta$. 
Define the bias and root-variance parameters
\begin{talign}
    \zeta(\hat{\gamma}) \defeq& \textstyle\, \sup_{\theta\in \R^d} \norm{\grad_{\theta}f_{\theta}^\top(\gamma_{f_\theta, \hatp} - \hat{\gamma})^\top (\hatp - p_0)}_{2,2}\\
    \sigma(\hat{\gamma}) \defeq& \sup_{\theta\in \R^d}\textstyle\sigma_0(\theta)
        + \sqrt{\E[\twonorm{\grad_{\theta} f_{\theta}(X)^\top\gamma(X)^\top (Y -p_0(X))}^2]
        + \norm{\grad_{\theta}f_{\theta}^\top(\gamma_{f_\theta, \hatp} - \hat{\gamma})^\top (\hatp - p_0)}_{2,2}^2}
\end{talign}
for $\sigma_0^2(\theta) \defeq \sum_{i\in [d]} \Var[\grad_{\theta_i} \ell(W; f_\theta(X), p_0(X))]$ the unbiased SGD variance.
If $F_0={\cal L}(\theta_0; p_0) - \min_{\theta\in\reals^d}{\cal L}(\theta; p_0)$, then the iterates $\{\theta_t\}_{t=1}^n$ of the loss corrected SGD algorithm satisfy
\begin{talign}
    \min_{t\in [n]} \E\left[\|\grad_{\theta} {\cal L}(\theta_t;p_0)\|_2^2\right] =~&  O\left(\frac{\sigma(\hat{\gamma}) \sqrt{\lambda F_0}}{\sqrt{n}} + \zeta^2(\hat{\gamma})\right).
\end{talign}
If, in addition, ${\cal L}(\theta;p_0)$ is $\mu$-strongly convex in $\theta$, then the iterates satisfy
\begin{talign}
    \E\left[{\cal L}(\theta_n; p_0) - \min_{\theta\in\reals^d}{\cal L}(\theta; p_0)\right] = \frac{1}{\mu} O( \frac{\lambda}{\mu} \frac{\sigma(\hat{\gamma})^2}{n} + \zeta^2(\hat{\gamma})) + O\left(F_0\, e^{-\frac{\mu}{2\lambda}\, n}\right).
\end{talign}
\end{theorem}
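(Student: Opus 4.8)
The plan is to view the recursion \cref{eq:sgd-updates} (instantiated with $\gamma=\hatgamma$ and $p=\hatp$) as \emph{biased} SGD on the ideal risk ${\cal L}(\theta;p_0)$ and then invoke the biased-SGD guarantees of \citet{ajalloeian2020analysis}. I would write the per-step stochastic gradient as $g_t\defeq\grad_\theta f_{\theta_t}(X_t)^\top\grad_\phi\ell_{\hatgamma}(W_t;f_{\theta_t}(X_t),\hatp(X_t))$. Conditioning on the filtration generated by $W_{1:t-1}$ and $\hatp$, and using that $\hatp$ is independent of $(W_t)_{t\in[n]}$, the fresh sample $W_t$ averages out to give $\E[g_t\mid\theta_t,\hatp]=\grad_\theta\loss_D(f_{\theta_t},\hatp,\hatgamma)$, the gradient of the surrogate population risk. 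I then decompose $g_t=\grad_\theta{\cal L}(\theta_t;p_0)+b(\theta_t)+\noise_t$, where $b(\theta_t)\defeq\grad_\theta\loss_D(f_{\theta_t},\hatp,\hatgamma)-\grad_\theta{\cal L}(\theta_t;p_0)$ is a deterministic bias and $\noise_t$ is conditionally mean-zero noise.

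The crux of the argument is identifying $b(\theta)$ with the quantity controlled by $\zeta(\hatgamma)$. Differentiating the two objectives gives $b(\theta)=\E[\grad_\theta f_\theta^\top(\grad_\phi\ell(Z;f_\theta,\hatp)-\grad_\phi\ell(Z;f_\theta,p_0)+\hatgamma^\top(Y-\hatp))]$. For the difference of $\phi$-gradients I would apply the fundamental theorem of calculus along the segment $U\hatp+(1-U)p_0$, $U\sim\Unif([0,1])$, which writes it as an integral of the cross-partial Jacobian; taking the conditional expectation over $Z$ given $X$ and recognizing the resulting path average as $\gamma_{f_\theta,\hatp}$ per \cref{eq:q-defn} turns this term into $\gamma_{f_\theta,\hatp}^\top(\hatp-p_0)$. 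Since $\E[Y-\hatp\mid X]=p_0-\hatp$, the correction contributes $-\hatgamma^\top(\hatp-p_0)$, and the two combine to $b(\theta)=\E[\grad_\theta f_\theta^\top(\gamma_{f_\theta,\hatp}-\hatgamma)^\top(\hatp-p_0)]$. Jensen's inequality then bounds $\twonorm{b(\theta)}\leq\zeta(\hatgamma)$ uniformly in $\theta$.

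Next I would control the conditional second moment of $\noise_t$. Splitting $g_t$ into the ideal-risk gradient, the correction gradient $\grad_\theta f_{\theta_t}(X_t)^\top\hatgamma(X_t)^\top(Y_t-p_0(X_t))$, and the plug-in perturbation $\grad_\theta f_{\theta_t}(X_t)^\top(\grad_\phi\ell(W_t;f_{\theta_t},\hatp)-\grad_\phi\ell(W_t;f_{\theta_t},p_0))$, and applying $\E\twonorm{V}^2=\Var(V)+\twonorm{\E V}^2$ together with the uniform third-derivative control, yields $\E[\twonorm{\noise_t}^2\mid\theta_t]\leq\sigma(\hatgamma)^2$. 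Here $\sigma_0^2(\theta)$ captures the baseline variance already incurred when SGD is run directly on ${\cal L}(\theta;p_0)$, the term $\E[\twonorm{\grad_\theta f_\theta^\top\gamma^\top(Y-p_0)}^2]$ is the extra variance injected by the correction, and the residual $\twotwonorm{\grad_\theta f_\theta^\top(\gamma_{f_\theta,\hatp}-\hatgamma)^\top(\hatp-p_0)}^2$ term is the squared bias folded back into the second-moment bound.

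With the uniform additive bias bound $\twonorm{b}\leq\zeta(\hatgamma)$ and variance bound $\sigma(\hatgamma)^2$ in hand, both displays follow from \citet{ajalloeian2020analysis}: the $\lambda$-strong-smoothness descent lemma yields the non-convex rate $\min_{t\in[n]}\E\twonorm{\grad_\theta{\cal L}(\theta_t;p_0)}^2=O(\sigma(\hatgamma)\sqrt{\lambda F_0}/\sqrt{n}+\zeta^2(\hatgamma))$, and adding $\mu$-strong convexity upgrades this to the geometric-contraction bound with $F_0e^{-\mu n/(2\lambda)}$ transient, $\tfrac{\lambda}{\mu}\tfrac{\sigma(\hatgamma)^2}{n}$ variance floor, and $\zeta^2(\hatgamma)$ bias floor. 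I expect the main obstacle to be the bias identity in the second step: correctly tracking transpose conventions and conditional expectations so that the cross-partial integral collapses cleanly onto $\gamma_{f_\theta,\hatp}$ and cancels against the $\hatgamma$-correction to leave exactly the $\zeta(\hatgamma)$ expression. The variance bound and the final appeal to \citet{ajalloeian2020analysis} are comparatively mechanical.
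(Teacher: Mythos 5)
Your proposal matches the paper's proof essentially step for step: the same decomposition of the stochastic gradient into the ideal gradient $\grad_{\theta}{\cal L}(\theta;p_0)$, a deterministic bias $\bias(\theta)$, and conditionally mean-zero noise; the same Taylor/integral-remainder identification $\bias(\theta)=\E[\grad_{\theta}f_{\theta}^\top(\gamma_{f_\theta,\hatp}-\hatgamma)^\top(\hatp-p_0)]$ bounded by $\zeta(\hatgamma)$ via Jensen; the same law-of-total-variance plus Cauchy--Schwarz treatment of the noise second moment; and the same final appeal to Theorems~2 and~3 of \citet{ajalloeian2020analysis}. The only cosmetic difference is that you invoke the uniform third-derivative bound in the variance step, which is unnecessary---the paper's variance lemma bounds $\Var[Z_i]$ by Cauchy--Schwarz directly on the path-averaged $\gamma_{f_\theta,\hatp}$ representation, reserving the third-derivative control for the alternative $q_{f_\theta,\hatp}$-based bias bound.
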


Similar to \Cref{thm:erm}, the bound in \Cref{thm:biasedsgd}, portrays the interplay of bias and variance as $\hat{\gamma}$ ranges from $0$ to $q_{f_{\theta},\hat{p}}$ (recall that $q_{f_{\theta},\hat{p}}$ is independent of $f_{\theta}$ for any Bregman loss). In particular, the part of the bound for strongly convex losses that depends on $\hat{\gamma}$ can be further simplified to:
\begin{talign}\label{eqn:tradeoff-sgd}
    \E\left[\left(\frac{\lambda\, \|\hat{\gamma}(X)\|_2^2 \|Y-p_0(X)\|_2^2}{\mu\, n} +  \|(\gamma_{f_{\theta}, \hat{p}}(X) - \hat{\gamma}(X))^\top\, (\hat{p}(X) - p_0(X))\|_2^2 \right) \|\grad_{\theta} f_{\hat{\theta}}(X)\|_{2}^2\right]
\end{talign}
This has a very intuitive form: the first term is the impact of $\hat{\gamma}(X)$ on the variance, which is also related to the square of the noise of $y$, divided by the standard error scaling. The second controls how $\hat{\gamma}$ improves the bias introduced by the error in the teacher's $\hat{p}$. 

\section{Experiments}
\label{sec:experiments}
We complement our theoretical analysis with a pair of experiments
demonstrating the practical benefits of cross-fitting and loss correction on six real-world classification tasks.
Throughout, we use the \SEL loss and report mean performance $\pm$ 1 standard error across 5 independent runs.
Code to replicate all experiments can be found at
\begin{center}\url{https://github.com/microsoft/semiparametric-distillation}, %
\end{center}
and supplementary experimental details and results can be found in \cref{sec:experiment-details}.

\textbf{Selecting the loss correction matrix $\hat{\gamma}$}
Motivated by the analyses in \cref{sec:theory}, for each training point $(x,y)$, we will select our correction matrix $\hat{\gamma}(x)$ to balance bias and variance by minimizing a pointwise upper bound on the loss correction error \cref{eqn:tradeoff-sgd}
(ideally with a closed-form solution to avoid excessive computational overhead).\footnote{Balancing the bias and variance terms \cref{eqn:tradeoff-erm} of \cref{thm:erm} yields a similar objective.}
To eliminate dependence on the unobserved $p_0$, we observe that the bias term $\|(\gamma_{f_{\theta}, \hat{p}}(x) - \hat{\gamma}(x))^\top\, (\hat{p}(x) - p_0(x))\|_2^2= O(\opnorm{q_{f_{\theta}, \hat{p}}(x) - \hat{\gamma}(x)}^2)$ up to additive terms independent of $\hat{\gamma}$.
We introduce a tunable hyperparameter $\alpha > 0$ to trade off between this bias bound and the variance term in \cref{eqn:tradeoff-sgd} and select $\hat{\gamma}(x)=\diag(v(x))$ to minimize:
\begin{talign}\label{eqn:gamma-objective}
\E[\|\hat{\gamma}(x)(y - \hat{p}(x))\|_2^2\mid x] + \alpha \opnorm{q_{f_{\theta}, \hat{p}}(x) - \hat{\gamma}(x)}^2
    =
    \E[\|v(x)(y - \hat{p}(x))\|_2^2\mid x] + \alpha \|\frac{1}{\hat{p}(x)} - v(x)\|_2^2.
\end{talign}
Since the conditional expectation involves the unknown quantity $p_0$, we estimate $\E[\|v(x)(y - \hat{p}(x))\|_2^2\mid x]$ with its sample $\|v(x)(y-\hat{p}(x))\|_2^2$.\footnote{An alternative estimate that performs slightly worse is  $\|v(x)\hat{p}(x) (1-\hat{p}(x))\|_2^2$.}
This objective is quadratic in $v(x)$ and thus has a closed-form solution.
Given $\hat{\gamma}(x)$, the student's loss-corrected objective is equivalent to a square loss with labels $\log(p(x)) + \hat{\gamma}(x)^\top (y- p(x))$.

\textbf{Tabular data.}
We first validate our KD enhancements on five real-world tabular
datasets---FICO \citep{FICO}, StumbleUpon
\citep{Evergreen,liu2017mlbench}, and Adult, Higgs, and MAGIC from  \citet{Dua:2017}---with random forest \citep{breiman2001random} students and teachers.
In \cref{fig:tabular_overfit}, we examine the impact of varying student model capacity for a fixed  high-capacity teacher with 500 trees on FICO.
This setting lends itself to teacher overfitting, and we find that cross-fitting consistently improves upon vanilla KD by up to 4 AUC percentage points.
In 
\cref{fig:tabular_bias} we explore the impact of teacher underfitting by limiting the teacher's
maximum tree depth on Adult.
Here we observe consistent gains from loss correction with student performance exceeding even that of the teacher for smaller maximum tree depths.
Analogous results for the remaining datasets can be found in \cref{sec:tabular-details}.

\begin{figure}[ht]
  \centering
  \begin{subfigure}{0.49\linewidth}
    \includegraphics[width=\textwidth]{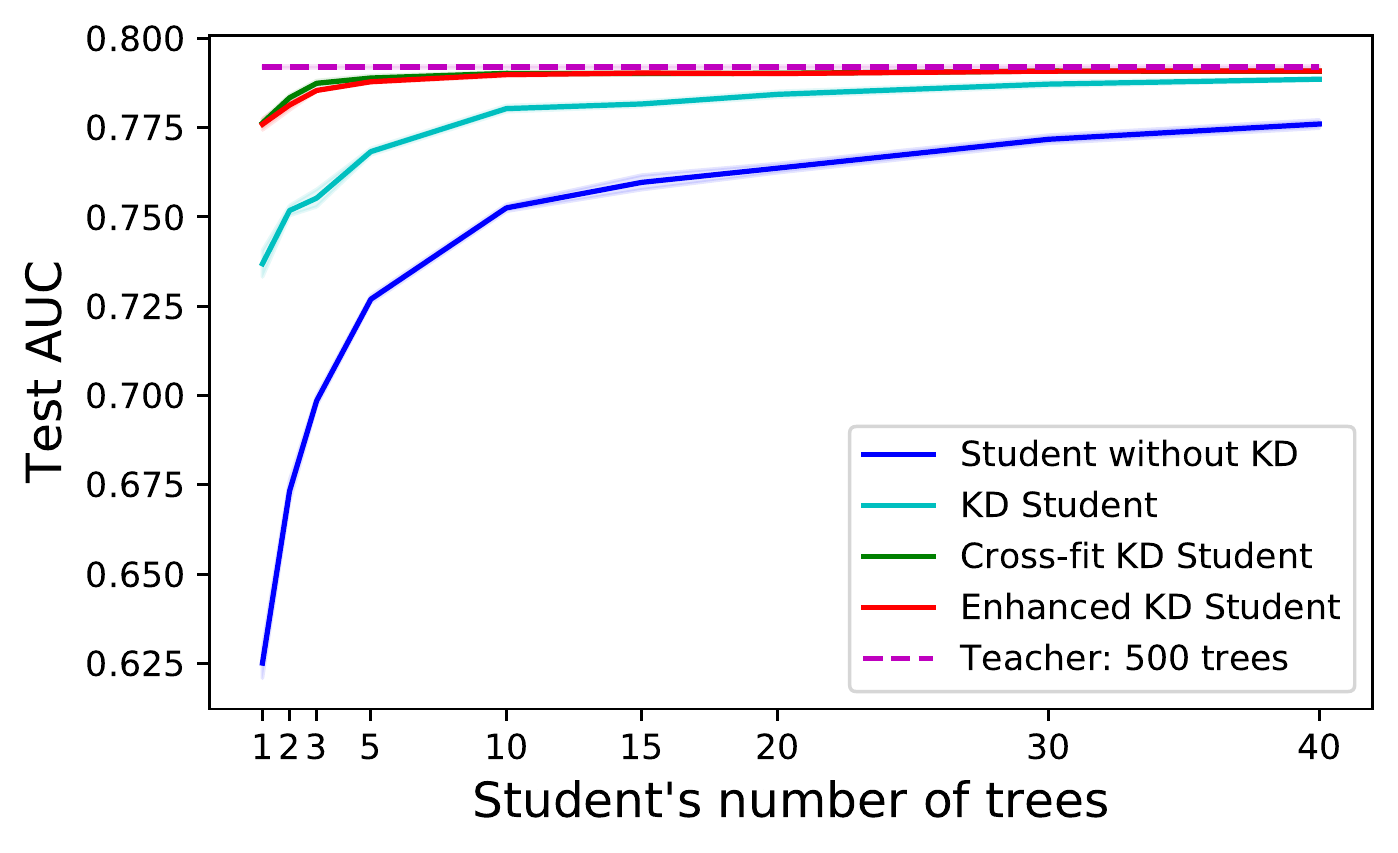}
    \caption{\small\label{fig:tabular_overfit}FICO dataset, when teacher overfits}
  \end{subfigure}\hfill%
  \begin{subfigure}{0.49\linewidth}
    \includegraphics[width=\textwidth]{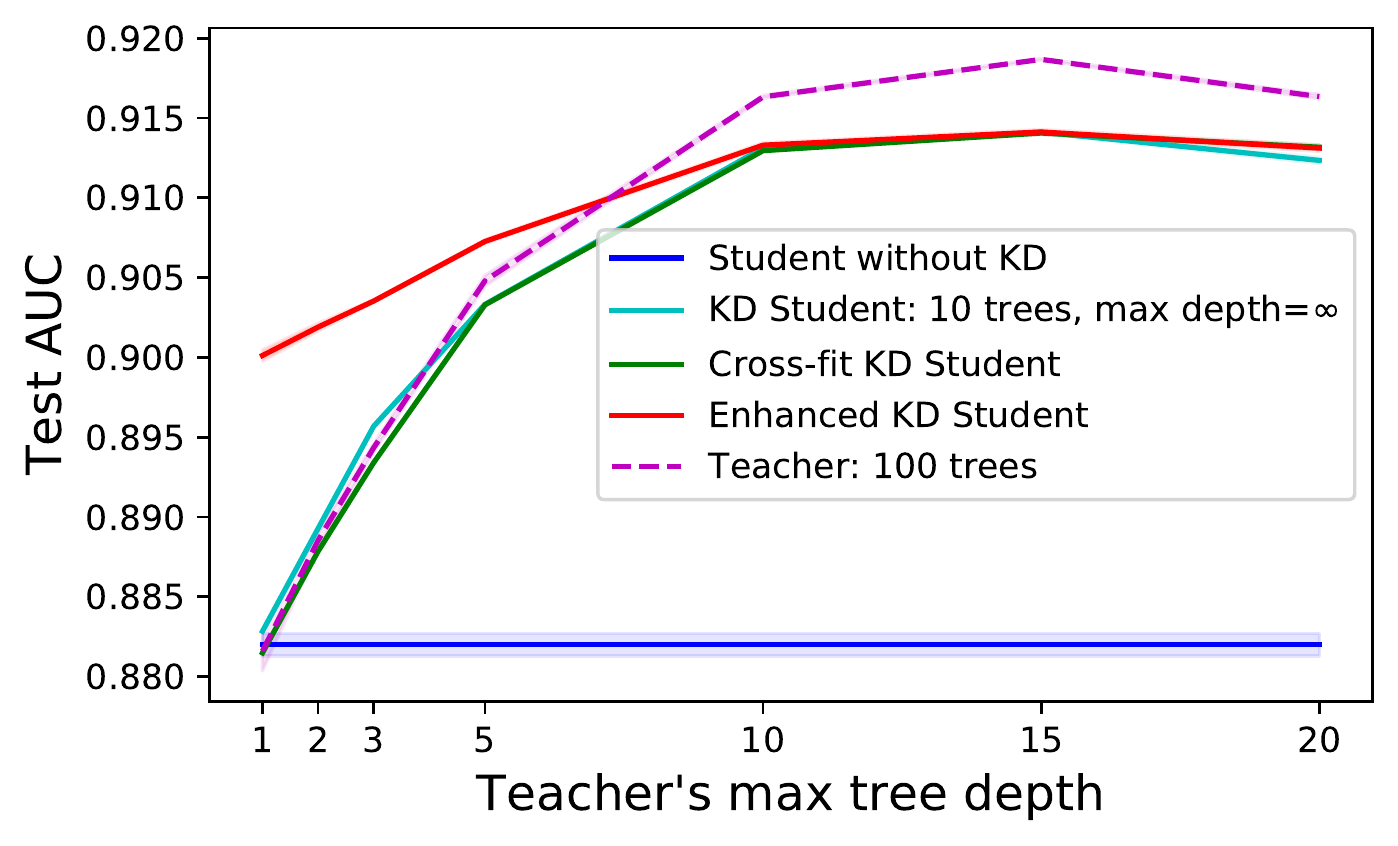}
    \caption{\small\label{fig:tabular_bias}Adult dataset, when teacher underfits}
  \end{subfigure}%
  \vspace{-0.7em}
  \caption{\small \label{fig:tabular_vary_teacher}
  For random forest students and teachers, cross-fitting improves student performance when the teacher overfits, while loss correction improves student
    performance when the teacher underfits.}
\end{figure}

\textbf{Image data.}
We next validate our KD enhancements on the image classification dataset CIFAR-10 \citep{krizhevsky2009learning}. %
We pair a residual network (ResNet-8) student with teacher networks of varying depths (ResNet-14/20/32/44/56) \citep{he2016deep}.
It has been observed that larger and deeper teachers need not yield better
students, as the teacher might overfit to the training set
\citep{cho2019efficacy,muller2019does}.
To induce this overfitting, we turn off data augmentation (random horizontal
flipping and cropping).
We compare students trained with \vanilla and \enhanced with and without  loss correction in \cref{fig:cifar10}.
We find that cross-fitting consistently reduces the effect of teacher overfitting with largest impact realized for the deepest models. 
This effect is most evident in the cross-entropy test loss, where the \vanilla student incurs significantly larger loss than the cross-fitted student.
For both accuracy and test loss, employing loss correction on top of cross-fitting provides an additional small performance boost. 
\begin{figure}[ht]
  \begin{subfigure}{0.49\linewidth}
    \includegraphics[width=\textwidth]{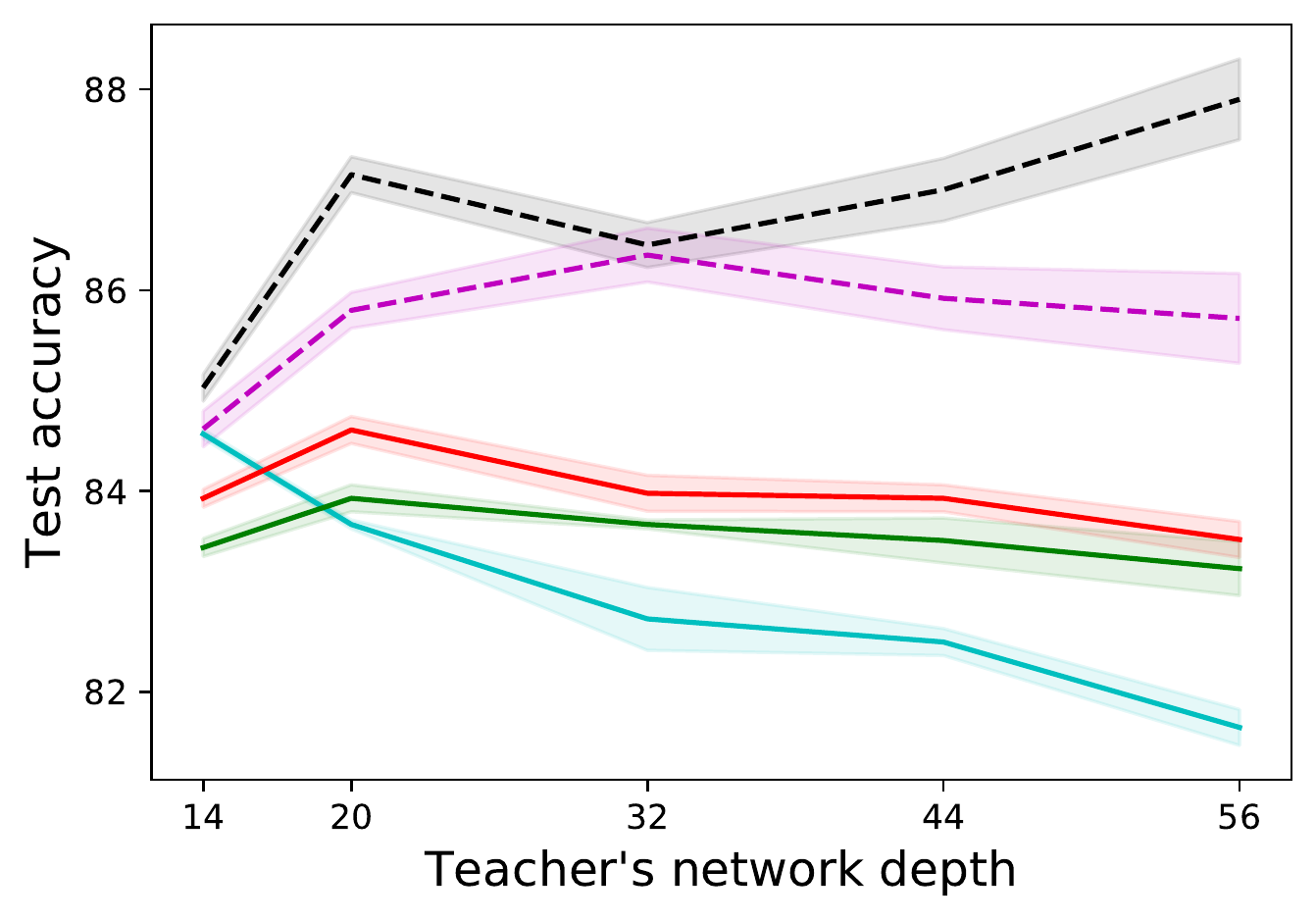}
  \end{subfigure}\hfill%
  \begin{subfigure}{0.49\linewidth}
    \includegraphics[width=\textwidth]{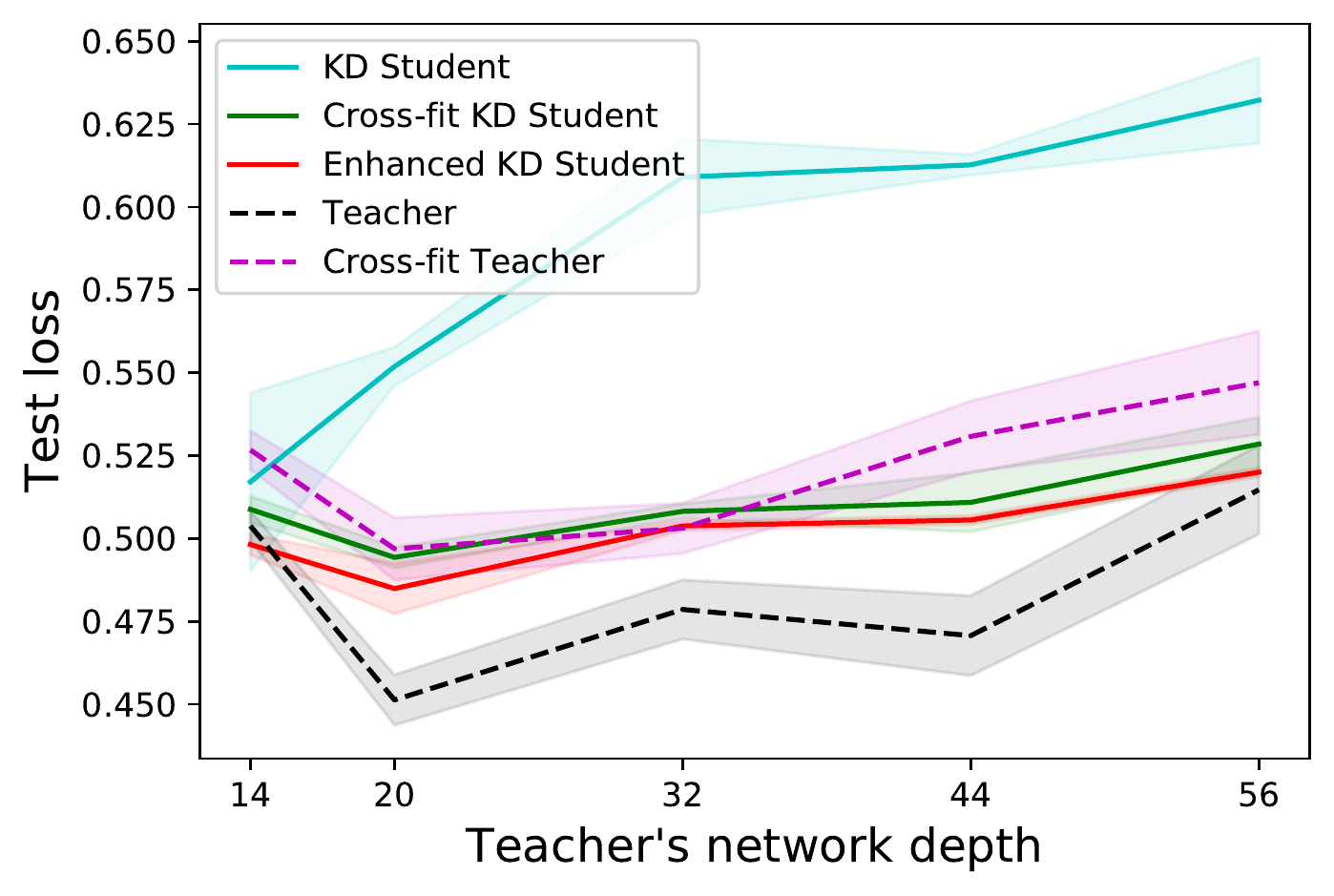}
  \end{subfigure}%
  \vspace{-0.7em}
  \caption{\small\label{fig:cifar10}On CIFAR-10 with ResNet students and teachers, cross-fitting reduces the effect of teacher overfitting, and loss correction yields an additional small performance boost.
  Here, the test loss is cross-entropy.}
\end{figure}

\textbf{Effect of the loss correction hyperparameter $\alpha$.} Our hyperparameter
$\alpha$ controls the tradeoff between bias and variance in loss correction.
When $\alpha$ is very small, the objective is close to the vanilla KD objective.
When $\alpha$ is large, the objective is closer to the Neyman-orthogonal loss.
In~\Cref{fig:ablation_alpha}, we show the effect of varying $\alpha$, with
ResNet-8 as the student and ResNet-20 as the teacher, on the CIFAR-10 dataset.
Large values of $\alpha$ lead to high variance and thus lower test
accuracy.
Intermediate values of $\alpha$ improves on both the \vanilla objective, which
corresponds to $\alpha = 0$ and on the orthogonal objective ($\alpha = \infty$).
The test accuracy drops sharply beyond some threshold of $\alpha$ as the
variance becomes too high (due to the terms
$q_{\hat{p}}(x) = \diag \left( \frac{1}{\hat{p}_1(x)}, \dots, \frac{1}{\hat{p}_K(x)} \right)$),
causing training to become unstable.
\begin{figure}[ht]
  \begin{subfigure}{0.49\linewidth}
    \includegraphics[width=\textwidth]{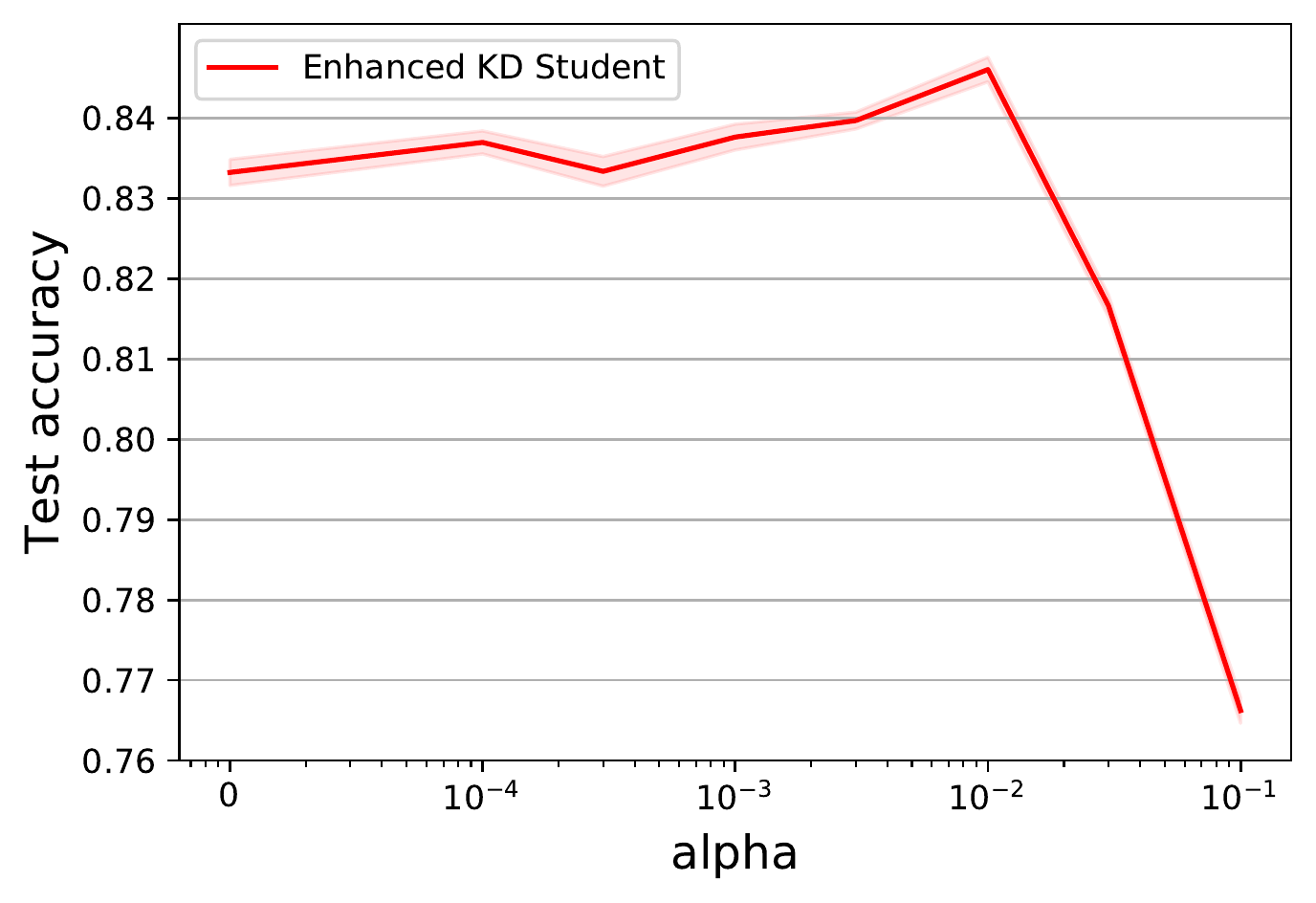}
  \end{subfigure}\hfill%
  \begin{subfigure}{0.49\linewidth}
    \includegraphics[width=\textwidth]{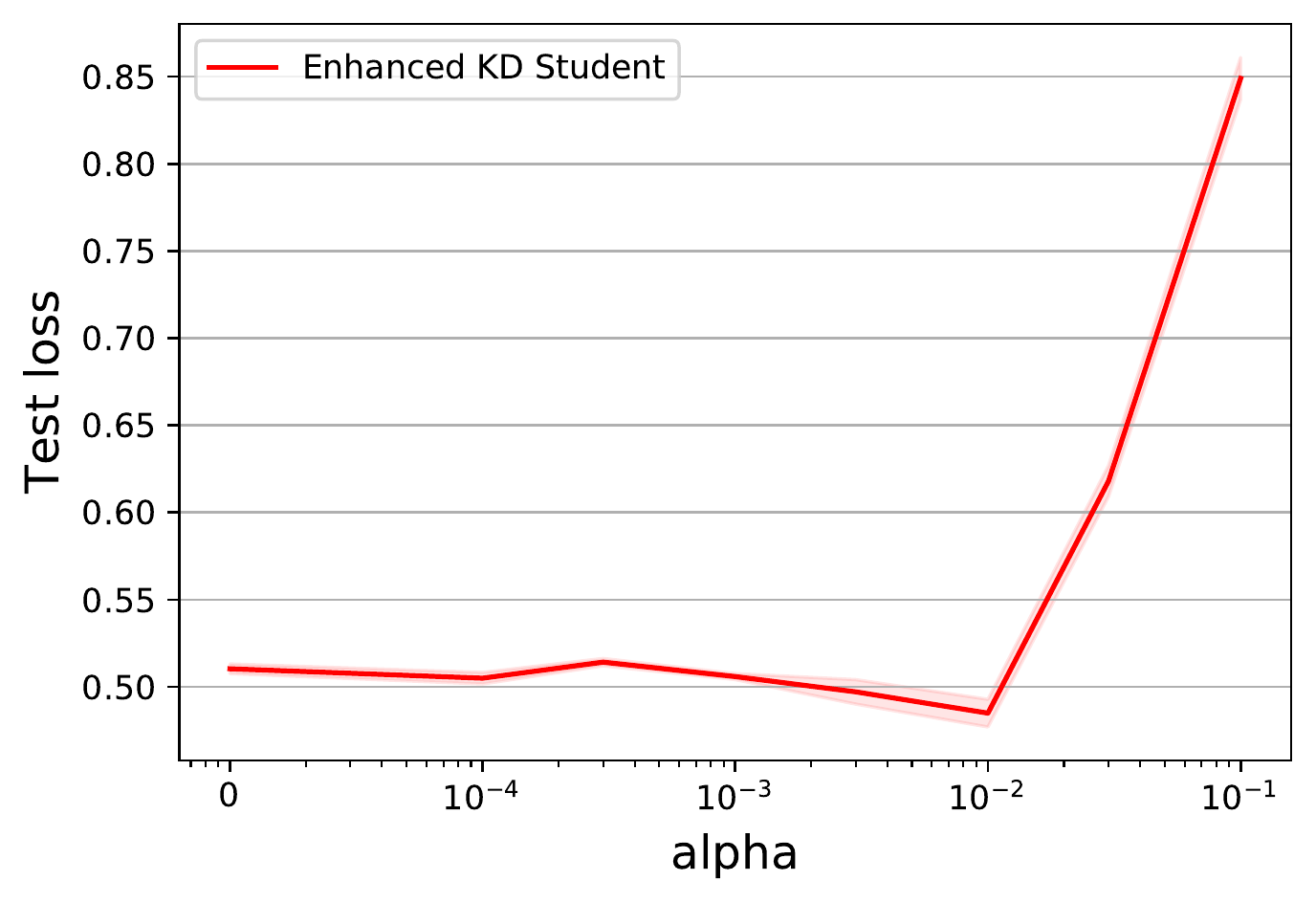}
  \end{subfigure}%
  \caption{\small\label{fig:ablation_alpha}
  On CIFAR-10 with ResNet students and teachers, large values of the loss correction hyperparameter
    $\alpha$ (corresponding to the orthogonal loss correction) lead to large variance and training instability, while intermediate
    values improve upon cross-fit KD without loss correction ($\alpha = 0 $).
    Here, the test loss is cross-entropy.}
\end{figure}

\section{Conclusion}
We developed a new analysis of knowledge distillation under the lens of semiparametric inference.
By framing the KD process as learning with plug-in estimation in the presence of nuisance, we obtained new generalization bounds for distillation and new lower bounds highlighting the susceptibility of KD to teacher overfitting and underfitting.
To address these failure modes, we introduced two complementary KD enhancements---cross-fitting and loss correction---which improve student performance both in theory and in practice.
Past work has shown that augmenting the student training set with synthetic data from a generative model (e.g., a generative adversarial network \citep{liu2018teacher} or MUNGE \citep{bucila2006model}) often leads to improved student performance.
A natural next step is to prove an analogue of \cref{thm:erm} for synthetic augmentation to understand when this strategy successfully mitigates the impact of teacher overfitting.
In addition, two tantalizing open questions are, first, whether other techniques from semiparametric inference, such as targeted maximum likelihood \citep{van2006targeted}, can be used to improve KD performance
and, second, whether a semiparametric perspective can explain the surprising success of self-distillation \citep{furlanello2018born} and
noisy student training \citep{xie2020self} through which students routinely outperform their teachers.
\bibliography{refs}
\bibliographystyle{iclr2021_conference}

\newpage
\appendix
\begin{adjustwidth}{-2cm}{-2cm}

\section{Extended literature review}\label{sec:lit-review-notes}
We point the interested reader to \citet{gou2020knowledge} for a sweeping survey of the many developments in knowledge distillation over the past half decade. 
In addition to the references discussing theoretical aspects of knowledge distillation provided in \cref{sec:intro}, we highlight here a number of empirical investigations of why distillation works.
\citet{cho2019efficacy} show that larger teacher models do 
not necessarily improve the performance of student models as parsimonious
student models are not able to mimic the teacher model. They 
suggest early stopping in training large teacher neural networks 
as means of regularizing. \citet{cheng2020explaining} demonstrate that when
applied to image data, distillation allows the student neural net to learn
multiple visual concepts simultaneously, while, when learning from raw data, neural networks learn concepts sequentially.

Knowledge distillation has also been used for adversarial attacks 
\citep{papernot2016distillation, ross2017improving, gil2019white, goldblum2020adversarially},
data security
\citep{papernot2016semi, lopes2017data, wang2019private}, 
image processing 
\citep{li2017learning, wang2017model, chen2018knowledge, li2017mimicking},
natural language processing 
\citep{nakashole2017knowledge, mou2016distilling, hu2018attention, freitag2017ensemble}, and speech processing
\citep{chebotar2016distilling, lu2017knowledge, watanabe2017student, oord2018parallel, shen2018feature}.

\section{Glossary}
\label{sec:glossary}

\begin{table}[!h]
  \caption{Glossary of notation}
  \centering
  \begin{tabular}{@{}lll@{}}
    \toprule
    & Notation        & Definition \\
    \midrule
    & $\ell(Z; f(X), p_0(X))$ & Loss function on a random data point            \\
    & Population risk $\loss_D(f, p)$      & $\E\left[\ell(Z; f(X), p(X))\right]$            \\
    & Empirical risk $\loss_n(f, p)$      & $\E_n\left[\ell(Z; f(X), p(X))\right]$            \\
    & Population optimal student model $f_0$ & $\argmin_{f\in {\cal F}} L_D(f, p_0)$            \\
    & Empirical optimal student model $\hat{f}$ & $\argmin_{f \in {\cal F}} L_n(f, \hat{p})$ \\
    & $\|f\|_{p, q}$ & $\| \|f(X)\|_p \|_{L^q} = \E\left[\|f(X)\|_p^q\right]^{1/q}$ \\
    & $\grad_\phi$ & Partial derivative of $\ell(z; \phi, \pi)$ with respect to the second input \\
    & $\grad_\pi$ & Partial derivative of $\ell(z; \phi, \pi)$ with respect to the third input \\
    &  $\grad_{\phi\pi}$ &  $[\grad_{\phi\pi} \ell(z; \phi, \pi)]_{i,j} = \frac{\partial^2}{\partial \phi_j \partial \pi_i}\ell(z; \phi, \pi)$ \\
    & $q_{f,p}(x)$ & $\E\left[\grad_{\phi\pi} \ell(Z; f(X), p(X)) \mid X=x\right]$ \\
    & $\gamma_{f,p}(x)$ & $\E_{U\sim\Unif([0,1])}[q_{f,U p + (1-U) p_0}(x)]$ \\
    & ${\cal R}(\delta; {\cal F})$ & Localized Rademacher complexity of function class ${\cal F}$ \\
    & $\delta_n$ & Critical radius \\
    & $\gamma$-corrected loss $\ell_{\gamma}(z; f(x), p(x))$ & $\ell(z; f(x), p(x)) + (y - p(x))^\top\gamma(x)f(x)$ \\
    & Population $\gamma$-risk $\loss_D(f, p, \gamma)$      & $\E\left[\ell_\gamma(Z; f(X), p(X))\right]$            \\
    & Empirical $\gamma$-risk $\loss_n(f, p, \gamma)$      & $\E_n\left[\ell_\gamma(Z; f(X), p(X))\right]$            \\
    \bottomrule
  \end{tabular}
  \label{tab:glossary}
\end{table}

\section{Proof of \cref{thm:no_data_split_ERM}: Vanilla distillation analysis} \label{sec:erm-reuse-proof}
Introduce the shorthand $\ell_{f, \hat{p}}(z) = \ell(z; f(x), \hat{p}(x))$.
Since $\delta_n$ upper bounds the critical radius of the function class ${\cal G}$, the localized Rademacher analysis of  \citet[Lem.~11]{foster2019orthogonal} implies\footnote{We apply \citet[Lem.~11]{foster2019orthogonal} with ${\cal L}_g = g$ for $g\in {\cal G}$ with $g^*=0$. Then we instantiate the concentration inequality for the choice $g=\ell_{\hat{f}, \hat{p}} - \ell_{f_0, \hat{p}} \in {\cal G}$.}
\begin{equation}
\left|\loss_n(\hat{f}, \hat{p}) - \loss_n(f_0, \hat{p}) - (\loss_D(\hat{f}, \hat{p}) - \loss_D(f_0, \hat{p}))\right| 
    \leq 
O\left(H\delta_{n,\zeta} \|\ell_{\hat{f}, \hat{p}} - \ell_{f_0, \hat{p}}\|_{2,2} + H\delta_{n,\zeta}^2\right)
\end{equation}
with probability at least $1-\zeta$.
Moreover, by Cauchy-Scwharz,
\begin{align}
    \|\ell_{\hatf, \hat{p}} - \ell_{f_0, \hat{p}}\|_{2,2} \leq~& \|\mu\|_{4}\, \|\hatf-f_0\|_{2,4}.
\end{align}
By the assumed $\ell_2/\ell_4$ ratio condition we therefore have
\begin{equation}
    \epsilon(\hat{f}, \hat{p}, \gamma) \leq O\left(\delta_{n,\zeta}\, CH\, \|\mu\|_{4}\, \|\hatf-f_0\|_{2,2} + H\delta_{n,\zeta}^2\right).
\end{equation}
Plugging this bound into \Cref{lem:general} (which holds irrespective of whether data re-use, sample splitting, or cross-fitting is employed) and applying the arithmetic-geometric mean inequality yields
\begin{align}
    \frac{\sigma}{8} \|\hat{f}-f_0\|_{2,2}^2 \leq \frac{1}{\sigma} O\left(\delta_{n,\zeta}^2\, C^2H^2\, \|\mu\|_{4}^2 +  \|\gamma_{\hat{p},0} ^\top (\hat{p} - p_0)\|_{2,2}^2\right)
\end{align}

\section{Proof of \cref{underfitting}: Impact of teacher underfitting on vanilla distillation}
\label{sec:underfitting-proof}
Suppose that $p_0$ does not vary with $x$ and, for known $\eps > 0$, 
belongs to the set
\begin{talign}\label{eq:eps-pset}
\pset = \{ p\ :\ p_j(x) \in [\eps, 1], \ \forall x \in  \xset,\ j \in [k]\}.
\end{talign}
As all quantities in this proof are independent of $x$, we will omit the dependence on $x$ whenever convenient.

Consider the constant teacher estimate $\hatp = %
\frac{\bar{y}}{1+\lam}\vee \eps$  
obtained via ridge regression with %
regularization strength $\lam \leq 1$ and $\bar{y} \defeq \frac{1}{n}\sum_{i=1}^n y_i$.
A constant student prediction rule in 
\begin{talign}
\fset = \{ f\ :\ f_j(x) \in [\log(\eps), 0], \ \forall x \in  \xset,\ j \in [k]\}
\end{talign} 
trained via \vanilla with \SEL loss yields $\hatf(x) = \log(\hatp)$.

Suppose that, unbeknownst to the teacher and student, the true $p_0$ satisfies the more stringent condition $p_{0,j} \geq 2\eps$ for all $j\in[k]$.
Then the student satisfies
\begin{talign}
f_{0} - \hatf
    &= \log(p_{0}) - \log(\hatp_{})
    \geq \diag(\frac{1}{p_{0}})(p_{0} - \hatp)
    = \gamma_{f_0, p_0}^\top (p_{0} - \hatp)
    = \gamma_{f_0, p_0}^\top (\frac{\lam p_{0} - (\bar{y}_{} - p_{0})}{(1+\lam)}
    + \min(0, \frac{\bar{y}}{1+\lam}-\eps)) \\
    &= \gamma_{f_0, p_0}^\top (\frac{\lam p_{0} - (\bar{y}_{} - p_{0})}{(1+\lam)}
    + \min(0, \frac{\bar{y} - p_0+p_0 - (1+\lam)\eps}{1+\lam}))
    \geq \gamma_{f_0, p_0}^\top \frac{\lam p_{0} - |\bar{y}_{} - p_{0}|}{(1+\lam)}
\end{talign}
by the concavity of the logarithm and the choice $\lam \leq 1$.
Since 
\begin{talign}
P(|\bar{y}_j - p_{0,j}| \geq \theta p_{0,j}) \leq \frac{2\zeta}{k}
\qtext{for}
\theta \geq \sqrt{\frac{2(1-p_{0,j})}{np_{0,j}}\log(\frac{k}{\zeta})} + \frac{4}{3}\frac{1}{np_{0,j}}\log(\frac{k}{\zeta})
\end{talign} 
by Bernstein's inequality \citep{Bernstein46}, we have
\begin{talign}
P(\twotwonorm{f_0 - \hatf}^2 \geq \twotwonorm{\gamma_{f_0, p_0}^\top(p_{0} - \hatp)}^2) 
\geq P(f_{0,j} - \hatf_j \geq 0, \ \forall j\in[k]) 
\geq P(\lam p_{0,j} \geq |\bar{y}_j- p_{0,j}|, \ \forall j\in[k]) 
\geq 1-2\zeta
\end{talign} 
whenever 
\balignt
\sqrt{\frac{2}{n\eps}\log(\frac{k}{\zeta})} + \frac{4}{3}\frac{1}{n\eps}\log(\frac{k}{\zeta}) 
    \leq 
\lam 
    \leq 
1.
\ealignt

Moreover, since $\displaystyle\limsup_{n\to\infty}\textstyle\frac{\sqrt{n}|\bar{y}_j- p_{0,j}|}{\sqrt{2p_{0,j}(1-p_{0,j})\log \log(n)}} = 1$ with probability $1$ by the law of the iterated logarithm, $\twotwonorm{\gamma_{f_0, p_0}^\top(p_{0} - \hatp)}^2 = \Omega(\min(1,\lam^2))$ with probability $1$ whenever $1 \geq \lam \geq \sqrt{\frac{2\log \log(n)}{n\eps}}$.
The choice 
\begin{talign}
\lam 
    = \min\left(1, \max\left(\frac{1}{n^{1/4}}, \sqrt{\frac{2\log \log(n)}{n\eps}}, \sqrt{\frac{2}{n\eps}\log(\frac{k}{\zeta})} + \frac{4}{3}\frac{1}{n\eps}\log(\frac{k}{\zeta})\right)\right)
    = \Theta(\frac{1}{n^{1/4}})
\end{talign}
now yields the first two advertised claims.

The final claim follows directly from \cref{thm:erm} with $B = O(1)$ as 
$\hat{\gamma}^{(t)} = \gamma_{f_0,\hat{p}}$ and the critical radius of $\gset(\hat{p}^{(t)}, \hat{\gamma}^{(t)})$ satisfies $\delta_{n/B} = O(\sqrt{Bk/n})$ by \citet[Ex.~13.8]{wainwright_2019}.

\section{Proof of \cref{overfitting}: Impact of teacher overfitting on vanilla distillation}
\label{sec:overfitting-proof}

Suppose that $p_0$ has Lipschitz gradient and, for known $\eps > 0$, belongs to the set 
\begin{talign}
\pset = \{ p\ :\ p_j(x) \in [\eps, 1], \ \forall x \in  \xset,\ j \in [k]\}.
\end{talign}
Suppose moreover that $X \in \reals^d$ has Lebesgue density bounded away from $0$ and $\infty$ and that $\eps <  \frac{1}{4}\frac{\E[p_{0,j}(X) (1-p_{0,j}(X))^2]}{\E[{(1-p_{0,j}(X))}{/p_{0,j}(X)}]}$ for each $j$.
Consider the teacher estimates $\hat{p}_j(x) = \max(\eps, \tilde{p}_j(x))$ for $\tilde{p}$ the Nadaraya-Watson kernel smoothing estimator \citep{nadaraya1964estimating,watson1964smooth}
\begin{talign}
\tilde{p}(x)
    \defeq 
    \begin{cases}
    y_{i} & \text{if}\quad x = x_i \\
    \sum_{i=1}^n y_{i} K((x-x_i)/h)/\sum_{i=1}^n K((x-x_i)/h)
    & \text{otherwise}
    \end{cases}
\end{talign}
with kernel $K(x) = \twonorm{x}^{-a}\indic{\twonorm{x} \leq 1}$, $a \in (0, d/2)$, and $h = n^{-1/(4+d)}$.
By \citet[Thm. 1]{belkin2019does}, the teacher  satisfies $\E[\twotwonorm{p_0 - \hatp}^2] = O(n^{-4/(4+d)})$.

Now instantiate the notation of \cref{thm:no_data_split_ERM}, and consider a student prediction rule trained to learn a {constant} prediction rule via \vanilla with the \SEL loss and
\begin{talign}\label{eq:fset-overfitting}
\fset = \{ f\ :\  f(x) = f(x') \in [\log(\eps), 0]^k  \ \text{ for all }\ x, x' \in \xset \}.
\end{talign}
Since $\tilde{p}$ exactly interpolates the observed labels (i.e., $\tilde{p}(x_i) = y_i$), the critical radius of the teacher-student function class $\gset$ satisfies $\delta_n = \Omega(1)$. 
Moreover, since the student only has access to the teacher's training set probabilities, its estimate $\hat{f}(x) = \frac{1}{n}\sum_{i=1}^n \log(\max(y_i, \eps))$ is inconsistent for the optimal constant rule $f_0(x) = \E[\log(p_0(X))]$ as
\begin{talign}
f_{0,j}(x) &- \E\hat{f}_j(x) 
    = \E[\log(p_{0,j}(X)) - \log(\max(Y_j,\eps))] 
    \geq  \E[\frac{p_{0,j}(X) - \max(Y_j,\eps)}{p_{0,j}(X)} +  \frac{(\max(Y_j,\eps) - p_{0,j}(X))^2}{2}] \\
    &= \E[\frac{p_{0,j}(X) (1-p_{0,j}(X))^2 + (1-p_{0,j}(X))(p_{0,j}(X) - \eps)^2}{2}] -\eps \E[\frac{1-p_{0,j}(X)}{p_{0,j}(X)}] 
    \geq \E[\frac{p_{0,j}(X) (1-p_{0,j}(X))^2}{4}]
\end{talign}
by Taylor's theorem with Lagrange remainder. This non-vanishing student error reflects the non-vanishing critical radius $\delta_n$ of the composite student-teacher function class $\gset$ defined in \cref{thm:vanilla}; since the student function class $\fset$ has low complexity, the complexity of $\gset$ is driven by the highly flexible interpolating teacher. 

Next, instantiate the notation of \cref{thm:erm}, and consider a student prediction rule $\hatf$ trained via \enhanced with \SEL loss, $\hatgamma^{(t)} = 0$, $B=O(1)$, and $\fset$ \cref{eq:fset-overfitting}.
The critical radius of $\gset(\hatpt, \hatgammat)$ satisfies $\delta_{n/B}  = O(\sqrt{Bk/n})$ by \citet[Ex.~13.8]{wainwright_2019}. 
Moreover, each cross-fitted teacher satisfies 
$\E[\twotwonorm{p_0 - \hatpt}^2] = O(n^{-4/(4+d)})$ by \citet[Thm. 1]{belkin2019does}, so, 
by Chebyshev's and Jensen's inequalities,
with probability at least $1-\zeta/2$,
\begin{talign}
\twotwonorm{p_0 - \hatpt}
    &\leq \E[\twotwonorm{p_0 - \hatpt}]
+ \sqrt{2B\Var(\twotwonorm{p_0 - \hatpt})/\zeta} \\
    &\leq (1+\sqrt{2B/\zeta})\sqrt{\E[\twotwonorm{p_0 - \hatpt}^2]}
    = O(n^{-2/(4+d)})
\qtext{for all}
    t.
\end{talign}
Therefore,
\cref{thm:erm} implies that
\begin{talign}
\|\hat{f}-f_0\|_{2,2}^2
    &= O(\frac{1}{n} + \frac{1}{B}\sum_{t=1}^B\|(\gamma_{f_0,\hat{p}^{(t)}})^\top (\hat{p}^{(t)} - p_0)\|_{2,2}^2) \\
    &= O(\frac{1}{n} + \frac{1}{B}\sum_{t=1}^B\|(\diag(\frac{1}{\hatpt}) (\hat{p}^{(t)} - p_0)\|_{2,2}^2)
    \\
    &=
    O(\frac{1}{n} + \frac{1}{B\eps^2}\sum_{t=1}^B\|\hat{p}^{(t)} - p_0\|_{2,2}^2)
    = O(n^{-4/(4+d)})
\end{talign}
with probability at least $1-\zeta$.

\section{Proof of \Cref{lem:general}: Algorithm-agnostic analysis} \label{sec:agnostic-proof}
First we define for any functional $L(f)$ the Frechet derivative as:
\begin{equation}
    D_f L(f)[\nu] = \frac{\partial}{\partial t} L(f + t\, \nu)\mid_{t=0} 
\end{equation}
When $L$ is an operator of the form: $\E[g(f(X))]$, then: $D_f L(f)[\nu]=\E[\grad g(f(X))^\top\, \nu(X)]$.

By the $\sigma$-strong convexity of $\loss_D$,\footnote{Notably this strong convexity assumption can be relaxed to $\E\left[\grad_{\phi} \ell(W; f_0(X), p_0(X)) (\hat{f}(X) - f_0(X)\right] \geq 0$.} we have that
\begin{align}
    \loss_D(\hat{f}, \hat{p}, \gamma) &\ge \loss_D(f_0, \hat{p}, \gamma) + D_f \loss_D(f_0, \hat{p}, \gamma)[\hat{f}-f_0] +  \frac{\sigma}{2}\|\hat{f}- f_0\|_{2,2}^2.
\end{align}

Furthermore, our excess risk assumption and the optimality of $f_0$ give us 
\begin{align}
    \frac{\sigma}{2}\|\hat{f}- f_0\|_{2,2}^2 \leq~& \underbrace{\loss_D(\hat{f}, \hat{p}, \gamma) - \loss_D(f_0, \hat{p}, \gamma)}_{\text{excess risk of $\hat{f}$}} - D_f \loss_D(f_0, \hat{p}, \gamma)[\hat{f}-f_0] \label{eqn:maineq}\\
    \overset{(a)}{\leq}~& \epsilon(\hat{f}, \hat{p}, \gamma) - \underbrace{D_{f} \loss_D(f_0, p_0, \gamma)[\hat{f}-f_0]}_{\geq 0 \text{ by optimality of $f_0$}} + D_{f} (\loss_D(f_0, p_0, \gamma) - \loss_D(f_0, \hat{p}, \gamma))[\hat{f}-f_0].
\end{align}

By Taylor's theorem with integral remainder, 
\begin{align}\label{eq:gamma0-expansion}
    &\E[\inner{\grad_{\phi}\ell(W; f_0(x), p_0(x))-\grad_{\phi}\ell(W; f_0(x), \hatp(x))}{\hat{f}(x)-f_0(x)} \mid X = x] \\
    &= (p_0(x) - \hatp(x))^\top \gamma_{f_0, \hatp}(x) (\hat{f}(x)-f_0(x))
\end{align}
whenever $\grad_{\phi\pi} \ell$ is well-defined.
We can now invoke the expansion \cref{eq:gamma0-expansion} and Cauchy-Schwarz to obtain the bound
\begin{align}
    &D_{f}(\loss_D(f_0, p_0, \gamma) -\loss_D(f_0, \hatp, \gamma))[\hat{f}-f_0] \\
    =~& \E[\inner{\grad_{\phi}\ell(W; f_0(X), p_0(X))-\grad_{\phi}\ell(W; f_0(X), p(X))}{\hat{f}(X)-f_0(X)}] \\
    &- \E[(p_0(X) - \hatp(X))^\top \gamma(X) (\hat{f}(X)-f_0(X))] \\
    =~& \E[(p_0(X) - \hatp(X))^\top\, (\gamma_{f_0, \hatp}(X) - \gamma(X))\, (\hat{f}(X) - f_0(X))] \\
    \leq~&  \E[\|(p_0(X) - \hatp(X))^\top\, (\gamma_{f_0, \hatp}(X) - \gamma(X))\|_2\, \|\hat{f}(X) - f_0(X)\|_2]\\
    \leq~&  \|(p_0 - \hatp)^\top\, (\gamma_{f_0, \hatp} - \gamma)\|_{2,2}\, \|\hat{f} - f_0\|_{2,2}\\
\end{align}

Thus combining all the above inequalities:
\begin{align}
    \frac{\sigma}{2}\|\hat{f}- f_0\|_{2,2}^2 \leq \epsilon(\hat{f}, \hat{p}, \gamma) + \|(\hat{p} - p_0)^\top\, (\gamma_{f_0, \hatp} - \gamma)\|_{2,2}\, \|\hat{f} - f_0\|_{2,2} 
\end{align}
By an AM-GM inequality, for all $a,b\geq 0$: $a\cdot b \leq \half(\frac{2}{\sigma} a^2 + \frac{\sigma}{2} b^2)$. Applying this to the product of norms on the RHS and re-arranging yields
\begin{align}
    \frac{\sigma}{4} \|\hat{f}-f_0\|_{2,2}^2 \leq \epsilon(\hat{f}, \hat{p}, \gamma) + \frac{1}{\sigma} \|(\hat{p} - p_0)^\top\, (\gamma_{f_0, \hatp} - \gamma)\|_{2,2}^2.
\end{align}

To get the final inequality, observe that:
\begin{equation}
    \|(\hat{p} - p_0)^\top\, (\gamma_{f_0, \hatp} - \gamma)\|_{2,2}^2 \leq 2 \|(\hat{p} - p_0)^\top\, (q_{f_0, \hatp} - \gamma)\|_{2,2}^2 + 2\,\|(\hat{p} - p_0)^\top\, (\gamma_{f_0, \hatp} - q_{f_0, \hatp})\|_{2,2}^2
\end{equation}
Moreover, by the boundedness of the third derivative, we have:
\begin{align}
    \|(\hat{p} - p_0)^\top\, (\gamma_{f_0, \hatp} - q_{f_0, \hatp})\|_{2,2}^2 \leq~& \E[\|\hat{p}(X)- p_0(X)\|_2^2 \|\gamma_{f_0, \hatp}(X) - q_{f_0, \hatp}(X)\|_2^2]\\
    \leq~& \E[\|\hat{p}(X)- p_0(X)\|_2^2 M^2\, k\, \|\hat{p}(X)- p_0(X)\|_2^2]\\
    \leq~& M^2\, k\, \|\hat{p} - p_0\|_{2,4}^4
\end{align}
Combining all the above yields the final bound.

\section{Proof of \Cref{thm:erm}: Cross-fitted ERM analysis} \label{sec:erm-proof}

Let $L_{n,t}$ denote the empirical loss over the samples in the $t$-th fold and $\hat{p}^{(t)}, \hat{\gamma}^{(t)}$ the nuisance functions used on the samples in the $k$-th fold. For any $t\in [K]$ and conditional on $\hat{p}^{(t)}, \hat{\gamma}^{(t)}$, suppose that $\delta_n$ upper bounds the critical radius of the function class $\mathcal{G}(\hat{p}^{(t)}, \hat{\gamma}^{(t)})$, then by Lemma~11 of \cite{foster2019orthogonal},\footnote{We apply the lemma with ${\cal L}_g = g$ and $g\in {\cal G}(\hat{p}^{(t)}, \hat{\gamma}^{(t)})$ and $g^*=0$. Then we instantiate the concentration inequality with $g=\ell_{t, \hat{f}} - \ell_{t, f_0} \in {\cal G}(\hat{p}^{(t)}, \hat{\gamma}^{(t)})$.} if we denote with $\ell_{t, f}(z) = \ell_{\hat{\gamma}^{(t)}}(z; f(x), \hat{p}^{(t)}(x))$, w.p. $1-\zeta$:
\begin{equation}
    \left|\loss_{n,t}(\hat{f}, \hat{p}^{(t)}, \hat{\gamma}^{(t)}) - \loss_{n,t}(f_0, \hat{p}^{(t)}, \hat{\gamma}^{(t)}) - (\loss_D(\hat{f}, \hat{p}^{(t)}, \hat{\gamma}^{(t)}) - \loss_D(f_0, \hat{p}^{(t)}, \hat{\gamma}^{(t)}))\right| \leq O\left(H\delta_{n/B,\zeta} \|\ell_{t, \hatf} - \ell_{t, f_0}\|_{2,2} + H\delta_{n/B,\zeta}^2\right)
\end{equation}
Moreover, we have that by the definition of cross-fitted ERM:
\begin{equation}
    \frac{1}{B} \sum_{t=1}^B \loss_{n,t}(\hat{f}, \hat{p}^{(t)}, \hat{\gamma}^{(t)}) - \loss_{n,t}(f_0, \hat{p}^{(t)}, \hat{\gamma}^{(t)})   \leq 0
\end{equation}
Thus we have that w.p. $1-\zeta B$:
\begin{equation}
\frac{1}{B} \sum_{t=1}^B \loss_D(\hat{f}, \hat{p}^{(t)}, \hat{\gamma}^{(t)}) - \loss_D(f_0, \hat{p}^{(t)}, \hat{\gamma}^{(t)})     \leq 
O\left(H\delta_{n/B,\zeta} \frac{1}{B} \sum_{t=1}^B\|\ell_{t, f} - \ell_{t, f_0}\|_{2,2} + H\delta_{n/B,\zeta}^2\right)
\end{equation}

Moreover, if we let $\mu(z) = \sup_{\phi, t} \left\|\nabla_{\phi} \ell(z; \phi, \hat{p}^{(t)}(x))\right\|_2$, then we have by Cauchy-Schwarz inequality:
\begin{align}
    \|\ell_{t, f} - \ell_{t, f_0}\|_{2,2} \leq~& \|\mu\|_{4}\, \|f-f_0\|_{2,4} + \sqrt{\E\left[\left((Y-\hat{p}^{(t)}(X))^\top \hat{\gamma}^{(t)}(X) (f(X) - f_0(X))\right)^2\right]}\\
    \leq~& \|\mu\|_{4}\, \|f-f_0\|_{2,4} + \E\left[\left\|(Y-\hat{p}^{(t)}(X))^\top \hat{\gamma}^{(t)}(X)\right\|_2^2 \left\|f(X) - f_0(X))\right\|^2\right]\\
    \leq~& \left(\|\mu\|_{4}\, + \E\left[\left\|(Y-\hat{p}^{(t)}(X))^\top \hat{\gamma}^{(t)}(X)\right\|_2^4\right]^{1/4}\right)  \|f-f_0\|_{2,4}
\end{align}
If we further assume that the function class ${\cal F}$ satisfies an $\ell_2/\ell_4$ condition that: 
\begin{equation}
\sup_{f\in \fset} \frac{\|f-f_0\|_{2,4}}{\|f-f_0\|_{2,2}} \leq C 
\end{equation}
then w.p. $1-\zeta$:
\begin{equation}
\frac{1}{B} \sum_{t=1}^B \epsilon(\hat{f}, \hat{p}^{(t)}, \hat{\gamma}^{(t)}) 
    \leq 
O\left(H\delta_{n/B,\zeta/B} \frac{1}{B} \sum_{t=1}^B\, C\, \left(\|\mu_p\|_{4}\, + \E\left[\left\|(Y-\hat{p}(X))^\top \gamma(X)\right\|_2^4\right]^{1/4}\right)  \|f-f_0\|_{2,2} + H\delta_{n/B,\zeta/B}^2\right).
\end{equation}

Applying \Cref{lem:general} for any $\hat{p}^{(t)}, \hat{\gamma}^{(t)}$ and averaging the final inequality we get:
\begin{align}
    \frac{\sigma}{4} \|\hat{f}-f_0\|_{2,2}^2\leq~& \frac{1}{B} \sum_{t=1}^{B} \left(\epsilon(\hat{f}, \hat{p}^{(t)}, \hat{\gamma}^{(t)}) + \frac{1}{\sigma} \|(\gamma_{f_0,\hat{p}^{(t)}} - \hat{\gamma}^{(t)})^\top (\hat{p}^{(t)} - p_0)\|_{2,2}^2\right).
\end{align}
Plugging in the bound above to \Cref{lem:general} and applying the AM-GM inequality and Jensen's inequality, yields:
\begin{align}
    \frac{\sigma}{8} \|\hat{f}-f_0\|_{2,2}^2 \leq& \frac{1}{\sigma} O\left(\delta_{n/B,\zeta/B}^2\, C^2H^2\, \left(\|\mu\|_{4}^2 + \frac{1}{B}\sum_{t=1}^B\sqrt{\E\left[\left\|(Y-\hat{p}^{(t)}(X))^\top \hat{\gamma}^{(t)}(X)\right\|_2^4\right]}\right)\right)\\
    & +  \frac{1}{\sigma}O\left(\frac{1}{B}\sum_{t=1}^B \|(\gamma_{f_0,\hat{p}^{(t)}} - \hat{\gamma}^{(t)})^\top (\hat{p}^{(t)} - p_0)\|_{2,2}^2\right).
\end{align}

\section{Proof of \cref{thm:biasedsgd}: Biased SGD analysis}
\label{sec:sgd-proof}

Below, for any integer $s$, we define the operator norm of any vector $v\in\reals^s$ and any tensor $T$ operating on $\reals^s$ as
\begin{align}
    \opnorm{v} \defeq \twonorm{v} \qtext{and}
    \opnorm{T} \defeq \sup_{v: \twonorm{v}=1} \opnorm{T[v]}.
\end{align}
Recall the definition
\begin{align}
    \grad(W; \theta, p, \gamma) 
    &= \grad_{\theta}f_{\theta}(X)^\top\grad_{\phi} \ell_{\gamma}(W; f_{\theta}(X), p(X)) \\
    &= \grad_{\theta}f_{\theta}(X)^\top(\grad_{\phi} \ell(W; f_{\theta}(X), p(X)) + \gamma(X)^\top(Y-p(X))).
\end{align}
Observe that since $\E[Y\mid X=x] = p_0(x)$, we can write for any $\gamma$:
\begin{equation}
    {\cal L}(\theta; p_0) = \E[\ell(W; f_{\theta}(X), p_0(X)) + (Y-p_0(X))^\top \gamma(X) f_{\theta}(X)]
    = \E[\ell_{\gamma}(W; f_{\theta}(X), p_0(X))]
\end{equation}
Thus we also have that:
\begin{equation}
   \forall \theta, \gamma: \grad_{\theta} {\cal L}(\theta; p_0) = \E[\grad(W; \theta, p_0, \gamma)]
\end{equation}
Given this observation, we can decompose the gradient that is used in our SGD algorithm into a bias and variance component, when viewed from the perspective of a biased SGD algorithm for the population oracle loss:
\begin{align}
    \grad(W; \theta, p, \gamma) =~& \grad_{\theta} {\cal L}(\theta; p_0)\\
    & + \underbrace{\E[\grad(W; \theta, p, \gamma)] - \E[\grad(W; \theta, p_0, \gamma)]}_{{\bf b}(\theta, p, \gamma)} + \underbrace{\grad(W; \theta, p, \gamma) - \E[\grad(W; \theta, p, \gamma)]}_{{\bf n}(W; \theta, p, \gamma)}
\end{align}
The following two lemmas bound the gradient bias and noise terms.
\begin{lemma}[Gradient bias]\label{lem:bias}
If $\sup_{x,\phi, \pi} \opnorm{\E[\grad_{\pi\pi\phi} \ell(W; \phi, \pi)\mid X = x]} \leq M$, then for any parameter vector $\theta$ and functions $p$ and $\gamma$, we have:
\begin{align}
{\bf b}(\theta, p, \gamma)
    &=
\E[\grad_{\theta}f_{\theta}(X)^\top(\gamma_{f_\theta, p}(X) - \gamma(X))^\top (p(X) - p_0(X))], \\
\|{\bf b}(\theta, p, \gamma)\|_2
    &\leq \norm{\grad_{\theta}f_{\theta}^\top(\gamma_{f_\theta, p} - \gamma)^\top (p - p_0)}_{2,2}, \qtext{and} \\
\|{\bf b}(\theta, p, \gamma)\|_2 
    &\leq \norm{\grad_{\theta}f_{\theta}^\top(q_{f_\theta, p} - \gamma)^\top (p - p_0)}_{2,2} +  \frac{M}{2} \norm{\grad_{\theta}f_{\theta}}_{F,2}\norm{p - p_0}_{2,4}^2.
\end{align}
\end{lemma}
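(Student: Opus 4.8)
The plan is to establish the three displayed statements in sequence, removing one layer of approximation at each stage.

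\emph{Exact form of the bias.} First I would compute ${\bf b}(\theta,p,\gamma)$ directly by subtracting the $p_0$-evaluated gradient from the $p$-evaluated one, term by term. Writing $\grad(W;\theta,p,\gamma)=\grad_{\theta}f_{\theta}(X)^\top(\grad_{\phi}\ell(W;f_{\theta}(X),p(X))+\gamma(X)^\top(Y-p(X)))$ and using $\E[Y\mid X]=p_0(X)$, the two $\gamma$-correction terms collapse to $\E[\grad_{\theta}f_{\theta}(X)^\top\gamma(X)^\top(p_0(X)-p(X))]$, while the base-gradient terms give $\E[\grad_{\theta}f_{\theta}(X)^\top(\grad_{\phi}\ell(W;f_{\theta}(X),p(X))-\grad_{\phi}\ell(W;f_{\theta}(X),p_0(X)))]$. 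For the latter I would fix $X=x$ and apply the fundamental theorem of calculus to $u\mapsto\grad_{\phi}\ell(W;f_{\theta}(x),up(x)+(1-u)p_0(x))$: its $u$-derivative is the cross-partial $\grad_{\phi\pi}\ell$ (transposed, per the convention $[\grad_{\phi\pi}\ell]_{i,j}=\partial^2\ell/\partial\phi_j\partial\pi_i$) applied to $p(x)-p_0(x)$. Taking $\E[\cdot\mid X=x]$ replaces the integrand by $q_{f_{\theta},up+(1-u)p_0}(x)^\top$, and integrating over $u$ yields $\gamma_{f_{\theta},p}(x)^\top$ by the definition of $\gamma_{f,p}$. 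Combining the two contributions gives the first display, ${\bf b}(\theta,p,\gamma)=\E[\grad_{\theta}f_{\theta}(X)^\top(\gamma_{f_{\theta},p}(X)-\gamma(X))^\top(p(X)-p_0(X))]$.

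\emph{The $\|\cdot\|_{2,2}$ bound.} For the second display I would apply Jensen's inequality to pull the Euclidean norm inside the expectation, $\twonorm{\E[V]}\le\E[\twonorm{V}]$, and then Jensen once more to pass from $\E[\twonorm{V(X)}]$ to $\E[\twonorm{V(X)}^2]^{1/2}=\norm{V}_{2,2}$, taking $V(X)=\grad_{\theta}f_{\theta}(X)^\top(\gamma_{f_{\theta},p}(X)-\gamma(X))^\top(p(X)-p_0(X))$.

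\emph{Replacing $\gamma_{f_{\theta},p}$ by $q_{f_{\theta},p}$.} For the third display I would split $\gamma_{f_{\theta},p}-\gamma=(q_{f_{\theta},p}-\gamma)+(\gamma_{f_{\theta},p}-q_{f_{\theta},p})$ and bound the two induced pieces of ${\bf b}$ by the triangle inequality. The first piece is controlled exactly as above, producing $\norm{\grad_{\theta}f_{\theta}^\top(q_{f_{\theta},p}-\gamma)^\top(p-p_0)}_{2,2}$. The heart of this step is the pointwise estimate $\opnorm{\gamma_{f_{\theta},p}(x)-q_{f_{\theta},p}(x)}\le\tfrac{M}{2}\twonorm{p(x)-p_0(x)}$. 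I would derive it by writing $\gamma_{f_{\theta},p}(x)-q_{f_{\theta},p}(x)=\int_0^1(q_{f_{\theta},up+(1-u)p_0}(x)-q_{f_{\theta},p}(x))\,du$ and Taylor-expanding the cross-partial $q$ in its third argument; the increment is governed by the third derivative $\grad_{\pi\pi\phi}\ell$, whose conditional expectation has operator norm at most $M$ by hypothesis. Since $up(x)+(1-u)p_0(x)-p(x)=(1-u)(p_0(x)-p(x))$ and $\int_0^1(1-u)\,du=\tfrac12$, the constant $M/2$ emerges. I would then bound the second piece of ${\bf b}$ by $\E[\fronorm{\grad_{\theta}f_{\theta}(X)}\,\opnorm{\gamma_{f_{\theta},p}(X)-q_{f_{\theta},p}(X)}\,\twonorm{p(X)-p_0(X)}]$ and apply Cauchy--Schwarz across $X$, splitting it into $\tfrac{M}{2}\norm{\grad_{\theta}f_{\theta}}_{F,2}\norm{p-p_0}_{2,4}^2$.

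\emph{Main obstacle.} The only delicate bookkeeping is the transpose convention in the first Taylor step (so the cross-partial integrates to $\gamma_{f_{\theta},p}$ and not its transpose) and, in the second Taylor step, tracking the order-three tensor $\grad_{\pi\pi\phi}\ell$ through its operator norm to extract the clean factor $M/2$; every other step is a routine Jensen or Cauchy--Schwarz application.
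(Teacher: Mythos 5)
Your proposal is correct and follows essentially the same route as the paper's proof: the exact bias identity via Taylor's theorem with integral remainder (producing $\gamma_{f_\theta,p}$ from the path-averaged cross-partials), Jensen to pass to the $\|\cdot\|_{2,2}$ bound, and second-order control through $\grad_{\pi\pi\phi}\ell$ to trade $\gamma_{f_\theta,p}$ for $q_{f_\theta,p}$ at cost $\frac{M}{2}\norm{\grad_\theta f_\theta}_{F,2}\norm{p-p_0}_{2,4}^2$. The only cosmetic difference is that the paper obtains the $q_{f_\theta,p}$ form per coordinate via a Lagrange-remainder expansion, whereas you bound $\opnorm{\gamma_{f_\theta,p}(x)-q_{f_\theta,p}(x)}\le\frac{M}{2}\twonorm{p(x)-p_0(x)}$ by integrating increments of $q$ along the path --- an equivalent computation (and one that sits slightly more comfortably with the hypothesis being stated on the \emph{conditional expectation} of the third derivative).
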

\begin{proof}
By Taylor's theorem with integral remainder and Lagrange remainder respectively the SGD bias
for each parameter $i$ takes the form
\begin{align}
    {\bf b}_i(\theta, p, \gamma) =~& \E[\grad_i(W; \theta, p, \gamma)] - \E[\grad_i(W; \theta, p_0, \gamma)]\\
    =~& \E[\inner{\grad_{\phi} \ell_{\gamma}(W; f_{\theta}(X), p(X)) - \grad_{\phi} \ell_{\gamma}(W; f_{\theta}(X), p_0(X))}{\grad_{\theta_i}f_{\theta}(X)}]\\
    =~& \E[(p(X) - p_0(X))^\top\, (\gamma_{f_\theta, p}(X) - \gamma(X)) \,  \grad_{\theta_i}f_{\theta}(X)] \\
    =~& \E[(p(X) - p_0(X))^\top\, (q_{f_\theta, p}(X) - \gamma(X)) \,  \grad_{\theta_i}f_{\theta}(X)] \\
    +~& \half \E\left[\grad_{\pi\pi\phi} \ell(W; f_\theta(X), 
    \bar{p}(X))[\grad_{\theta_i}f_{\theta}(X), p(X) - p_0(X), p(X) - p_0(X)] \right].
\end{align}
Furthermore, our operator norm assumption and Cauchy-Schwarz imply
\begin{align}
|{\bf b}_i(\theta, p, \gamma)|
    \leq~& |\E[(p(X) - p_0(X))^\top\, (q_{f_\theta, p}(X) - \gamma(X)) \,  \grad_{\theta_i}f_{\theta}(X)]|
     + \frac{M}{2} \E\left[\twonorm{\grad_{\theta_i}f_{\theta}(X)}\twonorm{p(X) - p_0(X)}^2\right]\\
    \leq~& |\E[(p(X) - p_0(X))^\top\, (q_{f_\theta, p}(X) - \gamma(X)) \,  \grad_{\theta_i}f_{\theta}(X)] |
     + \frac{M}{2} \norm{\grad_{\theta_i}f_{\theta}}_{2,2}\norm{p - p_0}_{2,4}^2.
\end{align}
Thus, by the triangle inequality and Jensen's inequality we find that 
\begin{align}
\|{\bf b}(\theta, p, \gamma)\|_2
    &\leq \norm{\grad_{\theta}f_{\theta}^\top(\gamma_{f_\theta, p} - \gamma)^\top (p - p_0)}_{2,2} \qtext{and} \\
\|{\bf b}(\theta, p, \gamma)\|_2
    &\leq \norm{\grad_{\theta}f_{\theta}^\top(q_{f_\theta, p} - \gamma)^\top (p - p_0)}_{2,2} +  \frac{M}{2} \norm{\grad_{\theta}f_{\theta}}_{F,2}\norm{p - p_0}_{2,4}^2.
\end{align}
\end{proof}

\begin{lemma}[Gradient Variance]\label{lem:noise} Define
For any parameter $\theta$ and functions $p$ and $\gamma$,
\begin{align}
\sqrt{\E[\|{\bf n}(W; \theta, p, \gamma)\|_2^2]}
    &\leq 
        \sigma_0(\theta)
        + \sqrt{\E[\twonorm{\grad_{\theta} f_{\theta}(X)^\top\gamma(X)^\top (Y -p_0(X))}^2]
        + \norm{\grad_{\theta}f_{\theta}^\top(\gamma_{f_\theta, p} - \gamma)^\top (p - p_0)}_{2,2}^2}.
\end{align}
\end{lemma}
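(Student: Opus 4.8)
The plan is to treat $g \defeq \grad(W;\theta,p,\gamma)$ as a biased stochastic gradient, peel off the baseline ``unbiased'' variance $\sigma_0(\theta)$, and then bound only the excess variance created by the nuisance error $p-p_0$ and the correction $\gamma$. Since ${\bf n}(W;\theta,p,\gamma) = g - \E g$, the target is the centered $L^2$ norm $\sqrt{\E[\twonorm{g-\E g}^2]}$. First I would introduce the ideal stochastic gradient $g_0 \defeq \grad(W;\theta,p_0,0) = \grad_{\theta}f_{\theta}(X)^\top\grad_{\phi}\ell(W;f_{\theta}(X),p_0(X))$, whose coordinatewise variances sum to exactly $\sigma_0^2(\theta)$, so that its centered $L^2$ norm is $\sigma_0(\theta)$. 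Because $V\mapsto\sqrt{\E[\twonorm{V-\E V}^2]}$ is a seminorm on square-integrable random vectors, the triangle inequality applied to $g = g_0 + A$ yields
\begin{talign}
\sqrt{\E[\twonorm{g-\E g}^2]} \le \sigma_0(\theta) + \sqrt{\E[\twonorm{A-\E A}^2]} \qtext{for} A \defeq g - g_0,
\end{talign}
reducing the claim to controlling the centered second moment of $A$ by the two target terms \emph{jointly}, under a single square root rather than two.

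Next I would split $A$ into a conditionally mean-zero piece and a piece that is deterministic given $X$. Using $\gamma(X)^\top(Y-p(X)) = \gamma(X)^\top(Y-p_0(X)) + \gamma(X)^\top(p_0(X)-p(X))$, write $A = A' + A''$ with
\begin{talign}
A'' \defeq \grad_{\theta}f_{\theta}(X)^\top\gamma(X)^\top(Y-p_0(X)) \qtext{and} A' \defeq A - A''.
\end{talign}
Because $\E[Y\mid X]=p_0(X)$, the correction $A''$ is mean-zero conditional on $X$. For the remainder, the same Taylor-with-integral-remainder expansion used in \Cref{lem:bias} gives $\E[\grad_{\phi}\ell(W;f_{\theta}(X),p(X))-\grad_{\phi}\ell(W;f_{\theta}(X),p_0(X))\mid X] = \gamma_{f_{\theta},p}(X)^\top(p(X)-p_0(X))$, hence
\begin{talign}
\E[A'\mid X] = \grad_{\theta}f_{\theta}(X)^\top(\gamma_{f_{\theta},p}(X)-\gamma(X))^\top(p(X)-p_0(X)).
\end{talign}
Crucially, for the standard distillation base losses $\ell$ does not depend on the label $Y$, so $\grad_{\phi}\ell(W;f_{\theta}(X),\cdot)$ and therefore $A'$ are deterministic functions of $X$; all of the $Y$-randomness in $A$ is carried by $A''$.

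Finally I would apply the law of total variance to $A$. Since $A'$ is $X$-measurable and $\E[A''\mid X]=0$, the conditional variance of $A$ equals $\E[\twonorm{A''}^2\mid X]$ while $\E[A\mid X]=\E[A'\mid X]$, so
\begin{talign}
\E[\twonorm{A-\E A}^2] = \E[\twonorm{A''}^2] + \Var[\E[A'\mid X]] \le \E[\twonorm{A''}^2] + \E[\twonorm{\E[A'\mid X]}^2].
\end{talign}
Substituting the two previous displays identifies the right-hand side with $\E[\twonorm{\grad_{\theta}f_{\theta}(X)^\top\gamma(X)^\top(Y-p_0(X))}^2] + \norm{\grad_{\theta}f_{\theta}^\top(\gamma_{f_{\theta},p}-\gamma)^\top(p-p_0)}_{2,2}^2$, and combining with the first display completes the proof.

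The main obstacle I anticipate is the bookkeeping of $Y$-randomness rather than any single hard estimate. One must recognize that $\gamma^\top(Y-p_0)$ is precisely the conditionally mean-zero increment whose second moment is the variance cost of loss correction, while the deterministic-given-$X$ remainder $A'$ integrates to the gradient bias ${\bf b}(\theta,p,\gamma)$ of \Cref{lem:bias} and contributes only through the variance of its conditional mean. The tightness of the stated bound forces the $\sigma_0(\theta)$ triangle-inequality step to be applied exactly once, with $A'$ and $A''$ kept together and combined via total variance; applying a second triangle inequality would only give the looser $\sigma_0(\theta) + \sqrt{\E[\twonorm{A''}^2]} + \sqrt{\E[\twonorm{\E[A'\mid X]}^2]}$. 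The argument also relies on the label-independence of the base distillation loss to make $A'$ nonrandom given $X$, so that the cross term between $A'$ and $A''$ vanishes; without this structure an extra conditional-variance contribution from $\grad_{\phi}\ell$ would appear and the bound would acquire additional terms.
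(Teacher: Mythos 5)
Your proposal is correct and takes essentially the same route as the paper's proof: the paper peels off $\sigma_0(\theta)$ via the coordinatewise covariance bound $\Var[U+V]\leq(\sqrt{\Var[U]}+\sqrt{\Var[V]})^2$ (Cauchy--Schwarz on the cross-covariance), which is exactly your centered-$L^2$ triangle inequality, and then applies the law of total variance to $\Delta_i = Z_i + \grad_{\theta_i} f_{\theta}(X)^\top\gamma(X)^\top(Y-p_0(X))$ with $Z_i=\E[\Delta_i\mid X]$ identified by the same Taylor-with-integral-remainder expansion yielding $\gamma_{f_\theta,p}$. Your explicit observation that the base loss's label-independence makes the remainder $A'$ an $X$-measurable quantity is the same structural fact the paper uses implicitly when it replaces $\Delta_i$ by $Z_i$ plus the conditionally mean-zero term.
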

\begin{proof}
For each $i\in[d]$, define the shorthand 
\begin{align}
    \Delta_i &= \grad_{\theta_i} \ell(W; f_\theta(X), p(X)) + \grad_{\theta_i} f_{\theta}(X)^\top\gamma(X)^\top\,(Y-p(X)) - \grad_{\theta_i} \ell(W; f_\theta(X), p_0(X)) \qtext{and} \\
    Z_i 
    &= \E[\Delta_i \mid X] \\
    &= \grad_{\theta_i} f_{\theta}(X)^\top(\gamma(X) - \gamma_{f_\theta,p}(X))^\top\,(p_0(X)-p(X)) \\
    &= \grad_{\theta_i} f_{\theta}(X)^\top(\gamma(X) - q_{f_\theta,p}(X))^\top\,(p_0(X)-p(X)) \\
    &+ \half \E\left[\grad_{\pi\pi\phi} \ell(W; f_\theta(X), \bar{p}(X))[\grad_{\theta_i} f_{\theta}(X),p_0(X)-p(X),p(X)-p_0(X) ]\right]
\end{align}
for some convex combination $\bar{p}(X)$ of $p(X)$ and $p_0(X)$.

We begin by bounding the target expectation using Cauchy-Schwarz
\begin{align}
&\E[\|{\bf n}(W; \theta, p, \gamma)\|_2^2]  \\
    =~& \sum_{i\in [d]} \Var[\grad_{\theta_i} \ell(W;         f_\theta(X), p(X)) + \grad_{\theta_i} f_{\theta}(X)^\top\gamma(X)^\top\,(Y-p(X))]\\
    =~& \sum_{i\in [d]} \Var[\grad_{\theta_i} \ell(W;         f_\theta(X), p_0(X)) + \Delta_i]\\
    =~& \sigma_0(\theta,p_0)^2 + \sum_{i\in [d]}  \Var[\Delta_i] 
        + 2 \Cov(\grad_{\theta_i} \ell(W; f_\theta(X), p_0(X)), \Delta_i) \\
    \leq~& \sigma_0(\theta,p_0)^2 
        + \sum_{i\in [d]}  \Var[\Delta_i] 
        + 2 \sqrt{\Var[\grad_{\theta_i} \ell(W; f_\theta(X), p_0(X))] \Var[\Delta_i]} \\
    \leq~& \sigma_0(\theta,p_0)^2 
        + (\sum_{i\in [d]}  \Var[\Delta_i])
        + 2 \sqrt{\sum_{i \in [d]} \Var[\grad_{\theta_i} \ell(W; f_\theta(X), p_0(X))] \sum_{i \in [d]} \Var[\Delta_i]} \\
    =~& (\sigma_0(\theta,p_0) + \sqrt{\sum_{i\in [d]}  \Var[\Delta_i]})^2.
\end{align}
We next employ the law of total variance to rewrite the variance terms:
\begin{align}
\sum_{i\in [d]} \Var[\Delta_i]
    =~& \sum_{i\in [d]} \Var[Z_i + \grad_{\theta_i} f_{\theta}(X)^\top\gamma(X)^\top\,(Y - p_0(X))] \\
    =~& \E[\twonorm{\grad_{\theta} f_{\theta}(X)^\top\gamma(X)^\top (Y -p_0(X))}^2] + \sum_{i\in [d]} \Var[Z_i].
\end{align}
Finally, we control $\Var[Z_i]$ using Cauchy-Schwarz %
\begin{align}
\sqrt{\sum_{i\in [d]}  \Var[Z_i]}
     \leq~& \norm{\grad_{\theta}f_{\theta}^\top(\gamma_{f_\theta, p} - \gamma)^\top (p - p_0)}_{2,2}. %
\end{align}

\end{proof}

The two claims of \cref{thm:biasedsgd} now follow from Theorems~2 and 3 of \cite{ajalloeian2020analysis} respectively, with the parameters $\sigma^2$ and $\zeta$ instantiated with quantities $\sigma^2(\gamma)$ and $\zeta(\gamma)$ of \Cref{lem:bias,lem:noise}.

\section{Experiment Details and Additional Results}
\label{sec:experiment-details}

\subsection{Tabular data}
\label{sec:tabular-details}

We use cross-fitting with 10 folds.
The student is trained using the \SEL loss with clipped teacher class probabilities $\max(\hat{p}(x),\epsilon)$ for $\epsilon = 10^{-3}$.
The $\alpha$ hyperparameter of the loss correction was chosen by
cross-validation with 5 folds.
We repeat the experiments 5 times to measure the mean and standard deviation.

For the overfitting experiment, we use a random forest with 500 trees as the
teacher and a random forest with 1-40 trees as the student.

We also evaluate the impact of teacher underfitting by limiting the teacher's maximum tree depth (from 1 to
20).
Lower depth corresponds to greater underfitting.
The teacher has 100 trees, and the student has 10 trees.
For all of the datasets, loss correction successfully mitigates the teacher's
underfitting and thus improves the student's performance.
The effect is most pronounced when the teacher underfits more heavily (has lower tree depth).

We show the full results for all 5 of the datasets in \cref{fig:tabular_crossfit_app,fig:tabular_vary_teacher_app}.
\begin{figure}[ht]
  \centering
  \begin{subfigure}{0.33\linewidth}
    \includegraphics[width=\textwidth]{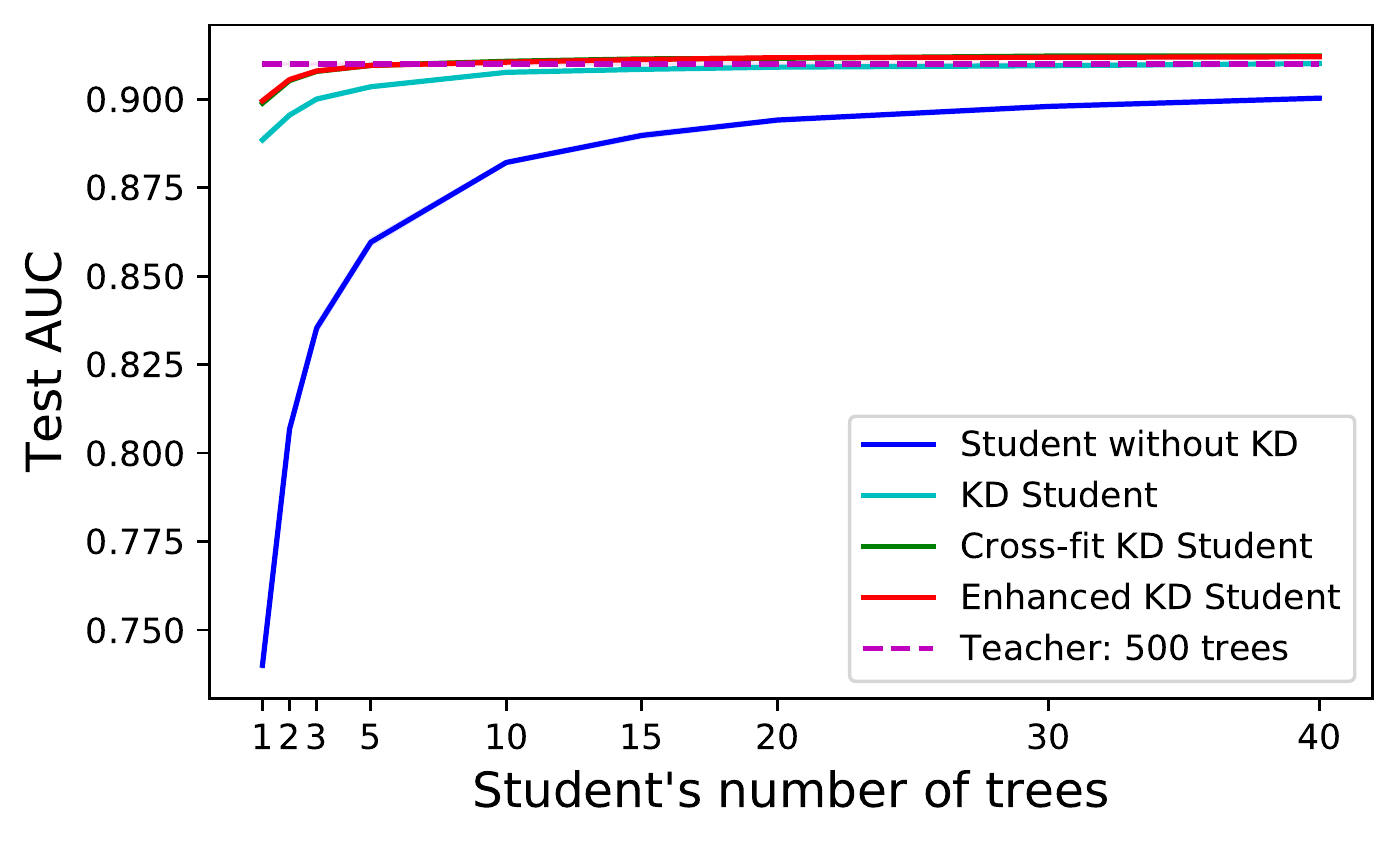}
    \caption{Adult dataset}
  \end{subfigure}%
  \begin{subfigure}{0.33\linewidth}
    \includegraphics[width=\textwidth]{figures/tabular_fico_crossfit-teacherTrue_scratchTrue_vanillaTrue_cfTrue_enhancedTrue.pdf}
    \caption{FICO dataset}
  \end{subfigure}%
  \begin{subfigure}{0.33\linewidth}
    \includegraphics[width=\textwidth]{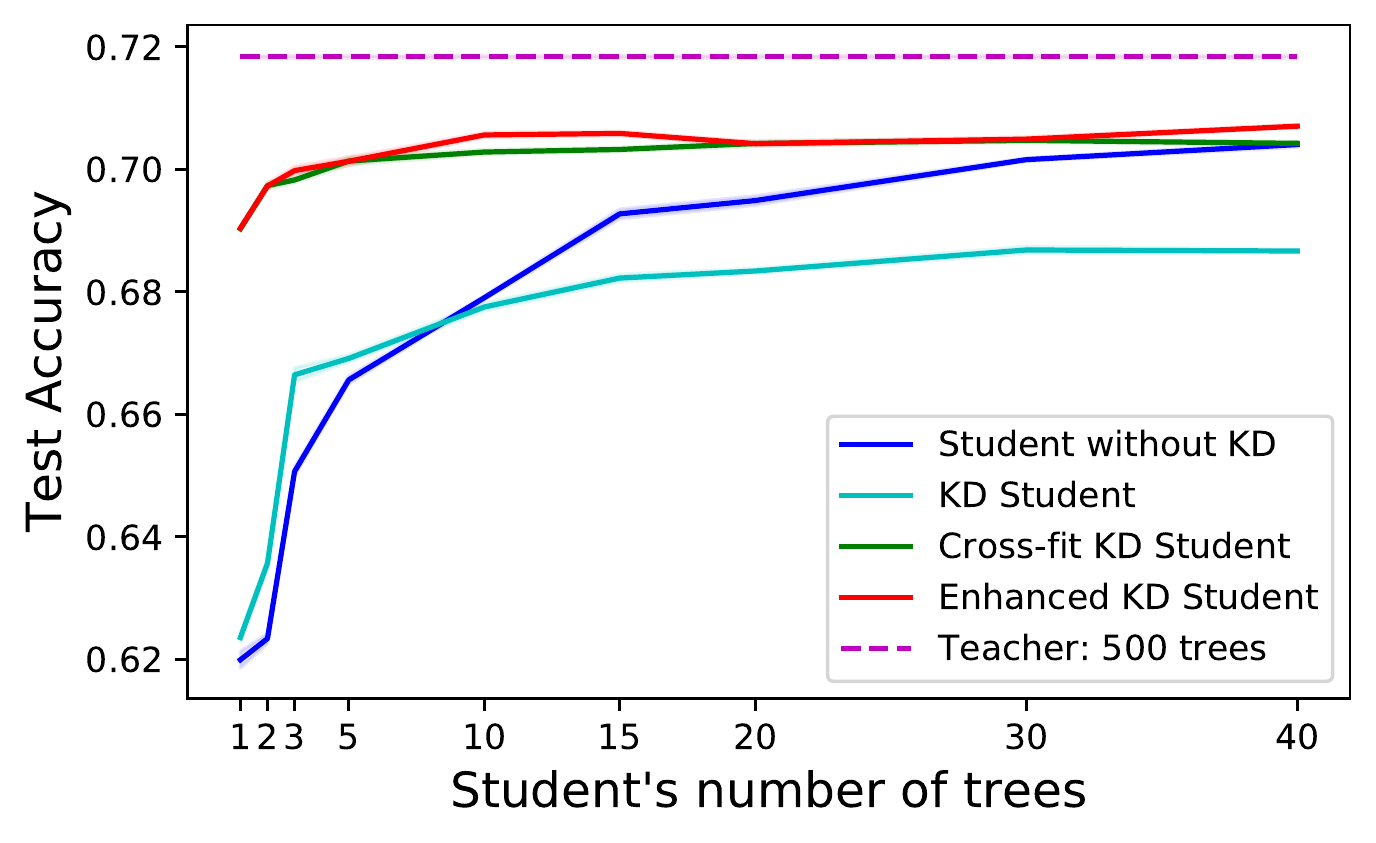}
    \caption{Higgs dataset}
  \end{subfigure}\hfill%
  \begin{subfigure}{0.33\linewidth}
    \includegraphics[width=\textwidth]{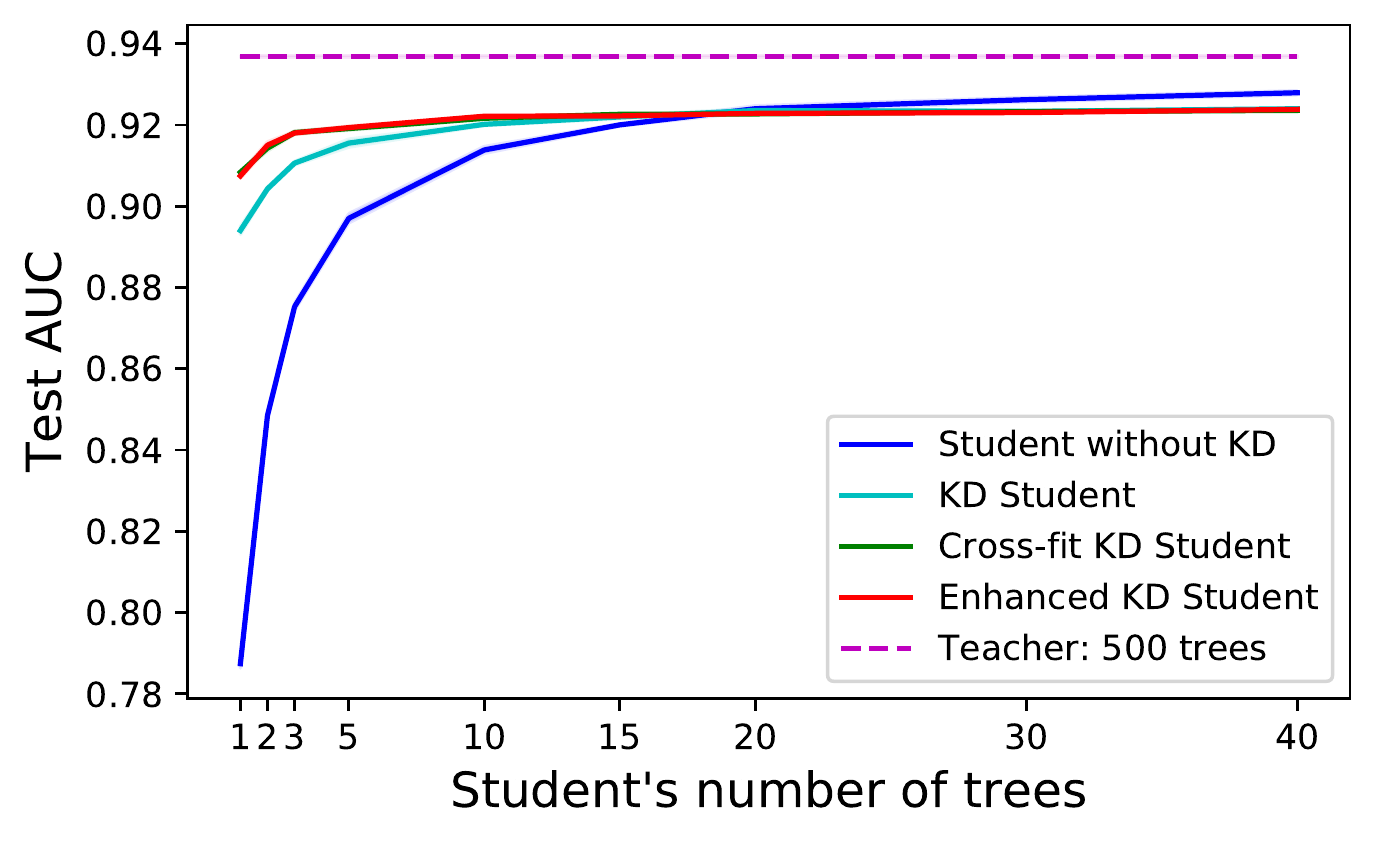}
    \caption{MAGIC dataset}
  \end{subfigure}%
  \begin{subfigure}{0.33\linewidth}
    \includegraphics[width=\textwidth]{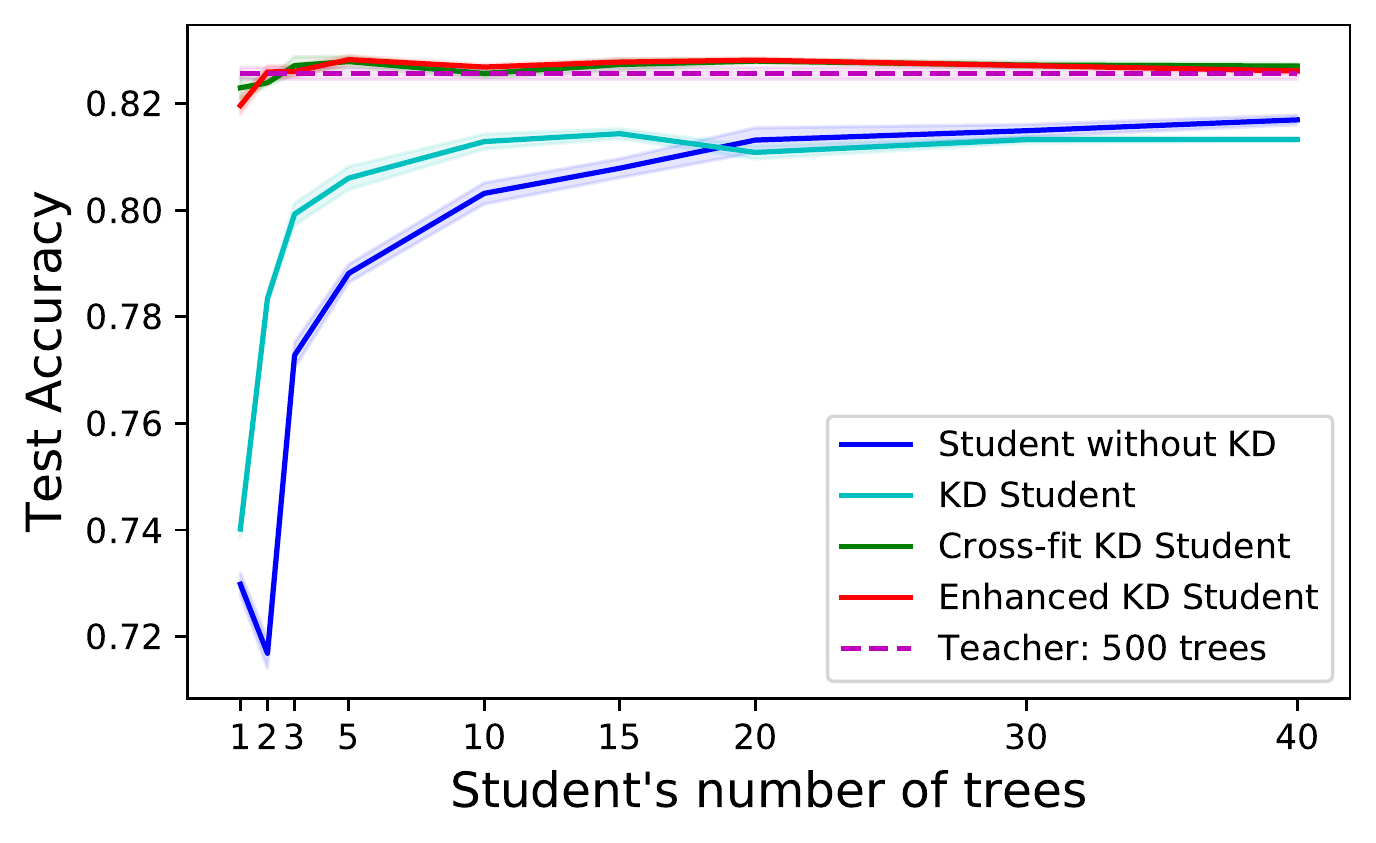}
    \caption{StumbleUpon dataset}
\end{subfigure}%
\caption{\label{fig:tabular_crossfit_app}Tabular random forest distillation with varying student complexity.}
\end{figure}

\begin{figure}[ht]
  \centering
  \begin{subfigure}{0.33\linewidth}
    \includegraphics[width=\textwidth]{figures/varyteacher_maxdepth_adult_scale_logp_crossfit.pdf}
    \caption{Adult dataset}
  \end{subfigure}%
  \begin{subfigure}{0.33\linewidth}
    \includegraphics[width=\textwidth]{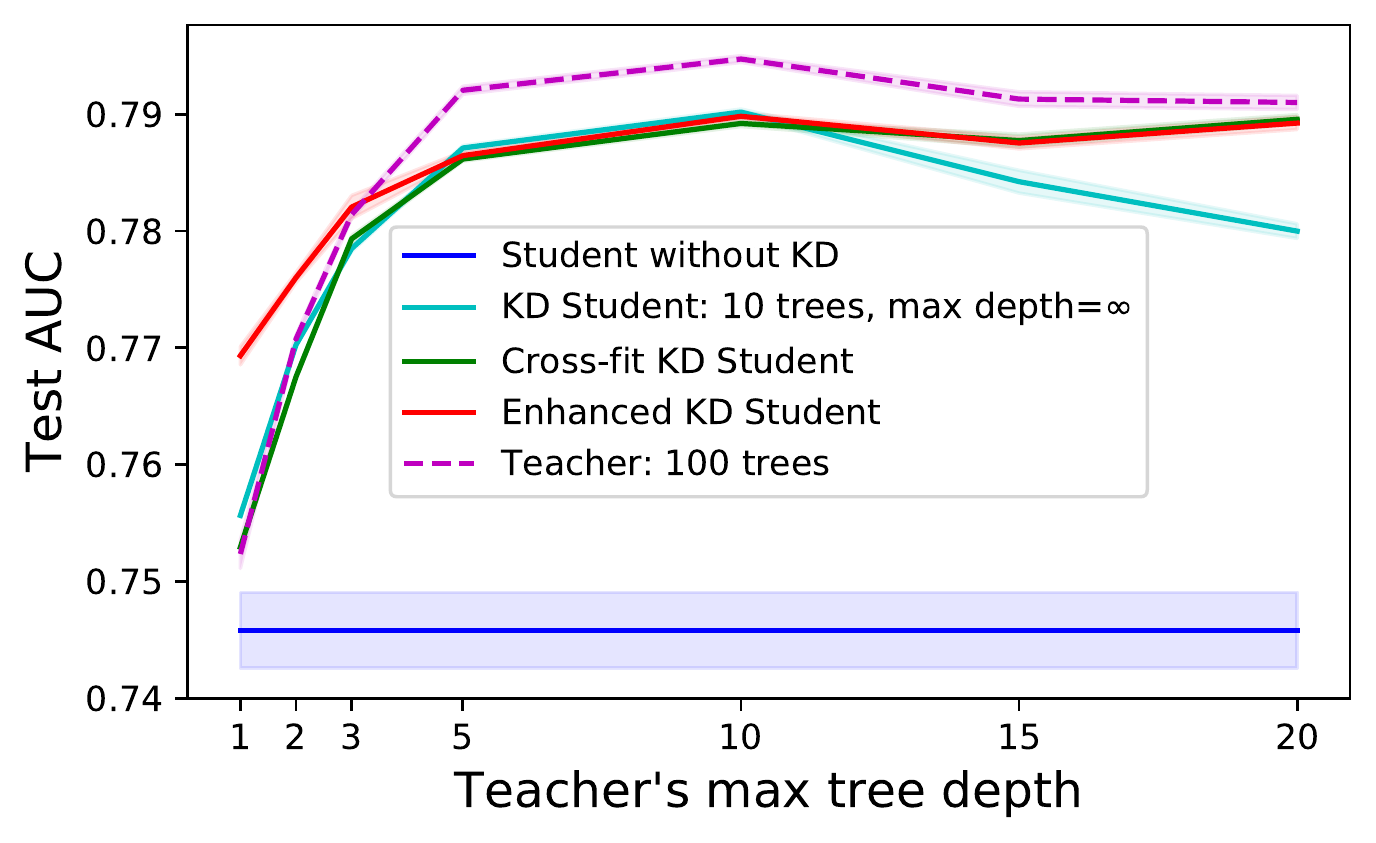}
    \caption{FICO dataset}
  \end{subfigure}%
  \begin{subfigure}{0.33\linewidth}
    \includegraphics[width=\textwidth]{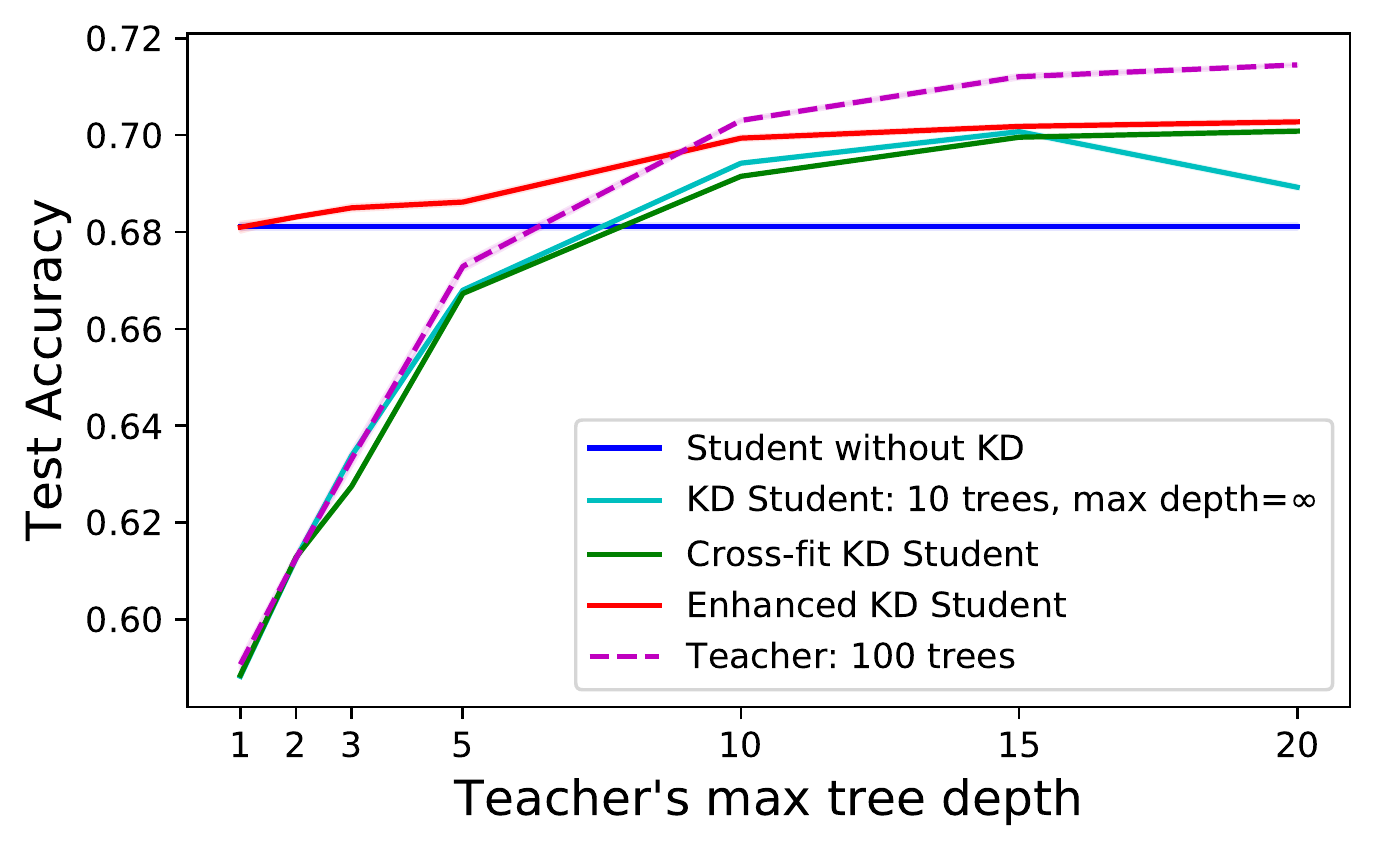}
    \caption{Higgs dataset}
  \end{subfigure}\hfill%
  \begin{subfigure}{0.33\linewidth}
    \includegraphics[width=\textwidth]{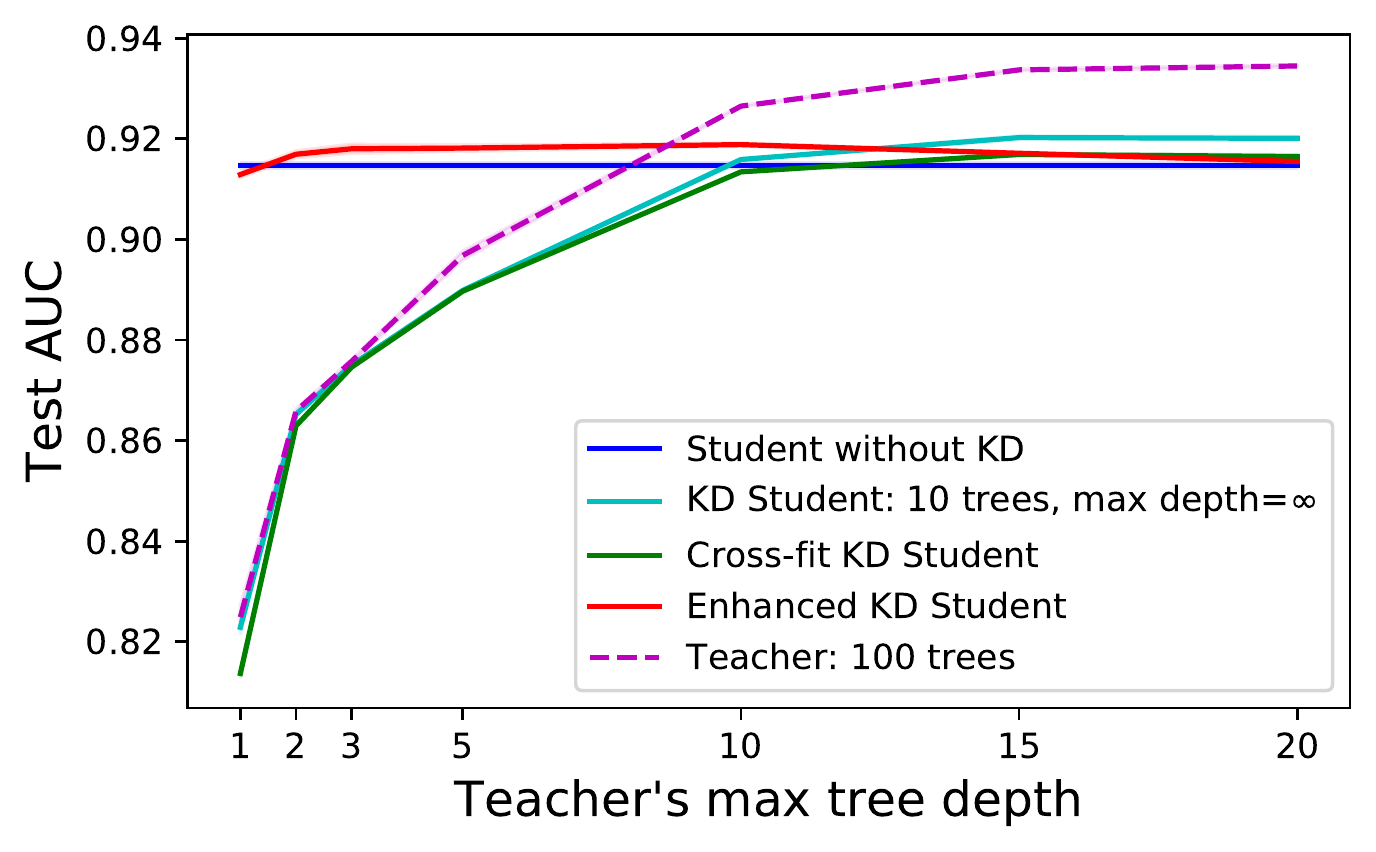}
    \caption{MAGIC dataset}
  \end{subfigure}%
  \begin{subfigure}{0.33\linewidth}
    \includegraphics[width=\textwidth]{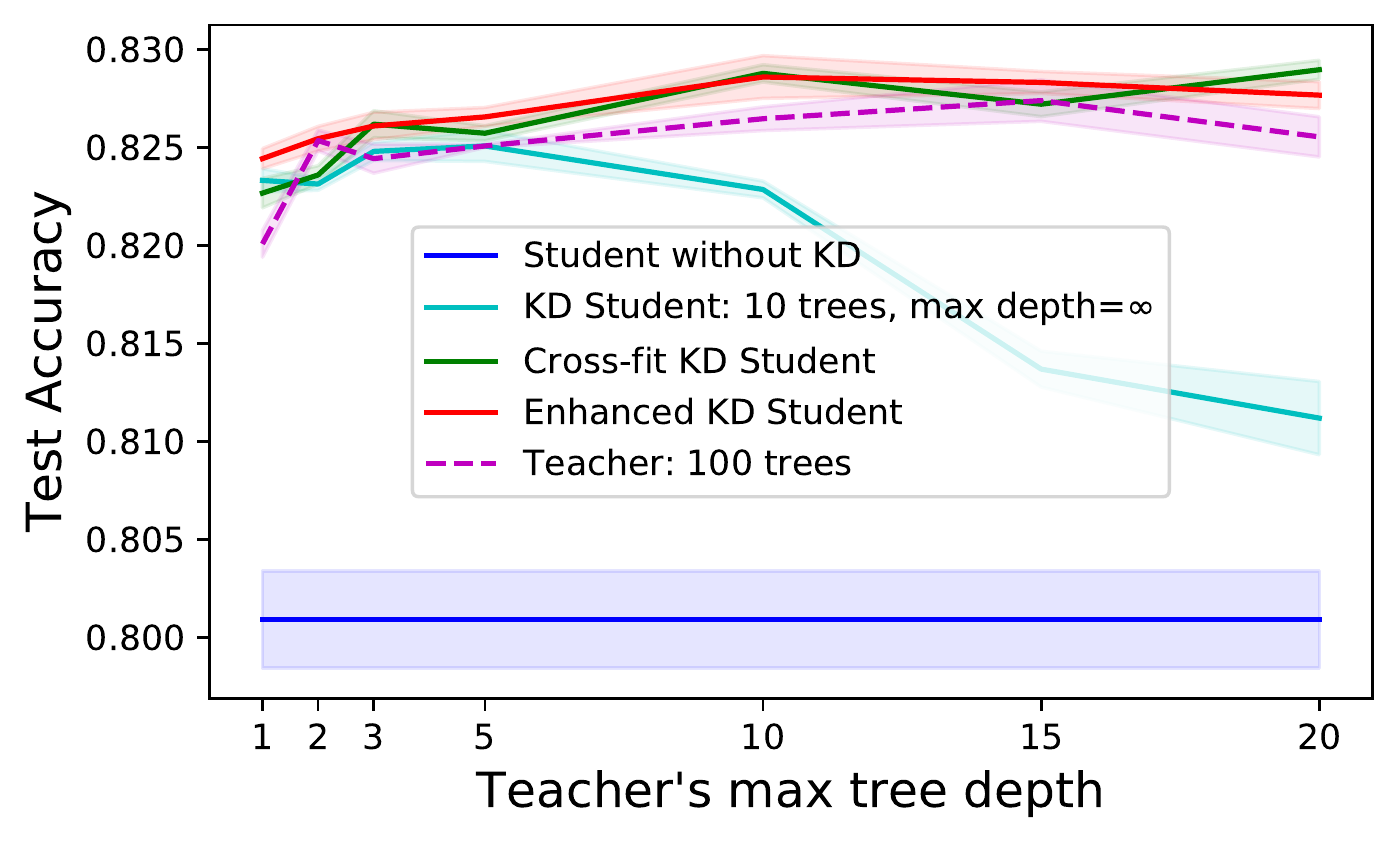}
    \caption{StumbleUpon dataset}
  \end{subfigure}%
  \caption{\label{fig:tabular_vary_teacher_app}Tabular random forest distillation with varying teacher complexity.}
\end{figure}

\subsection{Image data (CIFAR-10)}
\label{sec:cifar10-details}

We use SGD with initial learning rate 0.1, momentum 0.9, and batch size 128 to
train for 200 epochs.
We use the standard learning rate decay schedule, where the learning rate is
divided by 5 at epoch 60, 120, and 160.
For loss correction, we select the value of the hyperparameter $\alpha$ that yields the highest accuracy on a held-out validation set. 
For cross-fitting, we use 10 folds.

\opt{arxiv}{\newpage\section{Extensions}

\subsection{Refined Vanilla KD analysis}
Here, we present a refined fast-rate analysis for \vanilla, demonstrating that the student has small error whenever the teacher's \emph{training set} probabilities are accurate and the complexity of \emph{noiseless} student regression is not too large.

In preparation, we define the derivative shorthand
\begin{talign}
\tildeq_{f,p}(z) 
    = \grad_{\phi\pi} \ell(z; f(x), p(x)) \qtext{and} 
\tildegamma_{f,p}(z) 
    = \E_{U,V\distind\Unif([0,1])}[\tildeq_{Vf +(1-V)f_0,U p + (1-U) p_0}(z)].
\end{talign}
and, for any vector-valued function $f$ of $z$, the empirical norm
\balignt
\empnorm{f}^2 = \frac{1}{n}\sum_{i=1}^n \twonorm{f(z_i)}^2.
\ealignt
\begin{theorem}[Refined Vanilla KD analysis]
Suppose $f_0$ belongs to a convex set $\fset$ satisfying the $\ell_2/\ell_4$ ratio condition
$
\sup_{f\in {\cal F}} {\|f-f_0\|_{2,4}}{/\|f-f_0\|_{2,2}} \leq C
$ 
and that the teacher estimates $\hat{p} \in {\cal P}$ from the same dataset used to train the student. Let $\delta_{n, \zeta}=\delta_n + c_0 \sqrt{\frac{\log(c_1/\zeta)}{n}}$ for universal constants $c_0, c_1$ and $\delta_n$ an upper bound on the critical radius of the function class
\begin{talign}
    {\cal G} \defeq \{z \to r\, \left(\ell(z; f(x), p_0(x)) - \ell(z; f_0(x), p_0(x))\right): f\in {\cal F},\ r\in [0, 1]\}.
\end{talign}
Let $\mu(z) = \sup_{\phi} \left\|\nabla_{\phi} \ell(z; \phi, p_0(x))\right\|_2$, and assume that the loss $\ell(z; \phi, p_0(x))$ is $\sigma$-strongly convex in $\phi$ for each $z$ and that each $g\in \gset$ is bounded in $[-H, H]$. 
Then the \vanilla $\hat{f}$  satisfies 
\balignt
\twotwonorm{\hatf - f_0}^2
    =
\frac{2}{\sig^2}\empnorm{\tildegamma_{\hatf,\hatp}^\top (p_0 - \hatp)}^2
    +
O\left(\frac{1}{\sig^2}\delta_{n,\zeta}^2\, C^2H^2\, \|\mu\|_{4}^2\right)
        \qtext{with probability at least}
        1-\zeta.
\ealignt
\end{theorem}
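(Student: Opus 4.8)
The plan is to mirror the vanilla analysis of \Cref{thm:no_data_split_ERM} but with two changes that drive the refinement: localize the empirical process at the \emph{true} nuisance $p_0$ rather than over the teacher class $\pset$, and isolate the teacher bias through an \emph{exact} mixed second-order expansion rather than the first-order one behind \Cref{lem:general}. The key insight is a pointwise bilinear identity. For any $f$, I apply the fundamental theorem of calculus in the $f$- and $p$-directions to the mixed difference
\[
[\ell(z; f(x), \hatp(x)) - \ell(z; f(x), p_0(x))] - [\ell(z; f_0(x), \hatp(x)) - \ell(z; f_0(x), p_0(x))],
\]
and recognize the resulting double integral of $\grad_{\phi\pi}\ell$ over the segments $f_0\!\to\! f$ and $p_0\!\to\!\hatp$ as exactly the definition of $\tildegamma_{f,\hatp}$. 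This gives the \emph{exact} decomposition
\[
\ell(z; f(x), \hatp(x)) - \ell(z; f_0(x), \hatp(x)) = \ell(z; f(x), p_0(x)) - \ell(z; f_0(x), p_0(x)) + (\hatp(x) - p_0(x))^\top \tildegamma_{f,\hatp}(z)\,(f(x) - f_0(x)).
\]
Averaging at $f=\hatf$ and invoking the optimality $\loss_n(\hatf,\hatp)\le\loss_n(f_0,\hatp)$, the empirical excess risk evaluated at $p_0$, namely $\hat E \defeq \E_n[\ell(Z;\hatf(X),p_0(X)) - \ell(Z;f_0(X),p_0(X))]$, is bounded by the negated empirical correction; a pointwise Cauchy--Schwarz followed by Cauchy--Schwarz over the empirical measure yields $\hat E \le \empnorm{\tildegamma_{\hatf,\hatp}^\top (p_0 - \hatp)}\cdot\empnorm{\hatf - f_0}$, the first appearance of the empirical bias term in the statement.

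Next I would pass from $\hat E$ to its population counterpart $E \defeq \loss_D(\hatf,p_0) - \loss_D(f_0,p_0)$ using \citet[Lem.~11]{foster2019orthogonal} applied to the class $\gset$. Crucially, this $\gset$ fixes the nuisance at $p_0$ and ranges only over $f\in\fset$, so its critical radius $\delta_n$ reflects \emph{noiseless student regression} and carries no dependence on the complexity of $\pset$. This gives $E \le \hat E + O(H\delta_{n,\zeta}\|\ell_{\hatf,p_0} - \ell_{f_0,p_0}\|_{2,2} + H\delta_{n,\zeta}^2)$, and bounding $\|\ell_{\hatf,p_0} - \ell_{f_0,p_0}\|_{2,2}\le\|\mu\|_4\|\hatf-f_0\|_{2,4}\le C\|\mu\|_4\twotwonorm{\hatf-f_0}$ via the mean value theorem and the $\ell_2/\ell_4$ ratio condition converts the error term into the claimed variance contribution. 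On the other side, $\sig$-strong convexity of $\loss_D(\cdot,p_0)$ together with first-order optimality of $f_0$ over the convex set $\fset$ gives $E\ge\tfrac{\sig}{2}\twotwonorm{\hatf-f_0}^2$. Chaining these facts reduces the theorem to $\tfrac{\sig}{2}\twotwonorm{\hatf-f_0}^2 \le \empnorm{\tildegamma_{\hatf,\hatp}^\top(p_0-\hatp)}\,\empnorm{\hatf-f_0} + O(H\delta_{n,\zeta} C\|\mu\|_4\twotwonorm{\hatf-f_0} + H\delta_{n,\zeta}^2)$.

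The main obstacle is the remaining mismatch between the empirical norm $\empnorm{\hatf-f_0}$ produced by Cauchy--Schwarz and the population norm $\twotwonorm{\hatf-f_0}$ on the left. I would close it with a localized uniform law of large numbers (e.g., \citet[Thm.~14.1]{wainwright_2019}) for the star-shaped student-difference class $\{r(f-f_0): f\in\fset,\ r\in[0,1]\}$, whose critical radius is controlled by $\delta_n$ through the Lipschitzness of the loss in $\phi$ (the same gradient bound defining $\mu$), yielding with high probability and uniformly in $f$ an equivalence $\empnorm{f-f_0}^2 \le 2\twotwonorm{f-f_0}^2 + O(\delta_{n,\zeta}^2)$. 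Substituting $f=\hatf$, applying the arithmetic--geometric-mean inequality to both bilinear products (tuning weights so the empirical bias retains the coefficient $2/\sig^2$), and absorbing the residual $\tfrac{\sig}{4}\twotwonorm{\hatf-f_0}^2$ into the left-hand side yields the stated bound. The delicate points to verify are that the additive $\delta_{n,\zeta}$ slack from the norm equivalence enters only at order $\delta_{n,\zeta}^2\,C^2H^2\|\mu\|_4^2$, and that the several invocations of \citet[Lem.~11]{foster2019orthogonal} and the uniform law hold simultaneously on one event of probability $1-\zeta$ (by splitting the failure budget and folding constants into $\delta_{n,\zeta}$).
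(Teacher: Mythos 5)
Your architecture matches the paper's own proof for three of its four moves: the exact bilinear identity obtained from the double Taylor expansion (the paper's ``Taylor's theorem with integral remainder twice''), the localization of the empirical process at the true nuisance $p_0$ via \citet[Lem.~11]{foster2019orthogonal} applied to $\gset$, and the population strong-convexity lower bound $\loss_D(\hatf,p_0)-\loss_D(f_0,p_0)\ge\frac{\sig}{2}\twotwonorm{\hatf-f_0}^2$. The genuine gap is in how you handle the empirical factor $\empnorm{\hatf-f_0}$. You invoke only plain optimality, $\loss_n(\hatf,\hatp)\le\loss_n(f_0,\hatp)$, which leaves the product $\empnorm{\tildegamma_{\hatf,\hatp}^\top(p_0-\hatp)}\,\empnorm{\hatf-f_0}$ with an empirical norm, and you then propose a localized uniform law (\citet[Thm.~14.1]{wainwright_2019}) for the class $\{r(f-f_0):f\in\fset,\ r\in[0,1]\}$ to convert it into the population norm. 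But the theorem's hypotheses only bound the critical radius of the \emph{loss-difference} class $\gset$, and your claim that this controls the critical radius of the student-difference class ``through Lipschitzness'' runs in the wrong direction: Lipschitz contraction bounds the complexity of $\gset$ by that of $\{r(f-f_0)\}$, not the reverse, and strong convexity does not yield a pointwise lower bound $\|f(x)-f_0(x)\|_2\lesssim|\ell(z;f(x),p_0(x))-\ell(z;f_0(x),p_0(x))|$ (the gradient term at $f_0$ can be negative), so no comparison in the needed direction follows from the stated assumptions. Even granting the norm equivalence, the factor of $2$ and the AM--GM weights would inflate the leading coefficient on the empirical bias term beyond the $\frac{2}{\sig^2}$ that the theorem states explicitly outside the $O(\cdot)$.

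The fix---and the paper's actual move---is to not discard the strong-convexity quadratic on the empirical side. Since $\ell(z;\phi,\pi)$ is $\sig$-strongly convex in $\phi$ pointwise and $\hatf$ minimizes $\loss_n(\cdot,\hatp)$ over the convex set $\fset$, first-order optimality gives $\loss_n(\hatf,\hatp)-\loss_n(f_0,\hatp)\le-\frac{\sig}{2}\empnorm{\hatf-f_0}^2$. Combining this with your bilinear identity and Cauchy--Schwarz yields
$\loss_n(\hatf,p_0)-\loss_n(f_0,p_0)\le-\frac{\sig}{2}\empnorm{\hatf-f_0}^2+\empnorm{\hatf-f_0}\,\empnorm{\tildegamma_{\hatf,\hatp}^\top(p_0-\hatp)}\le\max_{a\ge 0}\left\{-\frac{\sig}{2}a^2+a\,\empnorm{\tildegamma_{\hatf,\hatp}^\top(p_0-\hatp)}\right\}=\frac{1}{2\sig}\empnorm{\tildegamma_{\hatf,\hatp}^\top(p_0-\hatp)}^2$,
so the empirical norm of $\hatf-f_0$ is absorbed \emph{before} any empirical-to-population comparison is ever needed, no critical-radius bound for $\{r(f-f_0)\}$ is required, and the rest of your argument (localization via $\gset$, the $\ell_2/\ell_4$ condition, AM--GM, and population strong convexity) then goes through verbatim and produces exactly the advertised $\frac{2}{\sig^2}$ constant.
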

\begin{proof}
For each $f$ and $p$, introduce the shorthand $\ell_{f, p}(z) = \ell(z; f(x), p(x)),$
\balignt
\Delta_n(p) &= \loss_n(\hatf, p) - \loss_n(f_0, p),
\qtext{and} 
\Delta_D(p) = \loss_D(\hatf, p) - \loss_D(f_0, p).
\ealignt
Our proof consists of three steps.
First, we will first argue that the excess risk $\Delta_D(p_0)$ is bounded by the excess empirical risk $\Delta_n(p_0)$ up to  function class complexity parameters based on the critical radius  $\delta_{n}$.
Next we will show that the excess empirical risk is bounded by the teacher's training probability error.
Finally, we upper bound the student error $\twotwonorm{\hatf - f_0}$ in terms of the excess risk $\Delta_D(p_0)$ using strong convexity.

\paragraph{Upper bounding the excess risk $\Delta_D(p_0)$}
Since $\delta_n$ upper bounds the critical radius of the function class ${\cal G}$, the localized Rademacher analysis of  \citet[Lem.~11]{foster2019orthogonal} implies\footnote{We apply \citet[Lem.~11]{foster2019orthogonal} with ${\cal L}_g = g$ for $g\in {\cal G}$ with $g^*=0$. Then we instantiate the concentration inequality for the choice $g=\ell_{\hat{f}, p_0} - \ell_{f_0, p_0} \in {\cal G}$.}
\begin{talign}
\left|\Delta_n(p_0) - \Delta_D(p_0)\right| 
    \leq O\left(H\delta_{n,\zeta} \|\ell_{\hat{f}, p_0} - \ell_{f_0, p_0}\|_{2,2} + H\delta_{n,\zeta}^2\right)
\end{talign}
with probability at least $1-\zeta$.
Moreover, by Cauchy-Scwharz,
\begin{align}
    \|\ell_{\hatf, p_0} - \ell_{f_0, p_0}\|_{2,2} \leq~& \|\mu\|_{4}\, \|\hatf-f_0\|_{2,4}.
\end{align}
Hence, by the assumed $\ell_2/\ell_4$ ratio condition and the arithmetic-geometric mean inequality, we have
\begin{talign}
\Delta_D(p_0) 
    = 
\Delta_n(p_0)
    +
O\left(\delta_{n,\zeta}\, CH\, \|\mu\|_{4}\, \|\hatf-f_0\|_{2,2} + H\delta_{n,\zeta}^2\right)
    = 
\Delta_n(p_0)
    +
\frac{\sig}{4}\|\hatf-f_0\|_{2,2}^2
    +
O\left(\frac{1}{\sig}\delta_{n,\zeta}^2\, C^2H^2\, \|\mu\|_{4}^2\right)
\end{talign}
with probability at least $1-\zeta$.

\paragraph{Upper bounding the excess empirical risk $\Delta_n(p_0)$}
Moreover, we may invoke Taylor's theorem with integral remainder twice, the Cauchy-Schwarz inequality once, and the $\sig$-strong convexity of $\ell(z; \phi, \pi)$ in $\phi$ coupled with the optimality of $\hatf$ for $L_n(f, \hatp)$ over $\fset$ to conclude
\balignt
\Delta_n(p_0) 
    &= 
\Delta_n(\hatp) + \E_n[(p_0 - \hatp)^\top \tildegamma_{\hatf,\hatp}(\hatf - f_0) ] \\
    &\leq -\frac{\sig}{2}\empnorm{\hatf - f_0}^2 
+ \empnorm{\hatf - f_0} \empnorm{\tildegamma_{\hatf,\hatp}^\top (p_0 - \hatp)} \\
    &\leq \max_{a\geq 0} -\frac{\sig}{2}a^2 
+ a \empnorm{\tildegamma_{\hatf,\hatp}^\top (p_0 - \hatp)}
    = \frac{1}{2\sig}\empnorm{\tildegamma_{\hatf,\hatp}^\top (p_0 - \hatp)}^2.
\ealignt

\paragraph{Upper bounding the student error $\twotwonorm{\hatf - f_0}$}
The $\sig$-strong convexity of $\ell(z; \phi, \pi)$ in $\phi$ coupled with the optimality of $f_0$ for $L_D(f, p_0)$ over $\fset$ implies
\balignt
\frac{\sig}{2}\twotwonorm{\hatf - f_0}^2
    \leq \Delta_D(p_0).
\ealignt
Therefore, our combined results yield
\balignt
\frac{\sig}{4}\twotwonorm{\hatf - f_0}^2
    =
\frac{1}{2\sig}\empnorm{\tildegamma_{\hatf,\hatp}^\top (p_0 - \hatp)}^2
    +
O\left(\frac{1}{\sig}\delta_{n,\zeta}^2\, C^2H^2\, \|\mu\|_{4}^2\right)
\ealignt
with probability at least $1-\zeta$, as advertised.
\end{proof}
}

\end{adjustwidth}

\end{document}